
\documentclass{article}

\usepackage{microtype}
\usepackage{graphicx}
\usepackage{subcaption}
\usepackage{booktabs} 

\usepackage{hyperref}




\usepackage[accepted]{icml2026}

\usepackage{amsmath}
\allowdisplaybreaks
\usepackage{amssymb}
\usepackage{mathtools}
\usepackage{amsthm}
\usepackage{algorithmic}
\usepackage{thmtools}

\usepackage{graphicx}
\usepackage{subcaption}

\usepackage[capitalize,noabbrev]{cleveref}

\theoremstyle{plain}
\newtheorem{theorem}{Theorem}[section]
\newtheorem{proposition}[theorem]{Proposition}
\newtheorem{lemma}[theorem]{Lemma}

\theoremstyle{definition}

\newtheorem{assumption}[theorem]{Assumption}
\theoremstyle{remark}

\usepackage[textsize=tiny]{todonotes}

\icmltitlerunning{Boosting CVaR Policy Optimization with Quantile Gradients}

\begin{document}

\twocolumn[
  \icmltitle{Boosting CVaR Policy Optimization with Quantile Gradients}









  \icmlsetsymbol{equal}{*}

  \begin{icmlauthorlist}
    \icmlauthor{Yudong Luo}{hec,mila}
    \icmlauthor{Erick Delage}{hec,mila}
  \end{icmlauthorlist}

  \icmlaffiliation{hec}{GERAD \& Department of Decision Sciences, HEC Montreal, Canada}
  \icmlaffiliation{mila}{Mila - Quebec AI Institute, Canada}

  \icmlcorrespondingauthor{Yudong Luo}{yudong.luo@hec.ca}
  \icmlcorrespondingauthor{Erick Delage}{erick.delage@hec.ca}

  \icmlkeywords{CVaR, quantile}

  \vskip 0.3in
]



\printAffiliationsAndNotice{}  

\begin{abstract}
  Optimizing Conditional Value-at-risk (CVaR) using policy gradient (a.k.a CVaR-PG) faces significant challenges of sample inefficiency. This inefficiency stems from the fact that it focuses on tail-end performance and overlooks many sampled trajectories. We address this problem by augmenting CVaR with an expected quantile term. Quantile optimization admits a dynamic programming formulation that leverages all sampled data, thus improves sample efficiency. This does not alter the CVaR objective since CVaR corresponds to the expectation of quantile over the tail. Empirical results in domains with verifiable risk-averse behavior show that our algorithm within the Markovian policy class substantially improves upon CVaR-PG and consistently outperforms other existing methods.
\end{abstract}
\section{Introduction}
Risk avoidance is a crucial and practical consideration in sequential decision-making which inspires risk-averse reinforcement learning (RARL). Conventional risk-neutral RL~\cite{sutton1998reinforcement} focuses on maximizing the total return's expectation; in contrast, RARL involves optimizing some risk metrics of the return variable. Commonly used risk metrics include variance~\cite{tamar2012policy}, Gini deviation~\cite{luo2023alternative}, Value-at-risk (VaR)~\cite{hau2025q}, conditional VaR (CVaR)~\cite{tamar2015optimizing}, entropic risk measure~\cite{hau2023entropic}, entropic VaR~\cite{su2025risk}. In this work, we focus on (static) CVaR. Intuitively, CVaR emphasizes the worst case value of a policy's return, i.e., the expected value under a specified quantile level $\alpha$ (also called risk level). Thus, optimizing CVaR ensures the prevention of catastrophic outcomes of a policy.

Optimizing static CVaR in RL is a non-trivial task, since it fails to decompose into per-step CVaR optimization in policy improvement~\cite{hau2023dynamic}. Therefore, dynamic programming as in risk-neutral RL cannot be applied directly. A common and practical approach is trajectory-based CVaR policy gradient (PG) using Markovian policies~\cite{tamar2015optimizing,greenberg2022efficient,luo2024simple}. CVaR-PG samples a batch of $N$ trajectories and updates the policy with $\alpha N$ of them with worst returns. This results in sample inefficiency due to 1) $1-\alpha$ portion of sampled data is discarded; 2) good-performing trajectories cannot benefit policy learning, termed  blindness to success by~\citet{greenberg2022efficient}. In cases where return distribution is discrete, CVaR-PG may even suffer from vanishing gradients~\cite{greenberg2022efficient}. From its variational representation~\cite{rockafellar2000optimization}, optimizing CVaR is a bi-level problem, where the inner policy learning reduces to a risk-neutral case~\cite{bauerle2011markov}. Based on this idea, several recent works aim to improve sample efficiency by assigning per-step rewards such that the expected total return of the trajectory aligns with the CVaR objective. In consequence, all trajectories are used for policy learning in a risk-neutral style, thus improving sample efficiency.


This work aims to improve sample efficiency and accelerate policy learning of CVaR-PG from another perspective. We propose augmenting it with quantile PG of tail returns. This augmentation is reasonable and intuitive, since CVaR can be regarded as the expectation of tail quantiles. The benefits of this approach are twofold. 1) Different from static CVaR, optimizing static quantile enjoys a nested form where dynamic programming is available~\cite{hau2025q}. Thus all data can be used for policy learning, which improves sample efficiency. 2) Good-performing trajectories are captured by the quantile PG component, which in turn overcomes the blindness to success of CVaR-PG.

We first derive a new Bellman operator for static VaR (quantile), motivated by the framework of~\citet{hau2025q}, from which an actor-critic algorithm with stochastic policy can be readily obtained. To operate within the Markovian policy class, we then show the optimal VaR policy can be recovered from a quantile value function associated with a Markovian policy by tracking cumulative rewards, motivating a modified actor-critic algorithm for learning Markovian VaR policy. Finally, we incorporate VaR-PG with CVaR-PG to form the final algorithm. Empirical results in domains with verifiable risk-averse behavior demonstrate that our method significantly improves the sample efficiency of CVaR-PG, and can learn risk-averse policy with high return and low risk in terms of CVaR when others may fail.

\section{Background}
\subsection{Quantile, VaR and CVaR}
For a real-valued random variable $\tilde{x}$ (random variables are marked with a tilde, e.g. $\tilde{x}$, in this paper), its quantile at level $\alpha\in[0,1]$ is any $x\in \mathbb{R}$ such that $P[\tilde{x}\leq x]\geq \alpha$ and $P[\tilde{x}\geq x]\geq 1-\alpha$. It may not be unique and lies in the interval $[q^-_\alpha(\tilde{x}), q^+_\alpha(\tilde{x})]$ where $q^-_\alpha(\tilde{x}):=\min\{x\in\mathbb{R}|P[\tilde{x}\leq x]\geq \alpha\}$ and $q^+_\alpha(\tilde{x}):=\max\{x\in\mathbb{R}|P[\tilde{x}< x]\le \alpha\}$.

Given a risk-level $\alpha\in[0,1]$, $\alpha$-VaR is defined as the largest $1-\alpha$ confidence lower bound on the value of $\tilde{x}$, i.e.,
\begin{equation*}
    \mathrm{VaR}_\alpha[\tilde{x}]:=q^+_\alpha(\tilde{x})
\end{equation*}
For ease of presentation, we consider $\tilde{x}$ is continuous and $q^-_\alpha(\tilde{x})=q^+_\alpha(\tilde{x})$ in this paper. In this case, VaR is equivalent to quantile.

The left tail CVaR at risk level $\alpha$ is defined as 
\begin{equation}
\label{eq:cvar-def-1}
    \mathrm{CVaR}_\alpha[\tilde{x}]:=\frac{1}{\alpha}\int_0^\alpha \mathrm{VaR}_\beta[\tilde{x}] d\beta~.
\end{equation}
Thus, $\mathrm{CVaR}_\alpha[\tilde{x}]$ can be interpreted as the expected value of the bottom $\alpha$ portion of tail quantiles, i.e., $\mathrm{CVaR}_\alpha[\tilde{x}]=\mathbb{E}[\tilde{x}|\tilde{x}\leq \mathrm{VaR}_\alpha[\tilde{x}]]$. CVaR has a variational representation as~\cite{rockafellar2000optimization}
\begin{equation}
\label{eq:cvar-def-2}
    \mathrm{CVaR}_\alpha[\tilde{x}]:=\max_{y\in\mathbb{R}} y -\frac{1}{\alpha}\mathbb{E}[(y-\tilde{x})^+],
\end{equation}
where $(x)^+:=\max(x,0)$. The maximum is always attained at $y^*=\mathrm{VaR}_\alpha[\tilde{x}]$ as a by product.
\subsection{CVaR Optimization in RL}
In standard RL settings, the agent-environment interaction is modeled as a Markov decision process (MDP), represented as a tuple $\langle \mathcal{S},\mathcal{A},R,P,\mu_0,\gamma\rangle$. $\mathcal{S}$ and $\mathcal{A}$ denote state and action spaces. $R(s,a,\tilde{\varepsilon})$ is the state and action dependent reward function with randomness $\tilde{\varepsilon}$. $P(\cdot|s,a)$ is the transition function. $\mu_0$ is the initial state distribution and $\gamma\in(0,1]$ is the discount factor. For notational convenience, we use $\tilde{r}(s,a)$ to denote the random reward variable associated with the pair $(s,a)$, and $r(s,a)$ to denote a realized reward. We may omit the arguments and simply use $\tilde{r}$ and $r$ when the context is clear. Agent collects trajectories, i.e., a sequence of state $\tilde{s}_t$, action $\tilde{a}_t$, reward $\tilde{r}_t$, by executing its policy $\pi$. Define the future rerturn variable at time $t$ as $\tilde{g}_t:=\sum_{i=0}^{T-1-t}\gamma^i \tilde{r}(\tilde{s}_{t+i}, \tilde{a}_{t+i})$ (it is possible  that $T=\infty$), thus $\tilde{g}_0$ indicates the total return variable starting from the initial state. Let $\Pi_{\mathcal{HD}}$ be the set of history-dependent policies and $\Pi_{\mathcal{M}}$ be the set of Markovian policies.

When considering static CVaR at risk level $\alpha$, agent aims to find a policy that maximizes the following objective
\begin{equation*}
    \max_{\pi\in\Pi_{\mathcal{HD}}} \mathrm{CVaR}^{\pi}_\alpha[\tilde{g}_0].
\end{equation*}
Generally, the optimal $\alpha$-CVaR policy is history dependent~\cite{lim2022distributional}. Consider Eq.~\ref{eq:cvar-def-2} when $y$ is fixed, the policy optimization remains $\max_\pi-\mathbb{E}[(y-\tilde{g}_0)^+]$. A common approach is to formulate an augmented MDP with new state $(s,k)\in \mathcal{S}\times\mathbb{R}$, where $k$ is a moving variable keeping track of the rewards collected so far, i.e., $k_t:=\sum_{i=0}^{t-1}\gamma^i r_i$~\cite{bauerle2011markov}. In this augmented MDP, the per-step reward is $0$ except the terminal state that gives a reward $-(y-k_T)^+$. Therefore, the reward is sparse. 

In practice, we may seek to learn Markovian policy given its simplicity. Prior works focusing on Markovain CVaR policy have validated its practical performance~\cite{tamar2015optimizing,greenberg2022efficient,luo2024simple,kim2024risk,mead2025return}. In addition, \citet{lim2022distributional} showed that if a CVaR-optimal policy is Markovian in original MDP, the optimal history-dependent policy can be recovered from the quantile value function of the Markovian policy, indicating the inter-connection between Markovian and history-dependent policies. Consider $\bar{\pi}\in\Pi_{\mathcal{M}}$ parameterized by $\theta$. Under some mild assumptions, CVaR-PG can be derived by taking derivative of Eq.~\ref{eq:cvar-def-1} and is estimated by sampling $N$ trajectories $\{\tau_i\}_{i=1}^N$ using $\bar{\pi}_\theta$~\cite{tamar2015optimizing}, i.e.,
\begin{equation}
\label{eq:cvar-pg}
\begin{aligned}
    \nabla_\theta \mathrm{CVaR}_\alpha[\tilde{g}_0]\simeq \frac{1}{\alpha N} \sum_{i=1}^N\Big (&\mathbb{I}_{\{R(\tau_i)\leq \hat{q}_\alpha\}} (R(\tau_i)-\hat{q}_\alpha)\\ &\sum_{t=0}^{T-1} \nabla_\theta \log\bar{\pi}_\theta(a_{i,t}|s_{i,t})\Big ),
\end{aligned}
\end{equation}
where $R(\tau_i):=\sum_t\gamma^t r_{i,t}$ is the total return of trajectory $\tau_i$, $T$ is the trajectory length, and $\hat{q}_\alpha$ is the empirical $\alpha$-quantile estimated from $\{R(\tau_i)\}_{i=1}^N$. Similar Markovian PG can also be derived from Eq.~\ref{eq:cvar-def-2} with $y$ fixed to $\hat{q}_\alpha$.

\paragraph{Limitations of CVaR-PG.} The CVaR-PG estimator in Eq.~\ref{eq:cvar-pg} suffers from low sample efficiency. First, only $\alpha$ portion of sampled trajectories contributes to gradient estimation, while the remainder is discarded. Second, the contributing trajectories correspond exclusively to the worst-performing outcomes, which can significantly slow learning, a phenomenon termed \emph{blindness to success} by \citet{greenberg2022efficient}. In addition, the term $\mathbb{I}_{\{R(\tau_i)\leq \hat{q}_\alpha\}} (R(\tau_i)-\hat{q}_\alpha)$ can equal zero if the left tail of the return distribution is overly flat, in which case $R(\tau_i)=\hat{q}_\alpha$ for any $\tau_i$ satisfying $R(\tau_i)\leq \hat{q}_\alpha$. This gradient vanishing phenomenon is often overlooked when assuming continuous rewards.

Several approaches have been proposed to improve the sample efficiency of CVaR-PG. \citet{greenberg2022efficient} introduced a cross-entropy–based sampling strategy that biases data collection toward high-risk scenarios, but it requires partial or full control over environment stochasticity. \citet{luo2024simple} proposed a policy-mixing approach that combines a risk-neutral policy with an adjustable policy to construct a risk-averse policy, which is effective primarily when optimal risk-neutral and risk-averse actions coincide over most of the state space. Another line of work reformulates the trajectory-level return, i.e., $\mathbb{I}_{\{R(\tau_i)\leq \hat{q}_\alpha\}} R(\tau_i)$, into per-step rewards, allowing all trajectories to contribute to policy updates in a risk-neutral manner. For instance, \citet{kim2024risk} proposed Predictive CVaR-PG (PCVaR-PG), which learns a predictor $f(s,k)=\mathbb{P}(R(\tau)\leq \hat{q}_\alpha)$ and reweights rewards as $\hat{r}_t = f(s_t,k_t) r_t$. Similarly, \citet{mead2025return} proposed return capping (RET-CAP) which reformulated $\max_{\bar\pi} -\mathbb{E}[(q^*_\alpha-R(\tau))^+]$ (cf. Eq.~\ref{eq:cvar-def-2} when set y as the optimal $\alpha$-quantile $q^*_\alpha$) as $\max_{\bar\pi}\mathbb{E}[(R(\tau)-q^*_\alpha)^-]=\max_{\bar\pi}\mathbb{E}[\min(R(\tau),q^*_\alpha)-q^*_\alpha]$. $q^*_\alpha$ is a constant and can be omitted. It then decomposed $\min(R(\tau),q^*_\alpha)$ to per-step reward $\hat{r}_t = \min(k_t, q^*_\alpha) - \min(k_{t-1}, q^*_\alpha)$. All $k$ parameters here are cumulative reward tracking variables as in \citet{bauerle2011markov}.

\subsection{VaR Optimization in RL}
\label{sec:var-rl}
When considering static VaR at risk level $\alpha$, the objective to solve is
\begin{equation*}
    \max_{\pi\in\Pi_{\mathcal{HD}}}\mathrm{VaR}^{\pi}_\alpha[\tilde{g}_0].
\end{equation*}
The optimal $\alpha$-VaR policy is generally also history-dependent. However, if considering Markovian policy, a policy gradient for VaR analogous to Eq.~\ref{eq:cvar-pg} can be derived (see, e.g., \citet{jiang2022quantile}), in which only  $\alpha$-portion of sampled trajectories contributes to the gradient. 

Recently, \citet{li2022quantile} and \citet{hau2023dynamic} showed that the static VaR admits a dynamic decomposition for policy optimization, and the value function satisfies a Bellman-like equation. However, the equation in \citet{li2022quantile} is model-based where a constrained optimization problem involving the transition probability need to be solved. To address this limitation, \citet{hau2025q} proposed a nested VaR Bellman equation and developed a model-free $Q$-learning-style algorithm for value and policy learning.

Define the optimal $\alpha$-quantile value obtained by taking action $a$ at state $s$ as
\vspace{-0.05in}
\begin{equation*}
    q^*(s,\alpha,a):=\max_{\pi\in\Pi_{\mathcal{HD}}}\mathrm{VaR}^\pi_\alpha[\sum_{t=0}^{\infty}\tilde{r}(\tilde{s}_t,\tilde{a}_t)|\tilde{s}_0=s,\tilde{a}_0=a].
    \vspace{-0.05in}
\end{equation*}
It is also convenient to define the optimal state quantile value function (it holds that $v^*(s,\alpha)=\max_a q^*(s,\alpha,a)$)
\vspace{-0.05in}
\begin{equation*}
    v^*(s,\alpha):=\max_{\pi\in\Pi_{\mathcal{HD}}}\mathrm{VaR}_\alpha^\pi[\sum_{t=0}^{\infty}\tilde{r}(\tilde{s}_t,\tilde{a}_t)|\tilde{s}_0=s],
    \vspace{-0.05in}
\end{equation*}
and the optimal intermediate policy function $\hat{\pi}(s,\alpha)$ (we call $\hat{\pi}$ intermediate because $\hat{\pi}$ along with $v$ forms $\pi\in\Pi_{\mathcal{HD}}$ in \citet{hau2025q})
\vspace{-0.03in}
\begin{equation*}
    \hat{\pi}^*(s,\alpha):=\arg
    \max_a q^*(s,\alpha,a).
    \vspace{-0.03in}
\end{equation*}
The nested VaR Bellman optimality equation is\footnote{Original equation in \citet{hau2025q} considers deterministic reward $r(s,a)$. We consider random reward $\tilde{r}(s,a)$ and this does not alter the equation. See analysis in Appendix~\ref{sec:eq-random-reward}.}
\vspace{-0.05in}
\begin{equation}
\label{eq:qdp-nested}
    q(s,\alpha,a)=\mathrm{VaR}_\alpha[\tilde{r}(s,a) + \gamma \max_{a'} q(\tilde{s}',\tilde{u},a')]
    \vspace{-0.05in}
\end{equation}
with unique fixed point $q^*$ (see Appendix \ref{app:VaRinfHor}), where $\tilde{u}\sim U[0,1]$ is an independent random variable uniformly distributed on $[0,1]$, and $\mathrm{VaR}_\alpha$ is taken with respect to the joint distribution of $\tilde{r}$, $\tilde{s}'$ and $\tilde{u}$. Eq.~\ref{eq:qdp-nested} is not directly amenable to a $Q$-learning–style algorithm, since the VaR operator is generally unavailable in closed form. By exploiting the elicitability of quantile, i.e., $\alpha$-quantile minimizes the quantile regression loss $l_\alpha(\cdot)$ as
\begin{equation}
\label{eq:quantile_refression}
    \begin{aligned}
    q_\alpha(\tilde{x})&=\arg\min_{y\in\mathbb{R}}\mathbb{E}[l_\alpha(\tilde{x}-y)]\\
    l_\alpha(x-y) &:= (\alpha-\mathbb{I}\{x<y\})(x-y),
    \end{aligned}
\end{equation}
when quantiles are unique, Eq.~\ref{eq:qdp-nested} can therefore be equivalently expressed as
\begin{equation*}
    q(s,\alpha,a)=\arg\min _x \mathbb{E}\Big[l_\alpha\big(\tilde{r}+\gamma\max_{a'}q(\tilde{s}',\tilde{u},a')-x\big)\Big].
\end{equation*}
This leads to a gradient-based
$Q$-learning–style update rule, based on transition $(s,\alpha,a,r,s')$ with learning rate $\chi$, as
\begin{equation}
\label{eq:quantile-q-learning}
\begin{aligned}
q(s,\alpha,a)&\leftarrow q(s,\alpha,a)\\
&+\chi\cdot \mathbb{E}\Big[\partial l_\alpha\big(r+\gamma \max_{a'}q(s',\tilde{u},a')-q(s,\alpha,a)\big)\Big].
\end{aligned}
\end{equation}
In practice, \citet{hau2025q} discretized $\alpha$ in a range $[\epsilon,1-\epsilon]$ (with a small $\epsilon \in(0,\frac12)$) and used a soft version of $l_\alpha(\cdot)$, i.e., $l^\kappa_\alpha(\cdot)$ with a parameter $\kappa\in(0,1]$, to make sure $\partial l^\kappa_\alpha(\cdot)$ always exits and for convergence guarantees. The derivative of $l^\kappa_\alpha(\cdot)$ used in \citet{hau2025q} is
\begin{equation}
\label{eq:partial-soft}
    \partial l_\alpha^\kappa(\delta)=
 \left\{
\begin{alignedat}{2}
 &(1-\alpha)(\kappa \delta+\kappa^2-1) &&~\text{if}~\delta<-\kappa \\
&\frac{1-\alpha}{\kappa} \delta &&~\text{if}~\delta\in[-\kappa,0)\\
& \frac{\alpha}{\kappa} \delta &&~\text{if}~\delta\in[0,\kappa)\\
& \alpha(\kappa \delta - \kappa^2 + 1)&&~\text{if}~\delta\geq\kappa.
\end{alignedat}
\right.
\end{equation}
This method differs from risk-neutral quantile-based distributional RL~\cite{dabney2018distributional} in that it selects the optimal action separately for each quantile level in Eq.~\ref{eq:qdp-nested}, rather than optimizing the expectation over the return distribution. Alternatively, this method can be interpreted as learning the optimal value on an augmented MDP with new state $(s,\alpha)\in \mathcal{S}\times [0,1]$ and state-dependent risk-aversion. 

Note that $\hat{\pi}^*$ alone is insufficient to recover the optimal VaR policy, since we also need to determine the $\alpha$-parameter at each state. To implement the optimal (history-dependent) VaR policy, the execution procedure will track the cumulative reward so far and compare it with optimal quantile value to determine, at each state, the appropriate quantile level to select the corresponding optimal action (cf. Algo.~\ref{alg:static-var-exec}).

\begin{algorithm}[tb]
  \caption{VaR Actor-Critic}
  \label{alg:var-pi}
  \begin{algorithmic}
    \STATE {\bfseries Input:} quantile value function $v(s,\alpha)$, policy function $\hat{\pi}(\cdot|s,\alpha)$, step size $\eta$, iteration $M$
    \FOR{$i=1$ {\bfseries to} $M$}
    \FOR{$(s,\alpha)\in \mathcal{S}\times[0,1]$}
    \STATE Sample $a\sim \hat{\pi}(\cdot|s,\alpha)$
    \STATE Update $v(s,\alpha)$ by Eq.~\ref{eq:var-pe}
    \ENDFOR
    \FOR{$(s,\alpha)\in \mathcal{S}\times[0,1]$}
    \STATE Compute $A(s,\alpha,a)$  by Eq.~\ref{eq:adv} for $a$ sampled before
    \STATE Update $\hat{\pi}(\cdot|s,\alpha)$ by $A(s,\alpha,a)\nabla\log\hat{\pi}(a|s,\alpha)$
    \ENDFOR
    \ENDFOR
  \end{algorithmic}
\end{algorithm}

\section{Boosting CVaR Optimization with VaR}
As discussed in Sec.~\ref{sec:var-rl}, optimizing static VaR enjoys a dynamic programming style update where all data is used for policy learning. This motivates \textcolor{black}{the idea of reformulating CVaR optimization as a weighted sum of its two equivalent representations in Eq. \ref{eq:cvar-def-1} and \ref{eq:cvar-def-2} to improve sample efficiency and accelerate learning.} Equivalence can be verified in definition 4.43 and Lemma 4.46 of \citet{follmer_stochastic_2016}.
Therefore, we consider the following objective
\begin{equation}
\label{eq:obj}
    \max_{\bar\pi\in\Pi_{\mathcal{M}}} \textcolor{black}{(1-\omega)}\mathrm{CVaR}_\alpha[\tilde{g}_0] + \omega\mathbb{E}_{\beta\sim U[0,\alpha]}\big[\mathrm{VaR}_\beta[\tilde{g}_0]\big],
\end{equation}
where $\omega\in(0,1)$ is a trade-off parameter.

Following prior work, we also focus on learning Markovian policies. For CVaR, optimization within the Markovian policy class has been widely studied. For VaR, although acquiring a VaR-optimal policy is discussed in \citet{hau2025q}, the resulting policy depends jointly on $\hat{\pi}(s,\alpha)$ and $v(s,\alpha)$, which can be challenging to learn reliably under function approximation. Therefore, we aim to develop proximal VaR-PG using Markovian policies. Motivated by \citet{hau2025q}, we first derive a new Bellman operator for the state-only quantile value function, from which an actor-critic algorithm with stochastic policy can be readily obtained. We then show how this VaR algorithm can be modified when restricting to the Markovian policy class.



\subsection{The New VaR Bellman Operator}
\label{sec:new-bellman}
\textcolor{black}{Though the Q-learning algorithm in \citet{hau2025q} offers a straightforward way for performing policy gradients with stochastic polices, i.e., $q(s,\alpha,a)\nabla\log\pi(a|s,\alpha)$, in practice, state-of-the-art on policy policy gradient algorithms tends to avoid directly using the action value due to its high variance, e.g., PPO~\cite{schulman2017proximal}, therefore, we aim to derive a new Bellman equation for state only value function that are better suited for policy updates.} Motivated by the elicitability of quantile and following the soft quantile loss function in \citet{hau2025q}, the new VaR Bellman optimality operator we consider is defined as
\begin{equation}
\label{eq:var-v-bellman}
\begin{aligned}
    &\text{\small$\mathcal{T}_{\epsilon,\kappa}^*v(s,\alpha)=v(s,\alpha)+\eta\max_a \mathbb{E}[\partial l^\kappa_{\epsilon,\alpha}(\delta_v(s,\alpha,\tilde{r}(s,a),\tilde{s}',\tilde{u}))]$}\\
    &\delta_v(s,\alpha,r,s',u):=r+\gamma v(s',u)-v(s,\alpha),
\end{aligned}
\end{equation}
where $\eta$ is the step size, $\partial l_{\epsilon,\alpha}^\kappa(\cdot):=\partial l^\kappa_{\max(\epsilon,\min(1-\epsilon,\alpha))}(\cdot)$ clips the $\alpha$ of $\partial l^\kappa_{\alpha}(\cdot)$ as defined in Eq.~\ref{eq:partial-soft} with a small $\epsilon\in(0,\frac12)$, $\delta_v$ is analogous to an one-step TD error. $\tilde{u}\sim U[0,1]$ is an independent random variable. Note that when discretizing the quantile levels, e.g., as in QR-DQN~\cite{dabney2018distributional}, $\alpha$ is guaranteed to lie in $[\epsilon,1-\epsilon]$ so the clipping of $\alpha$ is never applied. \textcolor{black}{Note that one cannot  obtain Eq.~\ref{eq:var-v-bellman} by simply taking the maximum over $a$ of Eq.~\ref{eq:quantile-q-learning}, since $l_\alpha(\cdot)$ is non-linear. }

\begin{restatable}{proposition}{Propcontraction}
\label{prop:contraction}
     $\mathcal{T}_{\epsilon,\kappa}^*$ is a contraction mapping for $v$  with step size $\eta\in(0,\kappa]$.
\end{restatable}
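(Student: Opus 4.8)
The plan is to show that $\mathcal{T}_{\epsilon,\kappa}^*$ is a $\gamma$-contraction (or at least a contraction with some modulus $<1$) in the sup-norm $\|v-w\|_\infty := \sup_{s,\alpha}|v(s,\alpha)-w(s,\alpha)|$, treating $v$ as a function on the augmented state space $\mathcal{S}\times[0,1]$. The central object is the map $\alpha \mapsto \partial l^\kappa_{\epsilon,\alpha}(\delta)$ together with $\delta \mapsto \partial l^\kappa_\alpha(\delta)$, so my first step would be to extract the elementary Lipschitz properties of the soft-loss derivative from Eq.~\ref{eq:partial-soft}. Inspecting the four branches, $\partial l^\kappa_\alpha$ is piecewise linear in $\delta$ with slopes $\kappa(1-\alpha)$, $(1-\alpha)/\kappa$, $\alpha/\kappa$, and $\kappa\alpha$; since $\kappa\in(0,1]$, the largest slope magnitude is $\max(\alpha,1-\alpha)/\kappa \le 1/\kappa$. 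Hence I would first establish the key estimate
\begin{equation*}
    |\partial l^\kappa_\alpha(\delta) - \partial l^\kappa_\alpha(\delta')| \le \tfrac{1}{\kappa}|\delta-\delta'|
\end{equation*}
for all $\delta,\delta'$ (checking continuity across the branch boundaries $\pm\kappa$ and $0$, which holds by the specific choice of constants making the pieces match up). The clipping $\alpha\mapsto\max(\epsilon,\min(1-\epsilon,\alpha))$ only restricts the effective risk level and does not affect this Lipschitz bound, so it can be carried along harmlessly.

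Next I would bound the difference of the operator applied to two value functions $v$ and $w$. Write
\begin{equation*}
    \mathcal{T}^*_{\epsilon,\kappa}v(s,\alpha)-\mathcal{T}^*_{\epsilon,\kappa}w(s,\alpha)
    = \big(v(s,\alpha)-w(s,\alpha)\big) + \eta\Big(\max_a \mathbb{E}[\partial l^\kappa_{\epsilon,\alpha}(\delta_v)] - \max_a \mathbb{E}[\partial l^\kappa_{\epsilon,\alpha}(\delta_w)]\Big).
\end{equation*}
Using $|\max_a f(a) - \max_a g(a)| \le \max_a |f(a)-g(a)|$ and Jensen/monotonicity of expectation to pull the difference inside $\mathbb{E}$, the second term is controlled by $\eta\,\max_a\mathbb{E}\big[|\partial l^\kappa_{\epsilon,\alpha}(\delta_v)-\partial l^\kappa_{\epsilon,\alpha}(\delta_w)|\big]$. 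The TD errors differ by $\delta_v - \delta_w = \gamma\big(v(\tilde{s}',\tilde{u})-w(\tilde{s}',\tilde{u})\big) - \big(v(s,\alpha)-w(s,\alpha)\big)$, so the Lipschitz estimate gives a pointwise bound of $\tfrac{1}{\kappa}\big(\gamma|v(\tilde{s}',\tilde{u})-w(\tilde{s}',\tilde{u})| + |v(s,\alpha)-w(s,\alpha)|\big) \le \tfrac{1}{\kappa}(1+\gamma)\|v-w\|_\infty$. This does not immediately close the argument, however: the contribution $-\tfrac{\eta}{\kappa}(v(s,\alpha)-w(s,\alpha))$ from the second term can either reinforce or partially cancel the leading $(v(s,\alpha)-w(s,\alpha))$ term, and naively bounding everything by absolute values loses the crucial cancellation.

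The main obstacle, therefore, is that one cannot simply triangle-inequality the two terms independently; the contraction hinges on the self-term $v(s,\alpha)$ inside the operator cancelling (in the right regime) against the $v(s,\alpha)$ appearing in $\delta_v$. The cleaner route I would take is to combine the leading $v-w$ term with the self-part of the derivative before bounding. Concretely, fix a maximizing action (or argue on the near-maximizing action) and write the difference as $(1-\eta\cdot\text{slope}_{s,\alpha})(v(s,\alpha)-w(s,\alpha))$ plus the $\gamma$-weighted next-state term, where $\text{slope}_{s,\alpha}\in[0,1/\kappa]$ is the effective local slope of $\partial l^\kappa_{\epsilon,\alpha}$ in its first argument. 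The condition $\eta\in(0,\kappa]$ guarantees $\eta\cdot\text{slope}\le \eta/\kappa\le 1$, keeping the coefficient $(1-\eta\cdot\text{slope})$ in $[0,1]$ so that it is genuinely contractive (non-expansive from the self term) while the next-state term contributes at most $\eta\gamma/\kappa \le \gamma$. Making the slope argument rigorous requires either a mean-value / integral representation along the segment from $\delta_w$ to $\delta_v$ (so that the local slope is replaced by an average slope in $[0,1/\kappa]$), or a careful monotonicity argument exploiting that $\partial l^\kappa_\alpha$ is nondecreasing in $\delta$. I expect this slope-bookkeeping — ensuring the coefficient on $\|v-w\|_\infty$ is strictly less than $1$ uniformly over $(s,\alpha)$ and over the maximizing actions of the two value functions — to be the delicate part, with the step-size restriction $\eta\le\kappa$ being exactly what is needed to prevent the self-term coefficient from overshooting and destroying the contraction.
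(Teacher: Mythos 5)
Your outline follows essentially the same route as the paper's proof: a mean-value representation $\partial l^\kappa_{\epsilon,\alpha}(\delta_v)-\partial l^\kappa_{\epsilon,\alpha}(\delta_w)=\xi\,(\delta_v-\delta_w)$, regrouping so the operator difference becomes $(1-\eta\xi)\bigl(v(s,\alpha)-w(s,\alpha)\bigr)+\eta\xi\gamma\bigl(v(\tilde s',\tilde u)-w(\tilde s',\tilde u)\bigr)$, with $\eta\le\kappa$ ensuring both coefficients are non-negative so that each can be bounded by $\|v-w\|_\infty$. However, there is a genuine gap in the final step: you take the effective slope to lie in $[0,1/\kappa]$. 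With that interval the combined coefficient is $1-\eta\xi(1-\gamma)$, which equals $1$ when $\xi=0$, so your argument only establishes non-expansiveness, not a contraction. The missing ingredient is a \emph{uniform strictly positive lower bound} on the slope of $\partial l^\kappa_{\epsilon,\alpha}$: from Eq.~\ref{eq:partial-soft} the four branch slopes are $(1-\alpha)\kappa$, $(1-\alpha)/\kappa$, $\alpha/\kappa$ and $\alpha\kappa$, so after clipping $\alpha$ to $[\epsilon,1-\epsilon]$ one gets $\xi\in[\epsilon\kappa,\,(1-\epsilon)\kappa^{-1}]$ with $\epsilon\kappa>0$. This lower bound is exactly why the operator clips $\alpha$ with $\epsilon>0$ and why the loss is softened with $\kappa>0$ in the first place — the hard loss $l_\alpha$ has flat (zero-slope) pieces on which no contraction can occur.

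With $\xi\ge\epsilon\kappa$, the paper concludes
\begin{equation*}
|\mathcal{T}^*_{\epsilon,\kappa}v(s,\alpha)-\mathcal{T}^*_{\epsilon,\kappa}w(s,\alpha)|\;\le\;\bigl(1-\eta\,\epsilon\kappa\,(1-\gamma)\bigr)\,\|v-w\|_\infty,
\end{equation*}
a strict contraction. Note that this modulus also requires $\gamma<1$, which you never invoke: even in the most favorable regime of your own bookkeeping (slope at its maximum) the coefficient is $1-(\eta/\kappa)(1-\gamma)$, which degenerates to $1$ when $\gamma=1$. So to repair your proof you need two quantitative facts you currently omit: the lower bound $\xi\ge\epsilon\kappa$ supplied by the $\epsilon$-clipping, and the discount $\gamma<1$. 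Everything else — the $\max$-difference inequality, the mean-value/integral slope representation, and the role of $\eta\le\kappa$ in keeping $1-\eta\xi\ge 0$ — matches the paper's argument.
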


\begin{restatable}{proposition}{Propvstar}
\label{prop:fix-point}
    The optimal quantile value $v^*(s,\alpha)$ is the unique fixed point of $\mathcal{T}_{\epsilon,\kappa}^*$ when $\epsilon=0$ and $\kappa= 0$.
\end{restatable}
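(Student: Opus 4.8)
The plan is to characterize the fixed points of $\mathcal{T}^*_{0,0}$ explicitly and then identify them with $v^*$. Setting $\epsilon=0$ removes the clipping, so $\partial l^\kappa_{\epsilon,\alpha}=\partial l^\kappa_\alpha$, and letting $\kappa\to 0$ in Eq.~\ref{eq:partial-soft} recovers the hard pinball subgradient $\partial l_\alpha(\delta)=\alpha\,\mathbb{I}\{\delta>0\}-(1-\alpha)\,\mathbb{I}\{\delta<0\}$, with value $0$ at the kink $\delta=0$. A point $v$ is fixed by $\mathcal{T}^*_{0,0}$ iff the increment $\eta\max_a\mathbb{E}[\partial l_\alpha(\delta_v(s,\alpha,\tilde r(s,a),\tilde s',\tilde u))]$ vanishes; since $\eta>0$, this is equivalent to $\max_a\mathbb{E}[\partial l_\alpha(X_a-v(s,\alpha))]=0$, where I abbreviate $X_a:=\tilde r(s,a)+\gamma v(\tilde s',\tilde u)$.

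First I would evaluate the inner expectation. Under the continuity assumption on the return (so that $P(X_a=y)=0$), a direct computation gives $\mathbb{E}[\partial l_\alpha(X_a-y)]=\alpha\,P(X_a>y)-(1-\alpha)\,P(X_a<y)=\alpha-F_{X_a}(y)$, where $F_{X_a}$ is the CDF of $X_a$. This is exactly the elicitability first-order condition behind Eq.~\ref{eq:quantile_refression}: it is decreasing in $y$ and vanishes precisely at $y=\mathrm{VaR}_\alpha[X_a]$. Substituting into the fixed-point condition turns it into $\alpha-\min_a F_{X_a}(v(s,\alpha))=0$, i.e. $\min_a F_{X_a}(v(s,\alpha))=\alpha$.

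Next I would translate this into the nested VaR form. The equation $\min_a F_{X_a}(v)=\alpha$ encodes two facts simultaneously: $F_{X_a}(v)\ge\alpha$ for every $a$ (equivalently $v\ge\mathrm{VaR}_\alpha[X_a]$ for all $a$), and $F_{X_{a'}}(v)=\alpha$ for the minimizing action $a'$ (equivalently $v=\mathrm{VaR}_\alpha[X_{a'}]$). Together these force $v(s,\alpha)=\max_a\mathrm{VaR}_\alpha[\tilde r(s,a)+\gamma v(\tilde s',\tilde u)]$, the state-value form of the nested VaR Bellman optimality equation. To check that $v^*$ solves it, I would combine $v^*(s,\alpha)=\max_a q^*(s,\alpha,a)$ with Eq.~\ref{eq:qdp-nested}: substituting $\max_{a'}q^*=v^*$ gives $q^*(s,\alpha,a)=\mathrm{VaR}_\alpha[\tilde r(s,a)+\gamma v^*(\tilde s',\tilde u)]$, and taking $\max_a$ yields $v^*(s,\alpha)=\max_a\mathrm{VaR}_\alpha[\tilde r(s,a)+\gamma v^*(\tilde s',\tilde u)]$, so $v^*$ is a fixed point.

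Finally, for uniqueness I would reduce to the already-established uniqueness of $q^*$, since the contraction Proposition~\ref{prop:contraction} cannot be invoked directly (its hypothesis $\eta\in(0,\kappa]$ is vacuous at $\kappa=0$). Given any fixed point $v$, define $q(s,\alpha,a):=\mathrm{VaR}_\alpha[\tilde r(s,a)+\gamma v(\tilde s',\tilde u)]$; the derived equation gives $v=\max_a q$, and substituting back shows $q$ satisfies Eq.~\ref{eq:qdp-nested}. Because that equation has the unique fixed point $q^*$ (Appendix~\ref{app:VaRinfHor}), we conclude $q=q^*$ and hence $v=\max_a q^*=v^*$. I expect the main obstacle to be the careful treatment of the $\kappa\to 0$ limit and the kink at $\delta=0$: in particular, justifying the passage of the limit through the expectation, and ensuring that the continuity and unique-quantile assumptions make $F_{X_a}(v)=\alpha$ genuinely equivalent to $v=\mathrm{VaR}_\alpha[X_a]$ rather than pinning down only an interval of admissible quantiles.
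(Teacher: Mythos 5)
Your proposal is correct, and it rests on the same two pillars as the paper's proof: the first-order (elicitability) condition of the quantile regression loss, and the uniqueness of the fixed point $q^*$ of Eq.~\ref{eq:qdp-nested} established in Appendix~\ref{app:VaRinfHor}. The differences are organizational, and they are worth noting. For existence, the paper argues via convexity of $y\mapsto\mathbb{E}[l_\alpha(\tilde{r}(s,a)+\gamma v^*(\tilde{s}',\tilde{u})-y)]$ and gradient inequalities at $y=v^*(s,\alpha)$ versus $y=q^*(s,\alpha,a)$, whereas you compute the expectation in closed form as $\alpha-F_{X_a}(y)$; these are the same first-order condition dressed differently. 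The more substantive divergence is in uniqueness: the paper proceeds by contradiction, showing that a second fixed point $\bar{v}$ must violate the state-value equation (Eq.~\ref{eq:var-optimal-v}) at some $(\bar{s},\bar{\alpha})$ and that this violation forces $\mathcal{T}^*_{0,0}\bar{v}(\bar{s},\bar{\alpha})>\bar{v}(\bar{s},\bar{\alpha})$; crucially, it also asserts that $\bar{v}$ cannot satisfy Eq.~\ref{eq:var-optimal-v} ``otherwise it would have two fixed points,'' a step it leaves terse. Your route --- proving the exact equivalence between fixed points of $\mathcal{T}^*_{0,0}$ and solutions of the state-value nested VaR equation, then lifting any such $v$ to $q(s,\alpha,a):=\mathrm{VaR}_\alpha[\tilde{r}(s,a)+\gamma v(\tilde{s}',\tilde{u})]$ and invoking the $\gamma$-contraction uniqueness of the $q$-form equation --- supplies precisely the rigorous justification that the paper's terse step needs, and is arguably cleaner. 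Two smaller points: your worry about passing the $\kappa\to 0$ limit through the expectation is a non-issue, because the proposition at $\kappa=0$ is simply the operator defined with the hard subgradient $\partial l_\alpha(\delta)=\alpha-\mathbb{I}\{\delta<0\}$ (which agrees with your pointwise limit except on the null set $\{\delta=0\}$); and the density/unique-quantile assumption you flag is indeed necessary and is exactly the hypothesis the paper inserts at the start of its own proof (a density for $\tilde{r}(s,a)+\gamma v(\tilde{s}',\tilde{u})$, needed for every candidate fixed point $v$, not only $v^*$). Your observation that Proposition~\ref{prop:contraction} cannot be invoked because its hypothesis $\eta\in(0,\kappa]$ is vacuous at $\kappa=0$ is also correct and matches the paper's avoidance of it.
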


\textbf{Remark}. 
Similar to \citet{hau2025q}, using $l^\kappa_{\epsilon,\alpha}(\cdot)$ in Eq.~\ref{eq:var-v-bellman} is for better theoretical analysis. In practice, for easier computation, one can use $l_\alpha(\cdot)$ in Eq.~\ref{eq:quantile_refression} as an extreme and easy-to-use case. This gives the Bellman operator
\begin{small}
$$
\mathcal{T}^*v(s,\alpha)=v(s,\alpha)+\eta\max_a \mathbb{E}[\alpha-\mathbb{I}\{\delta_v(s,\alpha,\tilde{r}(s,a),\tilde{s}',\tilde{u})<0\}].
$$
\end{small}

It is straightforward to develop an actor-critic algorithm from this Bellman optimality operator by using stochastic policy. Consider the VaR Bellman operator under $\hat{\pi}$ as
\begin{small}
\begin{equation}
\label{eq:var-pe}
    \mathcal{T}^{\hat{\pi}}_{\epsilon,\kappa} v(s,\alpha) = v(s,\alpha) + \eta \mathbb{E}_{a\sim \hat{\pi}(s,\alpha)}\mathbb{E}[\partial l^\kappa_{\epsilon,\alpha}(\delta_v(s,\alpha,\tilde{r}(s,a),\tilde{s}',\tilde{u}))],
\end{equation}
\end{small}
and define the advantage function as
\begin{equation}
\label{eq:adv}
    A(s,\alpha,a)=\mathbb{E}[\partial l^\kappa_{\epsilon,\alpha}(\delta_v(s,\alpha,\tilde{r}(s,a),\tilde{s}',\tilde{u}))],
\end{equation}
the policy can be updated towards the increasing direction of $A(s,\alpha,a)$ by the gradient $A(s,\alpha,a) \nabla \log \hat{\pi}(a|s,\alpha)$. \textcolor{black}{Similar to Propositions~\ref{prop:contraction} and \ref{prop:fix-point}, we show in Appendix~\ref{app:T-pi} that $\mathcal{T}^{\hat{\pi}}_{\epsilon,\kappa}$ in Eq.~\ref{eq:var-pe} is a contraction mapping, and provide its fixed point when $\epsilon=0$ and $\kappa=0$. For policy gradient using advantage in Eq.~\ref{eq:adv}, we show in Appendix~\ref{app:var-pg} that if the space of  $\mathcal{A}$ is finite, and the policy parameterization is softmax, it guarantees that the fixed point of Eq.~\ref{eq:var-pe} is monotone increasing pointwise. With the above analysis,} the VaR actor-critic algorithm (assuming the MDP is known) is described in Algo.~\ref{alg:var-pi}. Note that the expectation in Eq.~\ref{eq:var-pe} and \ref{eq:adv} is taken with respect to the joint distribution of $\tilde{r}$, $\tilde{s}'$, and $\tilde{u}$. In general on-policy policy updates, the agent is only able to sample a single transition instance $(s,\alpha,a,r,s')$ along the trajectory. This still gives unbiased stochastic updates for both value and policy functions. 

\begin{algorithm}[tb]
  \caption{Static VaR Policy Execution~\cite{hau2025q}}
  \label{alg:static-var-exec}
  \begin{algorithmic}
    \STATE {\bfseries Input:} quantile value function $v^*(s,\alpha)$, policy function $\hat{\pi}^*(s,\alpha)$, $s_0\in \mathcal{S}$, $\alpha_0\in[0,1]$, $\gamma\in(0,1]$
    \STATE $s \leftarrow s_0, \alpha\leftarrow \alpha_0$
    \FOR{$t=1$ {\bfseries to} $T$}
    \STATE $a \leftarrow \hat{\pi}^*(s,\alpha), ~~z \leftarrow v^*(s,\alpha)$
    \STATE $r, s' = \mathrm{env.step}(a)$
    \STATE $z\leftarrow (z - r) / \gamma$
    \STATE $\alpha \leftarrow \min\{\beta|v(s',\beta)\geq z\}$
    \STATE $s\leftarrow s'$
    \ENDFOR
  \end{algorithmic}
\end{algorithm}


\subsection{Reducing to Markovian Policy Class}
\label{sec:mkv-policy}
As suggested by \citet{hau2025q}, given $v^*$ and $\hat{\pi}^*$, the optimal (history-dependent) VaR policy is constructed by Algo.~\ref{alg:static-var-exec}. Specifically, the agent tracks the cumulative reward accrued up to time $t$ and compare it with $v^*(s_t,\tilde{u})$ to determine the corresponding quantile level $\alpha_t$, at which point it executes $\hat{\pi}^*(s_t,\alpha_t)$. Though this algorithm is validated in tabular settings~\cite{hau2025q}, it can be challenging when using function approximation. Thus, we seek to find proximal VaR-PG in the Markovian policy class.

Let $\bar{\pi}^*\in\Pi_{\mathcal{M}}$ be a stationary and Markovian  $\alpha_0$-VaR optimal policy. By definition of stationary and Markovian, there exists $\hat\pi^*$ such that $\bar\pi^*(s)=\hat{\pi}^*(s,\alpha)$ for all $(s,\alpha)$ pairs encountered when starting from $(s_0,\alpha_0)$ and executing Algo.~\ref{alg:static-var-exec}. Denote the quantile value function under $\bar\pi^*$ as $q^{\bar\pi^*}(s,\alpha,a)$. Obtaining $q^{\bar\pi^*}$ is the same as doing policy evaluation in distributional RL. Here we show that the optimal $\alpha_0$-VaR policy can still be recovered by tracking cumulative rewards as in Algo.~\ref{alg:static-var-exec} but using $q^{\bar{\pi}^*}$ instead of $q^*$. The new action execution using $q^{\bar{\pi}^*}$ is described in Algo~\ref{alg:static-var-exec1}.

\begin{algorithm}[tb]
  \caption{Static VaR Policy Execution with $q^{\bar\pi^*}$}
  \label{alg:static-var-exec1}
  \begin{algorithmic}
    \STATE {\bfseries Input:} quantile value function $q^{\bar\pi^*}(s,\alpha,a)$,  $s_0\in \mathcal{S}$, $\alpha_0\in[0,1]$, $\gamma\in(0,1]$
    \STATE $s \leftarrow s_0, \alpha\leftarrow \alpha_0$
    \FOR{$t=1$ {\bfseries to} $T$}
    \STATE $a\leftarrow \arg\max_b q^{\bar\pi^*}(s,\alpha,b),~~z\leftarrow q^{\bar\pi^*}(s,\alpha,a)$
    \STATE $r, s' = \mathrm{env.step}(a)$
    \STATE $z\leftarrow (z-r)/\gamma$
    \STATE $\alpha\leftarrow \min\{\beta|\max_b q^{\bar\pi^*}(s',\beta,b)\geq z\}$
    \STATE $s\leftarrow s'$
    \ENDFOR
  \end{algorithmic}
\end{algorithm}
\begin{restatable}{proposition}{Propquantpibar}
\label{prop:quant-pibar}
    Let $\bar\pi^*\in \Pi_{\mathcal{M}}$ be a stationary and Markovian $\alpha_0$-VaR optimal policy. Assume that $\bar\pi^*$ is unique. Running Algo.~\ref{alg:static-var-exec1} with $\bar\pi^*$'s quantile value function $q^{\bar\pi^*}$ results in executing $\bar\pi^*$.
\end{restatable}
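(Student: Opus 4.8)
The plan is to prove Proposition~\ref{prop:quant-pibar} by coupling the execution of Algo~\ref{alg:static-var-exec1} (which uses $q^{\bar\pi^*}$) to that of Algo~\ref{alg:static-var-exec} (which uses $v^*$ and $\hat\pi^*$) on the same stream of environment randomness, and then showing by induction on $t$ that the two executions generate identical sequences of states $s_t$, risk levels $\alpha_t$, targets $z_t$, and actions $a_t$. Since the setup already guarantees $\hat\pi^*(s,\alpha)=\bar\pi^*(s)$ on every $(s,\alpha)$ visited by Algo~\ref{alg:static-var-exec}, once the two trajectories are shown to coincide it follows that Algo~\ref{alg:static-var-exec1} selects $a_t=\bar\pi^*(s_t)$ at every step, i.e.\ it executes $\bar\pi^*$. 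Two ingredients drive the induction: a global monotonicity inequality $\max_a q^{\bar\pi^*}(s,\beta,a)\le v^*(s,\beta)$, and a \emph{propagation of optimality} equality $v^{\bar\pi^*}(s_t,\alpha_t)=v^*(s_t,\alpha_t)$ that holds precisely at the visited pairs.

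First I would record the monotonicity: because $q^*(s,\beta,a)$ is the maximum of $\mathrm{VaR}_\beta$ over \emph{all} history-dependent continuations while $q^{\bar\pi^*}(s,\beta,a)$ uses the single continuation $\bar\pi^*\in\Pi_{\mathcal M}\subseteq\Pi_{\mathcal{HD}}$, we get $q^{\bar\pi^*}\le q^*$ pointwise, hence $\max_a q^{\bar\pi^*}(s,\beta,a)\le \max_a q^*(s,\beta,a)=v^*(s,\beta)$ for all $\beta$. For the propagation equality, I would argue that the optimal history-dependent policy recovered by Algo~\ref{alg:static-var-exec} takes action $\hat\pi^*(s_\tau,\alpha_\tau)=\bar\pi^*(s_\tau)$ at every reachable step, so its continuation from $(s_t,\alpha_t)$ is distributionally identical to running the Markovian $\bar\pi^*$ from $s_t$; combining this with the value-consistency of Algo~\ref{alg:static-var-exec} (its target satisfies $z_t=v^*(s_t,\alpha_t)$ and is attained as the $\alpha_t$-quantile of the continuation return, per the VaR dynamic decomposition underlying Eq.~\ref{eq:qdp-nested}) yields $v^{\bar\pi^*}(s_t,\alpha_t)=v^*(s_t,\alpha_t)$.

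With these in hand the induction is routine. At the base case $(s_0,\alpha_0)$, optimality of $\bar\pi^*$ gives $v^{\bar\pi^*}(s_0,\alpha_0)=v^*(s_0,\alpha_0)$, and the sandwich $v^{\bar\pi^*}(s_0,\alpha_0)=q^{\bar\pi^*}(s_0,\alpha_0,\bar\pi^*(s_0))\le\max_a q^{\bar\pi^*}(s_0,\alpha_0,a)\le v^*(s_0,\alpha_0)$ forces $\max_a q^{\bar\pi^*}(s_0,\alpha_0,a)=v^*(s_0,\alpha_0)$ with the (unique, by assumption) maximizer $\bar\pi^*(s_0)$, so the initial action and target match those of Algo~\ref{alg:static-var-exec}. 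For the inductive step, identical actions and coupled randomness give identical $(r_t,s_{t+1})$ and hence identical $z_{t+1}=(z_t-r_t)/\gamma$; it remains to show the two $\alpha$-updates agree. Writing $\alpha_{t+1}^A=\min\{\beta: v^*(s_{t+1},\beta)\ge z_{t+1}\}$ and $\alpha_{t+1}^B=\min\{\beta:\max_a q^{\bar\pi^*}(s_{t+1},\beta,a)\ge z_{t+1}\}$, monotonicity forces $\alpha_{t+1}^B\ge\alpha_{t+1}^A$, while propagation of optimality at $(s_{t+1},\alpha_{t+1}^A)$ gives $\max_a q^{\bar\pi^*}(s_{t+1},\alpha_{t+1}^A,a)\ge v^{\bar\pi^*}(s_{t+1},\alpha_{t+1}^A)=v^*(s_{t+1},\alpha_{t+1}^A)\ge z_{t+1}$, so $\alpha_{t+1}^A$ is feasible for B and $\alpha_{t+1}^B\le\alpha_{t+1}^A$; hence they are equal, and the same sandwich at the common pair $(s_{t+1},\alpha_{t+1})$ shows the argmax recovers $\bar\pi^*(s_{t+1})$.

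I expect the main obstacle to be justifying the propagation-of-optimality equality rigorously, i.e.\ the \emph{time consistency} that $\bar\pi^*$, optimal for $\alpha_0$ at $s_0$, remains optimal for the tracked level $\alpha_t$ at each reached $s_t$. This hinges on the value-consistency invariant of the tracking procedure (that $z_t$ equals the attainable $\alpha_t$-quantile of the continuation return), which comes from the dynamic VaR decomposition behind Eq.~\ref{eq:qdp-nested}; I would state it as a consequence of \citet{hau2025q}'s correctness of Algo~\ref{alg:static-var-exec} together with the hypothesis $\hat\pi^*(s,\alpha)=\bar\pi^*(s)$. The remaining technical care is routine: the continuity and uniqueness-of-quantile assumption adopted in the paper makes each map $\beta\mapsto v^*(s,\beta)$ and $\beta\mapsto\max_a q^{\bar\pi^*}(s,\beta,a)$ nondecreasing and continuous, so every $\min\{\beta:\cdots\}$ is attained with equality at the threshold, and the uniqueness of $\bar\pi^*$ guarantees the argmax is a singleton.
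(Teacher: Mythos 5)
Your proposal is correct and follows essentially the same route as the paper's own proof: both hinge on establishing $\max_b q^{\bar\pi^*}(s_t,\alpha_t,b)=v^*(s_t,\alpha_t)$ at the visited state--risk-level pairs (a consequence of $\bar\pi^*(s)=\hat\pi^*(s,\alpha)$ on those pairs) and then inducting along the trajectory, using monotonicity of $\beta\mapsto\max_b q^{\bar\pi^*}(s,\beta,b)$ to show the $\alpha$-tracking step selects the same level as the execution based on $v^*$. Your two-sided treatment of the $\alpha$-update (the pointwise domination $\max_a q^{\bar\pi^*}(s,\beta,a)\le v^*(s,\beta)$ ruling out any smaller feasible $\beta$, and feasibility of the $v^*$-based level giving the reverse inequality) makes explicit a step the paper argues only in one direction, but this is a refinement of the same argument rather than a different approach.
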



\begin{proof}
    We show that for every state $s_t$, Algo.~\ref{alg:static-var-exec1} executes $\bar\pi^*(s_t)$. The key to proof is to show that $q^{\bar\pi^*}$ satisfies the following property. Suppose running Algo.~\ref{alg:static-var-exec} with $v^*$ and $\hat{\pi}^*$ results in a sequence of $(s_t,\alpha_t)$. Then $q^{\bar\pi^*}(s_t,\alpha_t,\bar\pi^*(s_t))=v^*(s_t,\alpha_t)$. Recall that by definition $\bar\pi^*(s_t)=\hat{\pi}^*(s_t,\alpha_t)$.

    First, consider the value of $q^{\bar\pi^*}(s_t,\alpha_t,\bar\pi^*(s_t))$. Since $v^*(s_t,\alpha_t)$ is the optimal quantile value in the MDP, $q^{\bar\pi^*}(s_t,\alpha_t,\bar\pi^*(s_t))\leq v^*(s_t,\alpha_t)$. If $q^{\bar\pi^*}(s_t,\alpha_t,\bar\pi^*(s_t))< v^*(s_t,\alpha_t)$, then there exists another action $b\neq \bar\pi^*(s_t)$ that can achieve higher values for $v(s_t,\alpha_t)$. This conflicts with the fact that $\bar\pi^*(s_t) =\hat{\pi}^*(s_t,\alpha_t)$ is already the action that achieves $v^*(s_t,\alpha_t)$ in the MDP for all $(s_t,\alpha_t)$. Therefore, $q^{\bar\pi^*}(s_t,\alpha_t,\bar\pi^*(s_t))=v^*(s_t,\alpha_t)$.
    
    Second, consider the action execution procedure. For $s_0$, since $\bar{\pi}^*$ is the optimal $\alpha_0$-VaR policy, $\bar\pi^*(s_0)$ must be selected, and $q^{\bar\pi^*}(s_0,\alpha_0,\bar{\pi}^*(s_0))=v^*(s_0,\alpha_0)$. For $s_1$, since $\max_bq^{\bar\pi^*}(s_1,\beta,b)$ is monotonic with respect to $\beta$, the alpha selection procedure will still choose $\alpha_1=\min\{\beta|\max_b q^{\bar{\pi}^*}(s_1,\beta,b)\geq \frac{v^*(s_0,\alpha_0)-r_0}{\gamma}\}$ as in Algo.~\ref{alg:static-var-exec}, because $\max_b q^{\bar{\pi}^*}(s_1,\alpha_1,b)=v^*(s_1,\alpha_1)$. Thus $\arg\max_b q^{\bar{\pi}^*}(s_1,\alpha_1,b)=\bar{\pi}^*(s_1)$ is selected. The same applies for all remaining $s_t$.

    \textcolor{black}{Note that assuming the uniqueness of $\bar{\pi}^*$  is for ease of presentation. Otherwise, the action section in Algo.~\ref{alg:static-var-exec1} needs to be modified to ensure that the same action selection rule, among optimal actions, is applied.}
\end{proof}

Proposition~\ref{prop:quant-pibar} establishes the fact that one can execute a VaR-optimal policy from a quantile value function induced by a Markovian policy by tracking the cumulative rewards in the MDP. This action execution algorithm also suggests the following optimization objective for $\bar{\pi}^*$.
\begin{restatable}{proposition}{Propmkv}
\label{prop:mkv}
Let $v^{\bar{\pi}^*}$ be the state quantile value function of policy $\bar{\pi}^*$ as defined in Proposition~\ref{prop:quant-pibar}, then
\begin{equation}
\label{eq:maximum-barpi}
\begin{aligned}
&\text{\small$\bar{\pi}^*=\arg\max_{\bar{\pi}\in \Pi_\mathcal{M}} \mathbb{E}_{s,\alpha,a\sim\bar\pi(s)}[\partial l^\kappa_{\epsilon,\alpha}(\delta_{v^{\bar{\pi}^*}}(s,\alpha,\tilde{r}(s,a),\tilde{s}',\tilde{u}))]$}\\
&\delta_{v^{\bar{\pi}^*}}(s,\alpha,r,s',u)=r+\gamma v^{\bar{\pi}^*}(s',u)-v^{\bar{\pi}^*}(s,\alpha),
\end{aligned}
\end{equation}
with $(s,\alpha)$ drawn from the distribution of state-risk level pair as in Algo.~\ref{alg:static-var-exec1}, and when $\epsilon=0$ and $\kappa= 0$.
\end{restatable}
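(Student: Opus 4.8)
The plan is to first reduce the objective in the stated limit, then argue pointwise optimality of $\bar\pi^*$ along the state-risk pairs generated by Algo.~\ref{alg:static-var-exec1}. Setting $\epsilon=0$ and $\kappa=0$, the Remark gives $\partial l^\kappa_{\epsilon,\alpha}(\delta)=\alpha-\mathbb{I}\{\delta<0\}$, so the advantage of Eq.~\ref{eq:adv} evaluated at $v^{\bar\pi^*}$ reads $A(s,\alpha,a)=\alpha-P\big[\tilde r(s,a)+\gamma v^{\bar\pi^*}(\tilde s',\tilde u)<v^{\bar\pi^*}(s,\alpha)\big]$, and the objective in Eq.~\ref{eq:maximum-barpi} is $J(\bar\pi)=\mathbb{E}_{(s,\alpha)}\mathbb{E}_{a\sim\bar\pi(s)}[A(s,\alpha,a)]$. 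Crucially, the distribution of $(s,\alpha)$ is the one induced by $\bar\pi^*$ through Algo.~\ref{alg:static-var-exec1} and is therefore \emph{fixed}: it does not depend on the policy being optimized. Hence it suffices to show that at every visited pair $(s_t,\alpha_t)$ one has $\max_a A(s_t,\alpha_t,a)=0$ and that this maximum is attained at $a=\bar\pi^*(s_t)$; integrating over the fixed state distribution then identifies $\bar\pi^*$ as the maximizer.

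Next I would import the on-trajectory identity already established in Proposition~\ref{prop:quant-pibar}: along the pairs $(s_t,\alpha_t)$ produced by Algo.~\ref{alg:static-var-exec1}, $q^{\bar\pi^*}(s_t,\alpha_t,\bar\pi^*(s_t))=v^*(s_t,\alpha_t)$, and since $\bar\pi^*$ is deterministic its state value coincides, $v^{\bar\pi^*}(s_t,\alpha_t)=q^{\bar\pi^*}(s_t,\alpha_t,\bar\pi^*(s_t))=v^*(s_t,\alpha_t)$. For the action $a=\bar\pi^*(s_t)$, I would invoke the distributional (policy-evaluation) Bellman equation underlying $q^{\bar\pi^*}$, i.e.\ the policy-evaluation analogue of Eq.~\ref{eq:qdp-nested} without the $\max_{a'}$, which states that $\tilde r(s_t,\bar\pi^*(s_t))+\gamma v^{\bar\pi^*}(\tilde s',\tilde u)$ is equal in distribution to the return of $\bar\pi^*$ from $s_t$ (the factor $\tilde u\sim U[0,1]$ realizes the quantile-function/inverse-CDF representation of the next-state value). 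Its $\alpha_t$-quantile is exactly $v^{\bar\pi^*}(s_t,\alpha_t)$, so $P[\cdot<v^{\bar\pi^*}(s_t,\alpha_t)]=\alpha_t$ and $A(s_t,\alpha_t,\bar\pi^*(s_t))=0$.

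\textbf{Main obstacle.} The delicate step is showing $A(s_t,\alpha_t,a)\le 0$ for every competing action $a\neq\bar\pi^*(s_t)$, equivalently $P[\tilde r(s_t,a)+\gamma v^{\bar\pi^*}(\tilde s',\tilde u)<v^{\bar\pi^*}(s_t,\alpha_t)]\ge\alpha_t$. The difficulty is that the advantage uses the \emph{sub-optimal} next-state values $v^{\bar\pi^*}(\tilde s',\tilde u)$, whereas optimality is only available for $v^*$. I would bridge this with three ingredients: (i) the pointwise monotonicity $v^{\bar\pi^*}(\cdot,\cdot)\le v^*(\cdot,\cdot)$, valid because $\bar\pi^*\in\Pi_{\mathcal M}\subseteq\Pi_{\mathcal{HD}}$ is feasible while $v^*$ is the optimal quantile value, which yields the event inclusion $\{\tilde r(s_t,a)+\gamma v^*(\tilde s',\tilde u)<c\}\subseteq\{\tilde r(s_t,a)+\gamma v^{\bar\pi^*}(\tilde s',\tilde u)<c\}$ for every threshold $c$; (ii) the nested VaR optimality Eq.~\ref{eq:qdp-nested}, which gives $v^*(s_t,\alpha_t)=\max_a q^*(s_t,\alpha_t,a)\ge q^*(s_t,\alpha_t,a)=\mathrm{VaR}_{\alpha_t}[\tilde r(s_t,a)+\gamma v^*(\tilde s',\tilde u)]$ and hence $P[\tilde r(s_t,a)+\gamma v^*(\tilde s',\tilde u)<v^*(s_t,\alpha_t)]\ge\alpha_t$; and (iii) the identity $v^{\bar\pi^*}(s_t,\alpha_t)=v^*(s_t,\alpha_t)$ from the previous paragraph. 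Chaining (iii), (ii), and (i) with $c=v^{\bar\pi^*}(s_t,\alpha_t)=v^*(s_t,\alpha_t)$ delivers the required inequality. The only care needed is that the quantile comparisons are stated under the continuous-distribution assumption made throughout, so that $\mathrm{VaR}_\alpha[X]\le c$ is equivalent to $P[X<c]\ge\alpha$.

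Finally, combining the two computations shows $\max_a A(s_t,\alpha_t,a)=0=A(s_t,\alpha_t,\bar\pi^*(s_t))$ at every visited pair, so putting all mass on $\bar\pi^*(s_t)$ maximizes the inner expectation pointwise; integrating over the fixed $\bar\pi^*$-induced distribution of $(s,\alpha)$ gives $J(\bar\pi)\le 0=J(\bar\pi^*)$ for all $\bar\pi\in\Pi_{\mathcal M}$, i.e.\ $\bar\pi^*\in\arg\max_{\bar\pi}J(\bar\pi)$. The assumed uniqueness of $\bar\pi^*$ as the $\alpha_0$-VaR optimal policy then upgrades this to equality in Eq.~\ref{eq:maximum-barpi}: were some $a\neq\bar\pi^*(s_t)$ to also attain $A(s_t,\alpha_t,a)=0$ at a visited state, it would furnish a second optimal action and contradict uniqueness, so the inequality in (ii) is strict off $\bar\pi^*$ and the $\arg\max$ collapses to the single policy $\bar\pi^*$.
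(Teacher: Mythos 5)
Your proposal is correct and shares the paper's skeleton: reduce the objective to pointwise maximization of the advantage at each pair $(s_t,\alpha_t)$ visited by Algo.~\ref{alg:static-var-exec1} (whose distribution is indeed fixed by $\bar\pi^*$), import the on-trajectory identity $v^{\bar\pi^*}(s_t,\alpha_t)=q^{\bar\pi^*}(s_t,\alpha_t,\bar\pi^*(s_t))=v^*(s_t,\alpha_t)$ from Proposition~\ref{prop:quant-pibar}, show the advantage vanishes at $a=\bar\pi^*(s_t)$ via the policy-evaluation Bellman equation for $q^{\bar\pi^*}$, and show it is non-positive for every other action. The one step where you genuinely diverge is this non-positivity argument: the paper stays entirely inside the policy's own value function, using $\mathrm{VaR}_{\alpha_t}[\tilde r(s_t,a)+\gamma v^{\bar\pi^*}(\tilde s_{t+1},\tilde u)]=q^{\bar\pi^*}(s_t,\alpha_t,a)\le\max_b q^{\bar\pi^*}(s_t,\alpha_t,b)=v^{\bar\pi^*}(s_t,\alpha_t)$ and then convexity of the quantile loss as in Proposition~\ref{prop:fix-point}, whereas you detour through the optimality equation Eq.~\ref{eq:qdp-nested} for $q^*$ and transfer the bound back to $v^{\bar\pi^*}$ via the pointwise domination $v^{\bar\pi^*}\le v^*$ together with event inclusion. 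Both routes are sound and in fact equivalent in content, since the paper's gradient computation is exactly your CDF-monotonicity statement, $\nabla_y\mathbb{E}[l_\alpha(X-y)]=P[X<y]-\alpha$; the paper's version is slightly more economical (no separate domination lemma), while yours makes explicit where the continuity assumption is needed to pass between VaR inequalities and probability inequalities, a point the paper leaves implicit. One caveat on your closing uniqueness argument: the claim that a second zero-advantage action at a visited state would "furnish a second optimal policy" has a small unaddressed gap — modifying the action at $s_t$ changes the policy at every future visit to $s_t$, so the modified policy's value is not immediately $q^{\bar\pi^*}(s_t,\alpha_t,a)$ — but the paper's own assertion that the maximum is "only achieved" at $\arg\max_a q^{\bar\pi^*}(s_t,\alpha_t,a)$ rests on the same informal leap, so you are no worse off than the original proof here.
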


The objective in Proposition~\ref{prop:mkv} exhibits a form analogous to the advantage function in Algo.~\ref{alg:var-pi}, suggesting an alternative approach for learning VaR policy within the Markovian policy class. Rather than learning an $\alpha$-dependent policy $\hat{\pi}(s,\alpha)$ via advantage $A(s,\alpha,a)$ across all $\alpha$ levels as in Algo.~\ref{alg:var-pi}, we instead learn a Markovian policy $\bar{\pi}(s)$ using advantage $\bar{A}(s,\alpha,a)$, where the quantile level $\alpha$ is determined by tracking rewards as in Algo.~\ref{alg:static-var-exec1}. Specifically, given a batch of data $\{(s_t,a_t,r_t,s_{t+1})\}_{t=0}^{T-1}$ collected by $\bar{\pi}$, we compute the risk-levels $\alpha_t$ for each state $s_t$ as ($\alpha_0$ is given)
\begin{equation}
\label{eq:track-alpha}
   \alpha_{t}=\min\{\beta|v(s_{t},\beta)\geq \frac{v(s_{t-1},\alpha_{t-1})-r_{t-1}}{\gamma}\}.
\end{equation}
Define the advantage function of the Markovian policy as
\begin{equation}
\label{eq:mkv-adv}
    \bar A(s_t,\alpha_t,a_t) = \mathbb{E}[\partial l^{\kappa}_{\epsilon,\alpha_t}(\delta_{v}(s_t,\alpha_t,r_t,s_{t+1},\tilde{u}))],
\end{equation}
$\bar{\pi}$ is updated by the gradient $\bar A(s_t,\alpha_t,a_t)\nabla \log \bar{\pi}(a_t|s_t)$. $v$ is updated , for every quantile level $\alpha$, by
$$
     v(s_t,\alpha)\leftarrow v(s_t,\alpha)+\eta \mathbb{E}[\partial l^{\kappa}_{\epsilon,\alpha}(\delta_v(s_t,\alpha,r_t,s_{t+1},\tilde{u}))].
$$
When replacing the soft quantile loss $l^\kappa_{\epsilon,\alpha}(\cdot)$ by the original $l_\alpha(\cdot)$ (or huber one), the value update is the same as quantile regression in distributional RL~\cite{dabney2018distributional}. 





It is worth noting that although $\bar{\pi}^*$ attains the maximum of the objective in Eq.~\ref{eq:maximum-barpi} under the value function $v^{\bar{\pi}^*}$, it remains an open question whether iterative updates starting from arbitrary $\bar{\pi}$ and $v$ will converge to $\bar{\pi}^*$. Nevertheless, our empirical results indicate that incorporating this update rule with CVaR-PG substantially accelerates learning.

\subsection{Augmenting CVaR-PG with VaR-PG}
Before integrating the VaR-PG developed under the Markovian policy assumption with CVaR-PG to address the objective in Eq.~\ref{eq:obj}, several practical considerations remain to be addressed in the algorithm design.

\paragraph{Parameterization of quantile function $v$} Following quantile-based distributional RL~\cite{dabney2018distributional}, we discretize the quantile levels uniformly over $[0,1]$. The deep $v$-network takes the state as input and outputs $I$ quantile values, corresponding to the quantile levels $\{\frac{1}{2} (\frac{i-1}{I}+\frac{i}{I})\}_{i=1}^I$. However, this architecture is known to suffer from the quantile crossing issue, where the predicted quantile values can violate the required monotonicity in the quantile level, potentially impairing policy learning~\cite{zhou2020non,luo2022distributional}. To enforce monotonicity in a simple and effective manner, we use the following mechanism. The first network output is treated as a base value, and the remaining $I-1$ outputs are interpreted as incremental deltas, i.e.,
$$
\begin{aligned}
v(s)_1 &= f_v(s)_1\\
v(s)_i &=v(s)_1 + \sum_{j=2}^i \mathrm{softplus}(f_v(s)_j) ~~2\leq i\leq I,
\end{aligned}
$$
where $f_v$ denotes the deep value network, and the softplus function is applied to make deltas non-negative.

\paragraph{Multi-step advantage estimation}  Eq.~\ref{eq:mkv-adv} is one-step advantage estimation. Since we are now working with Markovian policy, the advantage can be estimated using multi-step rollout~\cite{schulman2016high} as
$$
\begin{aligned}
&\bar{A}^{(\iota)}(s_t,\alpha_t,a_t)=\mathbb{E}[\partial l^\kappa_{\epsilon,\alpha_t}(\delta_{v}^{(\iota)})]\\
\delta_{v}^{(\iota)}=-&v(s_t,\alpha_t)+r_t+\gamma r_{t+1}+...+\gamma^\iota v(s_{t+\iota},\tilde{u}).
\end{aligned}
$$
The weighted average forms the final advantage estimation
\begin{equation}
\label{eq:mkv-adv-multistep}
 \bar{A}(s_t,\alpha_t,a_t) = (1-\lambda)(\bar{A}^{(1)} +\lambda \bar{A}^{(2)}+\lambda^2 \bar{A}^{(3)} +...),   
\end{equation}
with $\lambda\in(0,1)$. 

It is now convenient to form the final algorithm to solve the objective in Eq.~\ref{eq:obj}. Given a Markovian policy $\bar\pi$ and its associated quantile value function $v$, at each iteration, $N$ trajectories are sampled by executing $\bar{\pi}$ in the environment. For each trajectory, the initial quantile level $\alpha_0$ is drawn from the uniform distribution $U[0,\alpha]$, and the subsequent quantile levels $\{\alpha_t\}$ are determined according to Eq.~\ref{eq:track-alpha} using $v$ and are recorded along the trajectory. The policy gradient for $\bar\pi$ is computed separately for the CVaR component (cf. Eq.~\ref{eq:cvar-pg}) and the VaR component (cf. Sec.~\ref{sec:mkv-policy} plus the averaged multi-step advantage), and the two gradients are combined via the trade-off parameter $\omega$. The value function $v$ is updated through quantile regression by choosing the quantile regression loss $l_\alpha(\cdot)$. The complete procedure is summarized in Algo.~\ref{alg:full} in Appendix. 

\textbf{Remark}. 
\textcolor{black}{Although our algorithm developed for VaR is amendable to focus on a single VaR objective, we instead work on CVaR, i.e., expected VaR, since}
VaR characterizes the return distribution only at a single quantile level and, consequently, optimizing \textcolor{black}{a single} VaR does not necessarily yield a sufficiently risk-averse policy, because outcomes in the tail beyond the specified quantile are ignored. Therefore, VaR optimization may fail to adequately penalize catastrophic events, thereby limiting its effectiveness in controlling extreme downside risk. \textcolor{black}{In addition, in our algorithm, the expected VaR term serves as a reinforcing (boosting) signal for CVaR-PG, and may eventually vanish by decaying $\omega$. This ensures that the convergence properties of our algorithm are the same as CVaR-PG.}
\section{Experiments}

\begin{figure}
    \begin{center}
        \includegraphics[width=0.95\columnwidth]{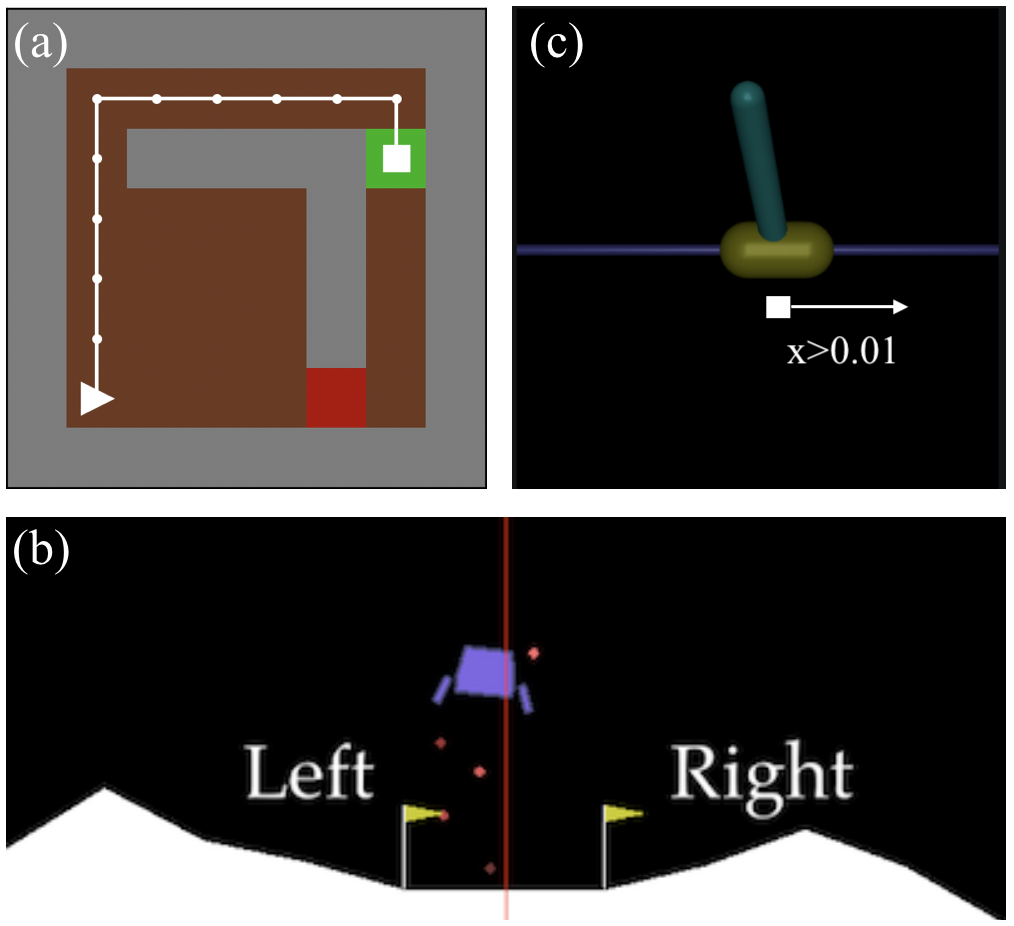}
    \end{center}
    \caption{(a) Maze. Visiting red state will receive a random reward, with mean $-1$. (b) LunarLander. Landing on the right part of the ground will receive a random reward with mean 0. (c) Inverted Pendulum. Staying in the region where $x>0.01$ will receive a random reward with mean 0 per step.}
    \label{fig:envs}
\end{figure}

Following \citet{luo2024simple}, we modify several domains such that the risk-averse policy is clear to identify to evaluate the algorithms. We start from a simple maze domain adapted from \citet{greenberg2022efficient}, then LunarLander from the Box2D environments of OpenAI Gym~\cite{brockman2016openai} and InvertedPendulum from the Mujoco~\cite{todorov2012mujoco} environments of OpenAI Gym (We did not include some other MuJoCo domains considered in \citet{luo2024simple}, such as Ant and HalfCheetah. In these environments, agents are free to move both forward and backward, and we observe that policies may achieve high returns while  wandering near the origin, which obscures meaningful notions of risk-averse behavior. The design of environments that admit well-defined and practically relevant risk-averse objectives is itself a nontrivial research problem and is beyond the scope of this paper).


\textbf{Baselines}. We compare our method with the original CVaR-PG (cf. Eq.~\ref{eq:cvar-pg})~\cite{tamar2015optimizing}, Predictive CVaR-PG (PCVaR-PG)~\cite{kim2024risk}, Return Capping (RET-CAP)~\cite{mead2025return}, and the mixture policy method (denoted as MIX) in \citet{luo2024simple}. The method in \citet{greenberg2022efficient} requires the control of environment randomness, and the method in \citet{lim2022distributional} is shown to not work very well in \citet{luo2024simple}, so these two are not included. We also include REINFORCE with baseline method~\cite{sutton1998reinforcement} as a risk-neutral baseline. We denote our method as CVaR-VaR in what follows, and recall that CVaR-VaR reduces to CVaR-PG when $\omega=0$.

\textbf{Settings}. To ensure a relatively fair comparison, we implement all methods using on-policy PG without off-policy importance sampling (IS), thereby eliminating the potential confounding effects of IS ratio clip. The only exception is MIX, whose risk-neutral component is updated by off-policy data via offline RL (IQL~\cite{kostrikov2022offline}). Across all domains, methods are implemented with deep neural networks, and the policy is updated after collecting $N=20$ trajectories. Consequently, the results may differ slightly from those reported in \citet{mead2025return} and \citet{luo2024simple}: the former employs off-policy learning with IS ratio clip, while the latter collects a larger number of trajectories for policy updates. Additional implementation and parameter details are provided in Appendix~\ref{sec:exp}.

\begin{figure}[t]
    \begin{center}
        \includegraphics[width=0.99\columnwidth]{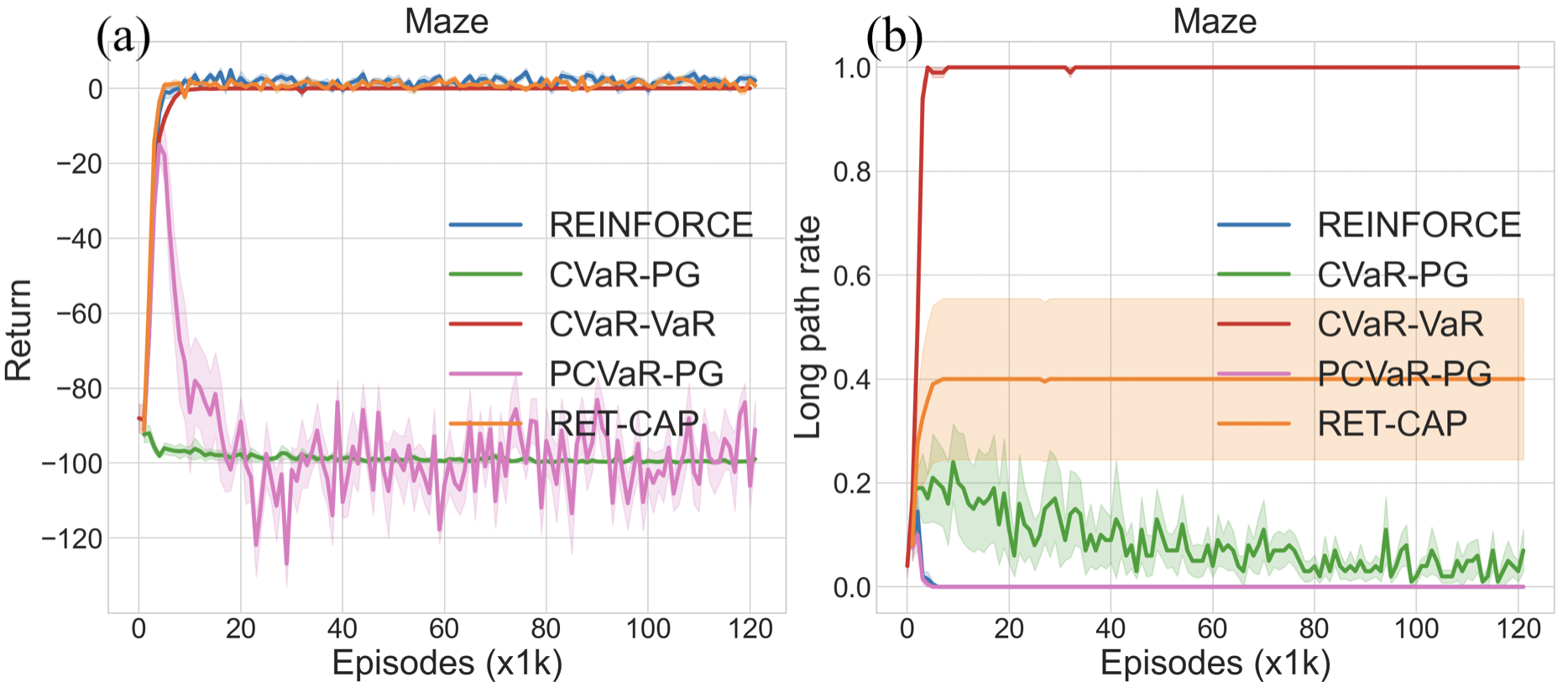}
    \end{center}
    \caption{(a) Expected return (b) Long (risk-averse) path rate in Maze. Curves are averaged over 10 seeds with shaded regions indicating standard errors.}
    \label{fig:maze-res}
\end{figure}

\subsection{Maze}
The maze domain is depicted in Fig.~\ref{fig:envs}a.  The gray color marks the walls. The agent starts from the white triangle state and aims to reach the green goal state. The action space is discrete with four actions $\{$Up, Down, Left, Right$\}$. The per-step reward is $-1$ before reaching the goal except the red state. Visiting the red state receives a random reward $-1 + \mathcal{N}(0,1)\times 30$. The reward for visiting the goal is $10$. Thus, the shortest path going through the red state towards the goal is the optimal risk-neutral path, while the longer path (marked in white color) is $\alpha$-CVaR optimal with small $\alpha$. The maximum episode length is 100. We set CVaR $\alpha=0.1$. Note that MIX fixes its risk-neutral component to the optimal risk-neutral policy in this domain, and is therefore not included in the comparison here. RET-CAP requires the $\alpha$-VaR of the return distribution induced by the optimal $\alpha$-CVaR policy, which is available in this domain.

We report the return and long path rate of different methods in Fig.~\ref{fig:maze-res}. CVaR-PG fails to learn a reasonable policy in this domain since the return distribution exhibits a flat left tail during exploring. This leads to vanishing gradient as discussed in \citet{greenberg2022efficient}. PCVaR-PG initially acquires reward but ultimately fails, since it relies on learning a prediction function which takes cumulative rewards as input, this component may be difficult to approximate reliably with function approximation. Both RET-CAP and CVaR-VaR are able to learn policies with high returns but RET-CAP does not consistently converge to the risk-averse policy across all random seeds. In contrast, CVaR-VaR rapidly converges to a stable risk-averse policy.

\begin{figure*}[t]
    \centering



    \begin{subfigure}[t]{0.28\textwidth}
        \centering
        \includegraphics[width=\linewidth]{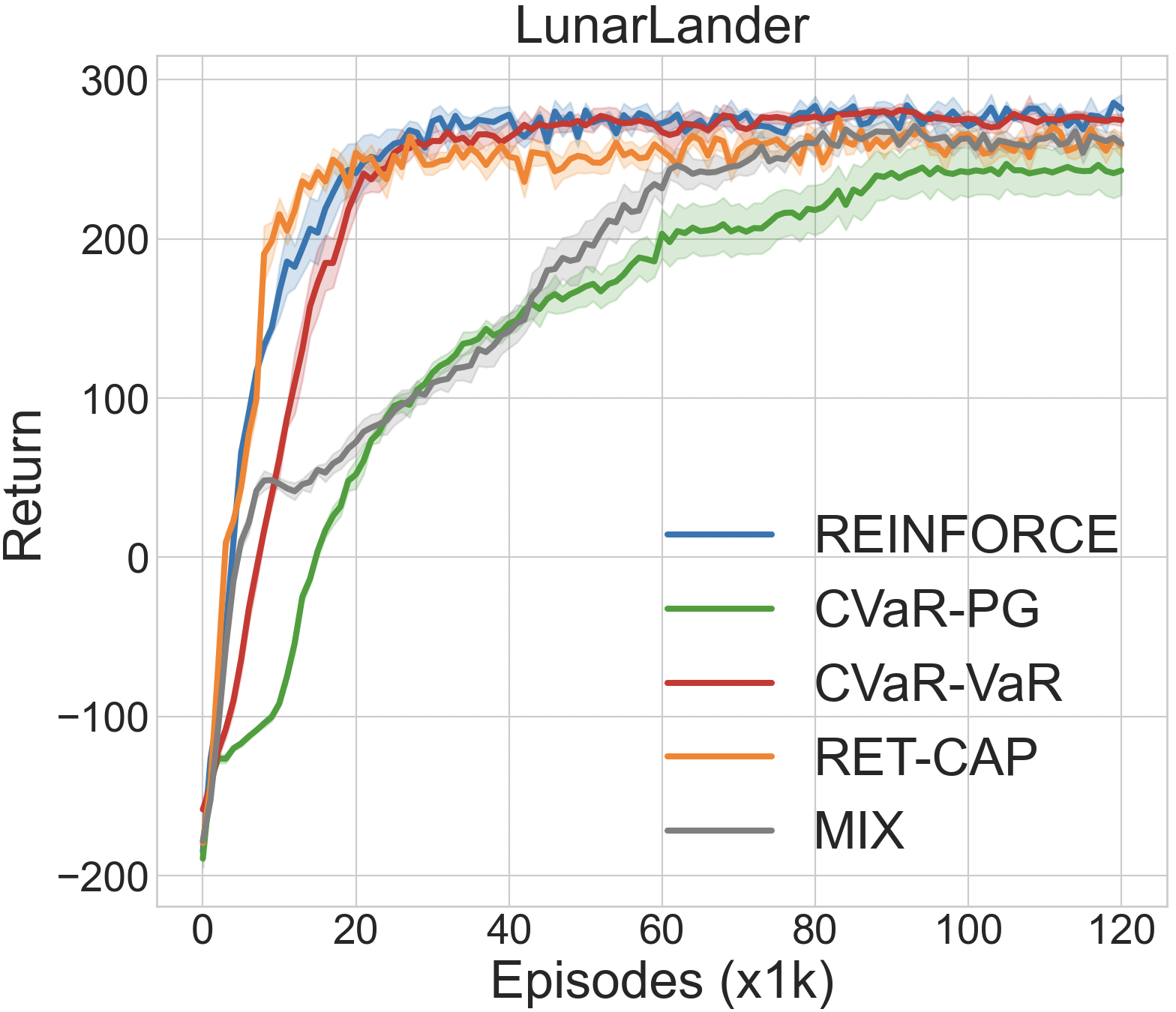}
        \caption{Expected return in Lunarlander}
        \label{fig:sub3}
    \end{subfigure}
    \hspace{0.02\textwidth}
    \begin{subfigure}[t]{0.28\textwidth}
        \centering
        \includegraphics[width=\linewidth]{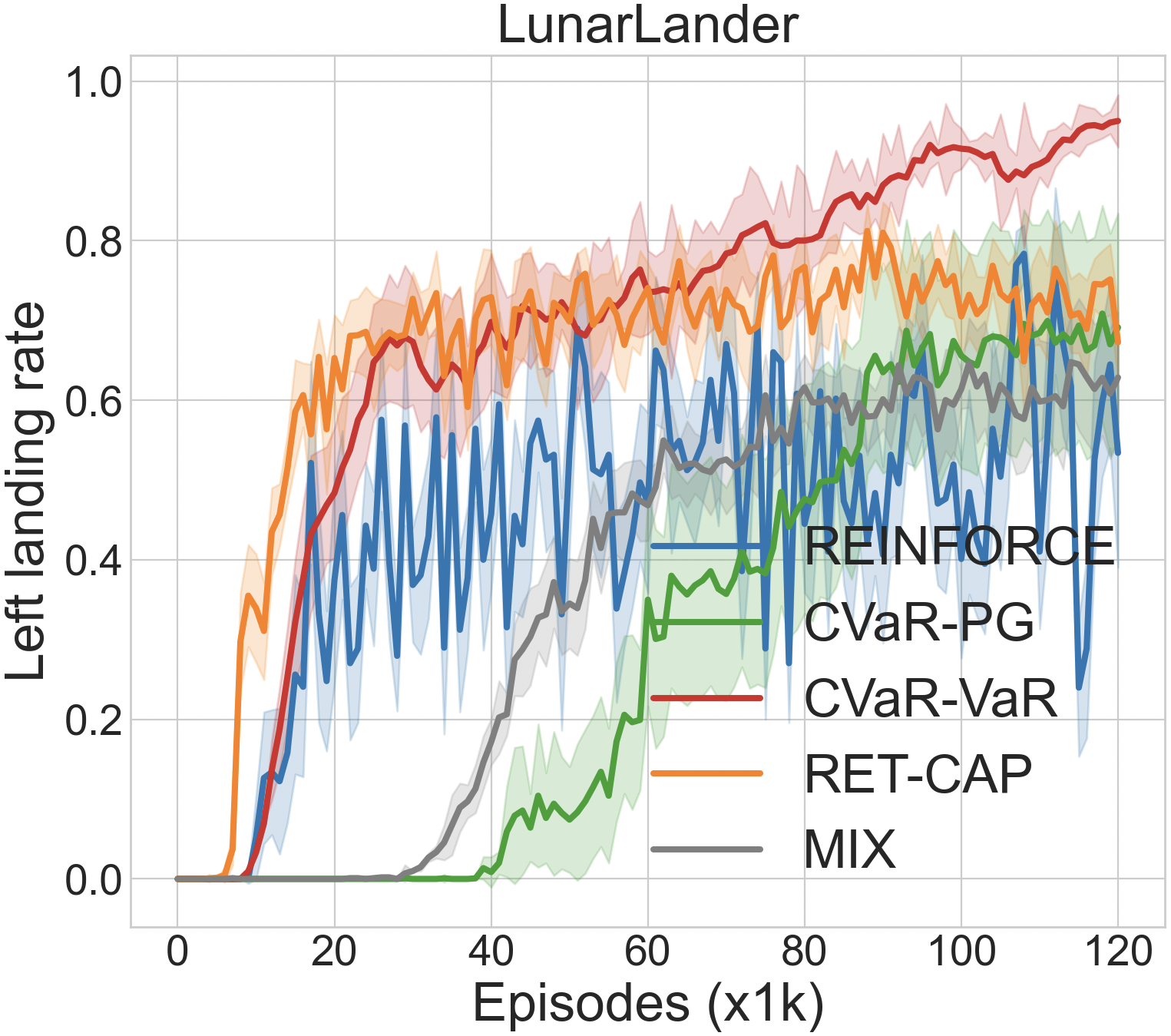}
        \caption{Risk-averse rate in Lunarlander}
        \label{fig:sub4}
    \end{subfigure}
    \hspace{0.02\textwidth}
    \begin{subfigure}[t]{0.28\textwidth}
        \centering
        \includegraphics[width=\linewidth]{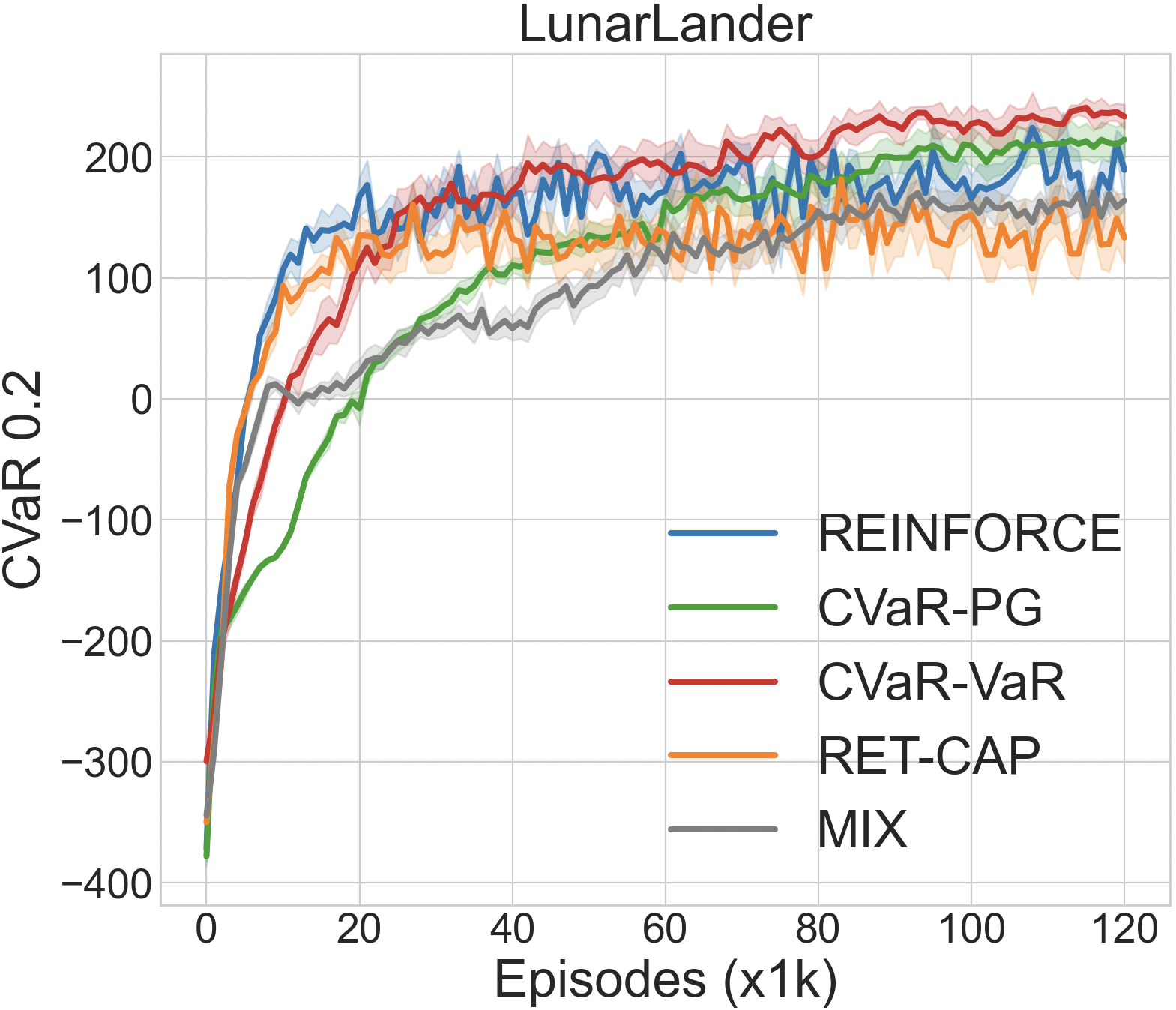}
        \caption{CVaR 0.2 of return in Lunarlander}
        \label{fig:sub5}
    \end{subfigure}

    \begin{subfigure}[t]{0.28\textwidth}
        \centering
        \includegraphics[width=\linewidth]{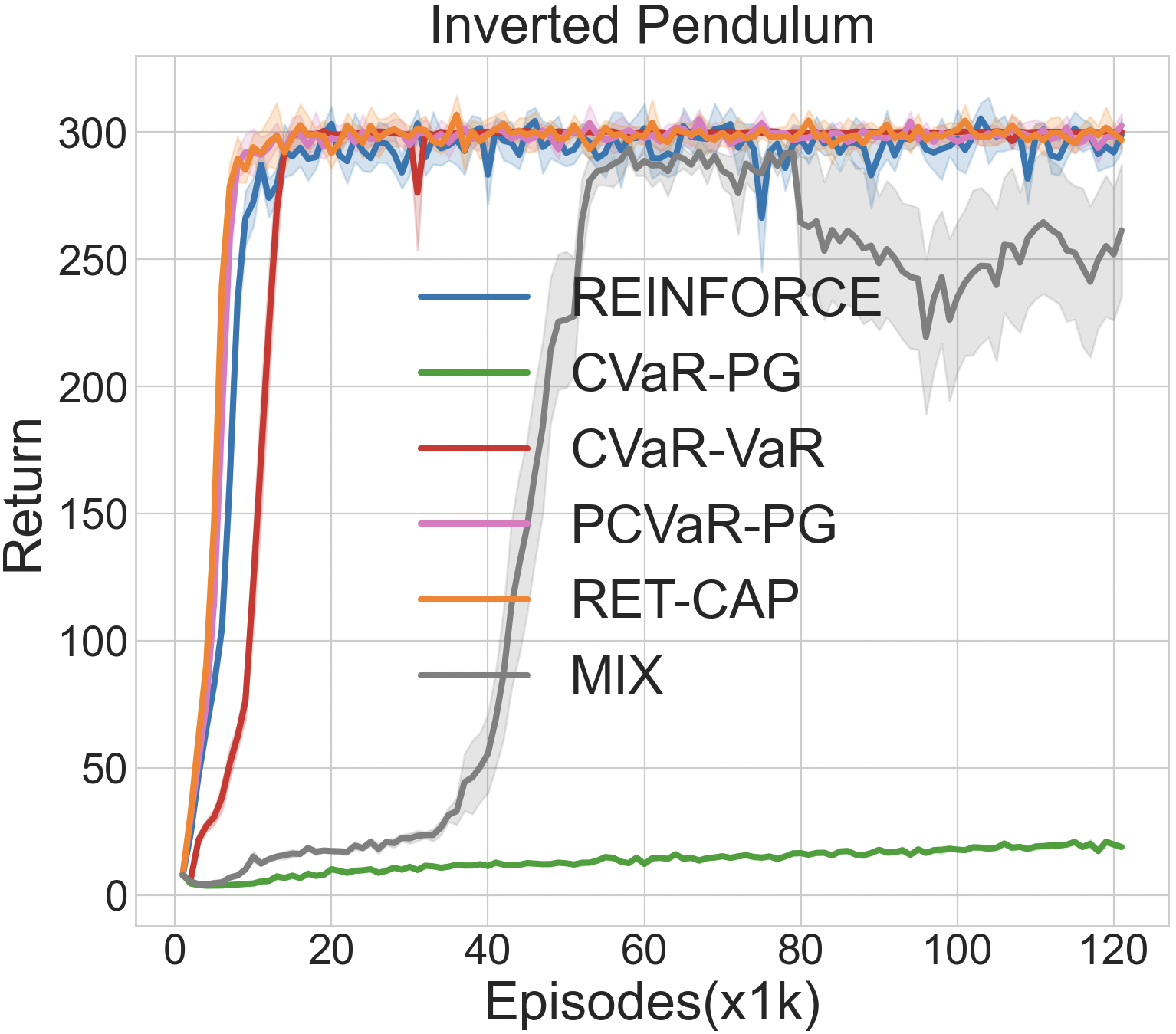}
        \caption{Expected return in Pendulum}
        \label{fig:sub6}
    \end{subfigure}
    \hspace{0.02\textwidth}
    \begin{subfigure}[t]{0.28\textwidth}
        \centering
        \includegraphics[width=\linewidth]{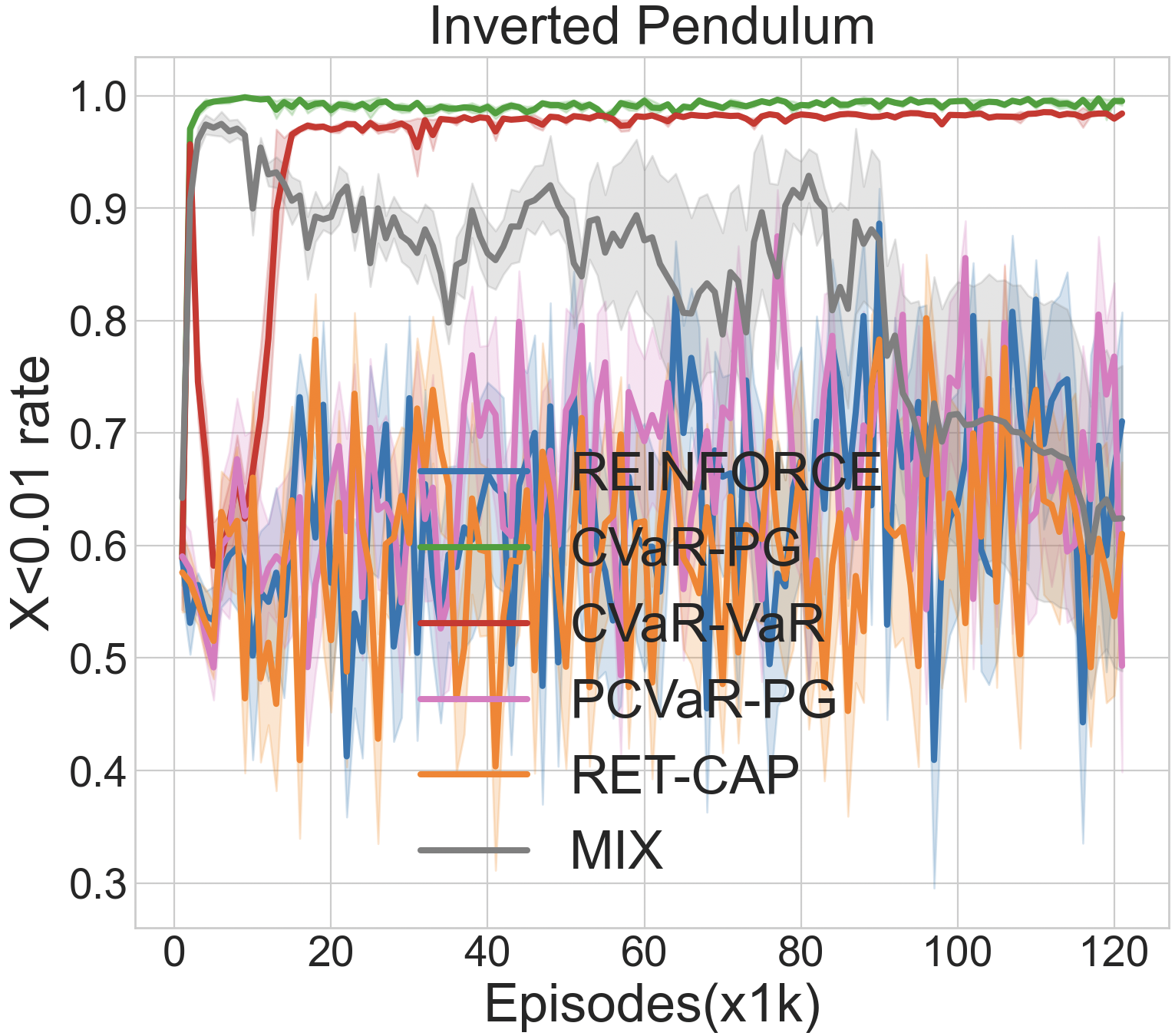}
        \caption{Risk-averse rate in Pendulum}
        \label{fig:sub7}
    \end{subfigure}
    \hspace{0.02\textwidth}
    \begin{subfigure}[t]{0.28\textwidth}
        \centering
        \includegraphics[width=\linewidth]{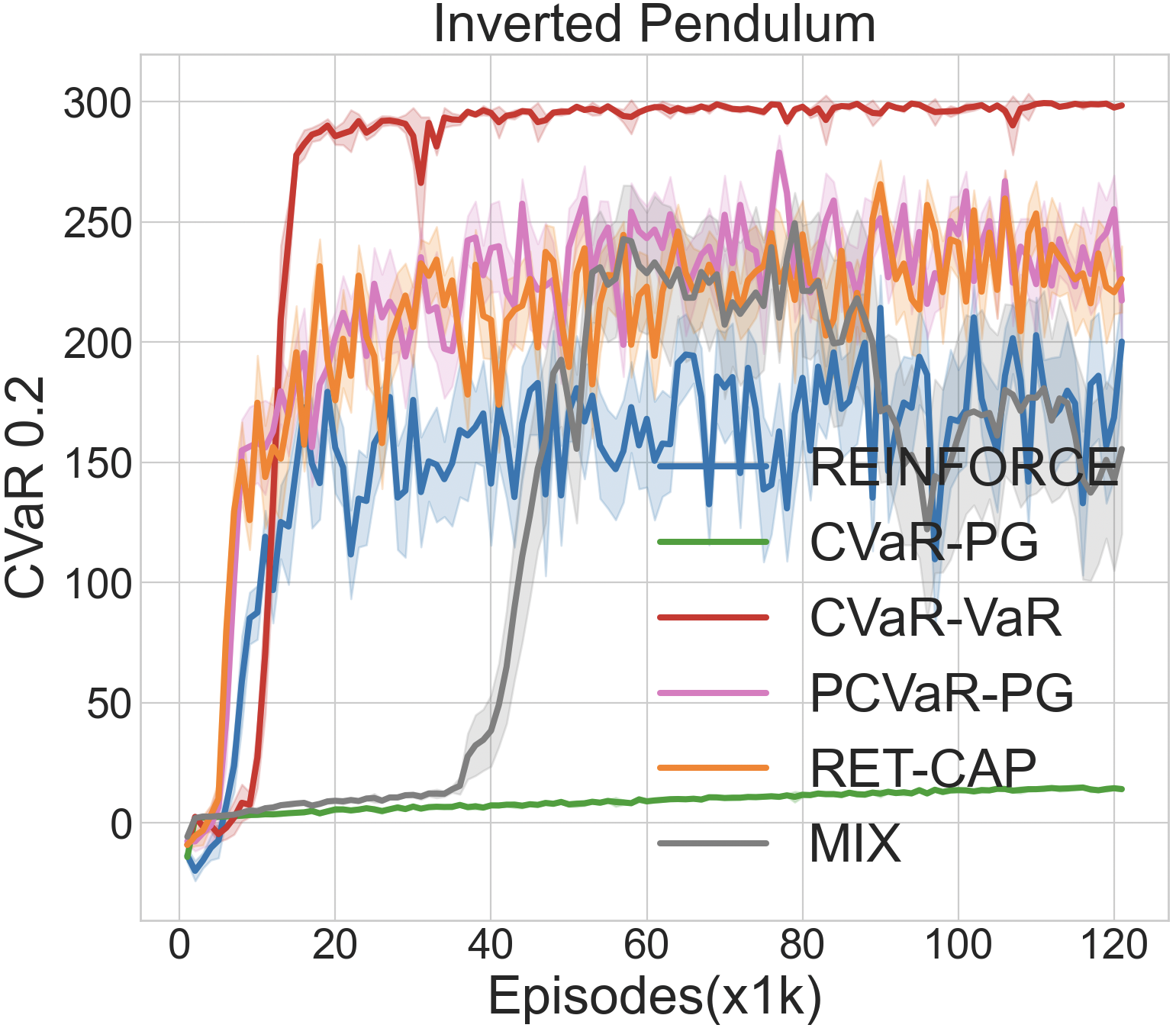}
        \caption{CVaR 0.2 of return in Pendulum}
        \label{fig:sub8}
    \end{subfigure}

    \caption{Expected return, risk-averse rate, and CVaR 0.2 of return in LunarLander and Inverted Pendulum. Curves are averaged over 10 seeds with shaded regions indicating standard errors.}
    \label{fig:ll-ivp-res}
    \vspace{-0.2in}
\end{figure*}

\subsection{LunarLander}
This domain is shown in Fig.~\ref{fig:envs}b. The goal in this domain is to control the engine of the lander to safely land on the ground without crashing. We refer readers to its official
documents for a full description. Originally, successful landing receives a $100$ reward. Here, we split the ground into left and right parts by the middle line of the landing pad. If landing on the right region, an additional noisy reward $\mathcal{N}(0,1)\times 100$ is given. Therefore, a risk-averse agent should learn to land on the left as much as possible. The maximum episode length is 500. We set CVaR $\alpha=0.2$.

The return and left landing rate are shown in Fig.~\ref{fig:sub3} and \ref{fig:sub4}. We also report the $0.2$-CVaR of return in Fig.~\ref{fig:sub5}. PCVaR-PG fails to learn a reasonable policy in this domain and is therefore omitted from the plots. Compared with CVaR-PG, both RET-CAP and CVaR-VaR quickly achieve high returns, with CVaR-VaR attaining slightly higher performance than RET-CAP. MIX provides only marginal improvement over CVaR-PG, consistent with the results reported in~\citet{luo2024simple}. In terms of risk-aversion rate, only CVaR-VaR approaches a left-landing rate close to one in the end. Consequently, CVaR-VaR also attains the highest $0.2$-CVaR of return among all methods.

\subsection{InvertedPendulum}
This domain is shown in Fig.~\ref{fig:envs}c. The goal is to keep the inverted pendulum upright (within a certain angle limit) as long as possible. The original per-step reward is 1. We add a noise $\mathcal{N}(0,1)\times10$ to the pre-step reward if X-position $>0.01$. Thus, a risk-averse agent should learn to balance the pendulum while stay away from the noisy reward region. The maximum episode length is 300. We set CVaR $\alpha=0.2$.

The return, X$<0.01$ rate, and $0.2$-CVaR of return are presented in Fig.~\ref{fig:sub6}, \ref{fig:sub7}, and \ref{fig:sub8}. CVaR-PG learns relatively slowly in this domain compared with others methods, although it demonstrates strong risk-averse behavior (a high X$<0.01$ rate). PCVaR-PG and RET-CAP achieve high returns, however, they fail to consistently avoid the noisy reward region. MIX accelerates learning relative to CVaR-PG but remains slower than the remaining approaches; as a hybrid of online and offline learning method, its performance is sensitive to the choice of hyperparameters. By contrast, CVaR-VaR achieves both high return and strong risk aversion, so it also reflects a high $0.2$-CVaR of return.
\section{Conclusion and Future Work}
This paper proposes augmenting CVaR optimization with VaR to improve sample efficiency. It is motivated by recent advances in dynamic programming methods for VaR optimization, as well as the observation that CVaR corresponds to the expectation of VaR over the tail of the return distribution. We first develop a VaR Bellman operator that yields an actor–critic algorithm for VaR, and then adapt it to operate within the Markovian policy class. Empirically, We show that our method can succeed when others fail to learn a risk-averse or a reasonable policy by mitigating the sample efficiency issue.

\textbf{Limitations and future work}. As discussed, the objective in Eq.~\ref{eq:maximum-barpi} depends on the value function associated with the optimal Markovian VaR policy, and its convergence properties when initialized from an arbitrary value function remain unknown. In addition, our method could potentially be integrated with other techniques aimed at enhancing sample efficiency e.g.,~\citet{greenberg2022efficient,mead2025return}. Observing the two limitations, analyzing the convergence behavior of the proposed Markovian VaR algorithm, as well as possible integration with existing methods to improve sample efficiency, remain valuable for future work.

\section*{Impact Statement}
This paper presents work whose goal is to advance the field of risk-averse reinforcement learning. There are many potential societal consequences of our work, none which we feel must be specifically highlighted here.

\section*{Acknowledgements}

Y. Luo was partially funded by FRQNT, and IVADO.
E. Delage was partially supported by the Canadian Natural Sciences and Engineering
Research Council [Grant RGPIN-2022-05261] and by the Canada Research Chair program [950-
230057].Finally, this research was enabled in part by support provided by the Digital Research Alliance of Canada (\url{https://www.alliancecan.ca/}).

\nocite{langley00}

\bibliography{reference}
\bibliographystyle{icml2026}

\newpage
\appendix
\onecolumn
\section{Proofs.}
We consider the soft quantile loss $l^\kappa_\alpha(\cdot)$ defined as in \citet{hau2025q} for proofs. We also clip the $\alpha$ such that it lies in range $[\epsilon,1-\epsilon]$ with small $\epsilon$, i.e., $l^\kappa_{\max(\epsilon,\min(1-\epsilon,\alpha))}(\cdot)$, denoted as $l^\kappa_{\epsilon,\alpha}(\cdot)$. When $\epsilon= 0$ and $\kappa=0$, it becomes the quantile regression loss $l_\alpha(\cdot)$. We also need the derivative of $l_\alpha^\kappa(\cdot)$, denoted as $\partial l_\alpha^\kappa(\cdot)$.
\begin{equation}
\begin{alignedat}{2}
    l_\alpha^\kappa(\delta)=
\left\{
\begin{alignedat}{2}
 &\frac{(1-\alpha)\kappa}{2}((\delta+\kappa)^2-\frac{2\delta}{\kappa}-1) &&~\text{if}~\delta<-\kappa \\
&(1-\alpha)\frac{\delta^2}{2\kappa} &&~\text{if}~\delta\in[-\kappa,0)\\
& \alpha \frac{\delta^2}{2\kappa} &&~\text{if}~\delta\in[0,\kappa)\\
& \frac{\alpha\kappa}{2}((\delta-\kappa)^2 + \frac{2\delta}{\kappa}-1) &&~\text{if}~\delta\geq\kappa.
\end{alignedat}
\right.
&\qquad
\begin{alignedat}{2}
    \partial l_\alpha^\kappa(\delta)=
\left\{
\begin{alignedat}{2}
 &(1-\alpha)(\kappa \delta+\kappa^2-1) &&~\text{if}~\delta<-\kappa \\
&\frac{1-\alpha}{\kappa} \delta &&~\text{if}~\delta\in[-\kappa,0)\\
& \frac{\alpha}{\kappa} \delta &&~\text{if}~\delta\in[0,\kappa)\\
& \alpha(\kappa \delta - \kappa^2 + 1)&&~\text{if}~\delta\geq\kappa.
\end{alignedat}
\right.
\end{alignedat}
\end{alignedat}
\end{equation}
From the definition of $\partial l^\kappa_\alpha(\cdot)$, we know the derivative of $\partial l^\kappa_\alpha(\cdot)$ lies in the range $[\min\{\alpha,1-\alpha\}\kappa, \max\{\alpha,1-\alpha\}\kappa^{-1}]$. Therefore, the derivative of $\partial l^\kappa_{\epsilon,\alpha}(\cdot)$ lies in the range $[\epsilon\kappa, (1-\epsilon)\kappa]$. 


\subsection{Fixed point of nested VaR Bellman optimality equation}\label{app:VaRinfHor}

\begin{proposition}\label{thm:BfixedPoint}
The Bellman optimality equation
\begin{equation}
\label{eq:qdp-nested2}
    q(s,\alpha,a)=\mathrm{VaR}_\alpha[\tilde{r}(s,a) + \gamma \max_{a'} q(\tilde{s}',\tilde{u},a')]
    \vspace{-0.05in}
\end{equation}
    has a unique fixed point at $q^*$.
\end{proposition}

\begin{proof}
    We first demonstrate that the Bellman operator
    \[\mathcal{B}q(s,\alpha,a) := \mathrm{VaR}_\alpha[\tilde{r}(s,a) + \gamma \max_{a'} q(\tilde{s}',\tilde{u},a')]\]
    is a $\gamma$-contraction according to the sup-norm and therefore admits a unique fixed point that we denote as $\hat{q}$. Namely, take any two function $q^1$ and $q^2$. One can verify:
    \begin{align*}
        \mathcal{B}&q^1(s,\alpha,a) - \mathcal{B}q^2(s,\alpha,a) \\
        &= \mathrm{VaR}_\alpha[\tilde{r}(s,a) + \gamma \max_{a'} q^1(\tilde{s}',\tilde{u},a')] - \mathrm{VaR}_\alpha[\tilde{r}(s,a) + \gamma \max_{a'} q^2(\tilde{s}',\tilde{u},a')]\\
        &\leq \mathrm{VaR}_\alpha[ \tilde{r}(s,a) + \gamma\max_{a'}(q^2(\tilde{s}',\tilde{u},a')+\|q_1-q_2\|_\infty)]\\
        &\qquad\qquad- \mathrm{VaR}_\alpha[ \tilde{r}(s,a) + \gamma\max_{a'}q^2(\tilde{s}',\tilde{u},a')]\\
        &=\gamma  \|q_1-q_2\|_\infty.
    \end{align*}    

    Next, for any $T>0$, the finite horizon value function:
    \[q_T^{*}(s,\alpha,a):= \max_{\pi\in\Pi_{\mathcal{HD}}}\mathrm{VaR}^\pi_\alpha[\sum_{t=0}^{T-1}\tilde{r}(\tilde{s}_t,\tilde{a}_t)|\tilde{s}_0=s,\tilde{a}_0=a]\]
    is known, based on Theorem 3.2 in \citet{hau2025q}, to satisfy:
    \[q_T^{*}(s,\alpha,a) = \mathcal{B}^T q_0(s,\alpha,a)\]
    with $q_0(s,\alpha,a):=0$. The $\gamma$-contraction property of $\mathcal{B}$ implies that:
    \[\|q_T^{*}-\hat{q}\|_\infty = \|\mathcal{B}^T q_0(s,\alpha,a)-\mathcal{B}^T\hat{q}\|_\infty \leq \gamma^T\|q_0(s,\alpha,a)-\hat{q}\|_\infty = \gamma^T\|\hat{q}\|_\infty.\]
    This confirms that  $\hat{q}(s,\alpha,a)= \lim_{T \rightarrow \infty}q_T^*(s,\alpha,a)=q^*(s,\alpha,a)$.
\end{proof}

\subsection{VaR Bellman Optimality Equations with random reward}
\label{sec:eq-random-reward}

\newcommand{\tildey}{{\tilde{y}}}
\newcommand{\1}{{\mathbf{1}}}
We rederive the VaR Bellman equation presented in  \citet{hau2025q} for the case of finite horizon, and finite state and action space but now with finitely supported random rewards  (rather than deterministic). Namely, let's define:
\[
q_t^*(s,\alpha,a):=\max_{\pi\in\Pi^t}\mathrm{VaR}^\pi_\alpha\Big[\sum_{k=0}^{t-1} \gamma^{k} \tilde{r}(\tilde{s}_{k},\tilde{a}_k)|\tilde{s}_0=s,\tilde{a}_0=a\Big],\]
where $\Pi^t$ is the set of all history dependent policies with a history at time $k$ take the form $(s_0,a_0,r_0,s_1,\dots,s_k)$.  Since the set of deterministic policy is enumerable, an optimal history dependent policy $\pi^*$ must exists and we must have that:
\[q_t^*(s,\alpha,a)=\mathrm{VaR}^{\pi^*}_\alpha\Big[\sum_{k=0}^{t-1} \gamma^{k} \tilde{r}(\tilde{s}_{k},\tilde{a}_k)|\tilde{s}_0=s,\tilde{a}_0=a\Big],\]
thus ensuring that $q_t^*(s,\alpha,a)$ is non-decreasing and right-continuous in $\alpha$.

We also recall a VaR decomposition lemma from \citet{hau2025q}.
\begin{lemma}{[Lemma B.4 of \citet{hau2025q}]}
\label{lemma:buhuiqiming}
    Let $n$ non-decreasing functions $f_i:[0,1]\rightarrow \bar{\mathbb{R}}$ ($\bar{\mathbb{R}}:=\mathbb{R}\cup \{-\infty,+\infty\}$). Let $\tilde{y}$ be a discrete random variable on $1:n$ with probability mass function $p_i=\mathbb{P}[\tilde{y}=i]$, and $\tilde{u}$ an independent random variable with uniform distribution on $[0,1]$, then
    $$
    \mathrm{VaR}_\alpha[f_{\tilde{y}}(\tilde{u})]=\max_{\boldsymbol{\beta}\in\mathcal{B}(\alpha)}\Big\{\min_i \mathrm{VaR}_{\beta_i}[f_i(\tilde{u})]\Big\},
    $$
    where $\mathcal{B}(\alpha):=\{\boldsymbol{\beta}\in [0,1]^n |\sum_{i=1}^n \beta_i p_i \leq \alpha \}$.
\end{lemma}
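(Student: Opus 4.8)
The plan is to prove the identity by establishing the two inequalities separately, using only the characterization of VaR as $\mathrm{VaR}_\alpha[\tilde{x}]=q^+_\alpha(\tilde{x})=\max\{z\in\bar{\mathbb{R}}\mid \mathbb{P}[\tilde{x}<z]\le\alpha\}$ together with the mixture decomposition of the law of $f_{\tilde{y}}(\tilde{u})$. First I would record the two structural facts used throughout. Since $\tilde{y}$ and $\tilde{u}$ are independent, conditioning on $\tilde{y}$ gives $\mathbb{P}[f_{\tilde{y}}(\tilde{u})<z]=\sum_{i=1}^n p_i\,\mathbb{P}[f_i(\tilde{u})<z]$ for every $z$. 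Moreover, because $z\mapsto\mathbb{P}[\tilde{x}<z]$ is non-decreasing and left-continuous, the sublevel set $\{z\mid\mathbb{P}[\tilde{x}<z]\le\alpha\}$ is a closed half-line $(-\infty,q^+_\alpha(\tilde{x})]$; in particular the maximum defining $q^+_\alpha$ is attained, and every $z\le\mathrm{VaR}_{\beta}[\tilde{x}]$ satisfies $\mathbb{P}[\tilde{x}<z]\le\beta$.

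For the direction $\mathrm{RHS}\le\mathrm{LHS}$, I would fix an arbitrary $\boldsymbol{\beta}\in\mathcal{B}(\alpha)$ and set $m:=\min_i\mathrm{VaR}_{\beta_i}[f_i(\tilde{u})]$. Since $m\le\mathrm{VaR}_{\beta_i}[f_i(\tilde{u})]$ for every $i$, the fact above yields $\mathbb{P}[f_i(\tilde{u})<m]\le\beta_i$, and summing against $p_i$ via the mixture decomposition gives $\mathbb{P}[f_{\tilde{y}}(\tilde{u})<m]\le\sum_i p_i\beta_i\le\alpha$. Hence $m$ lies in the sublevel set defining $\mathrm{VaR}_\alpha[f_{\tilde{y}}(\tilde{u})]$, so $m\le\mathrm{VaR}_\alpha[f_{\tilde{y}}(\tilde{u})]$; taking the supremum over $\boldsymbol{\beta}\in\mathcal{B}(\alpha)$ gives the claim.

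For the reverse inequality $\mathrm{LHS}\le\mathrm{RHS}$, I would exhibit a feasible point attaining the value. Writing $v:=\mathrm{VaR}_\alpha[f_{\tilde{y}}(\tilde{u})]$, define $\beta_i^\star:=\mathbb{P}[f_i(\tilde{u})<v]$. The mixture decomposition gives $\sum_i p_i\beta_i^\star=\mathbb{P}[f_{\tilde{y}}(\tilde{u})<v]\le\alpha$, the inequality holding because $v$ itself belongs to the sublevel set; thus $\boldsymbol{\beta}^\star\in\mathcal{B}(\alpha)$. By construction $\mathbb{P}[f_i(\tilde{u})<v]=\beta_i^\star$, so $v$ satisfies (with equality) the constraint defining $\mathrm{VaR}_{\beta_i^\star}[f_i(\tilde{u})]=\max\{z\mid\mathbb{P}[f_i(\tilde{u})<z]\le\beta_i^\star\}$, whence $\mathrm{VaR}_{\beta_i^\star}[f_i(\tilde{u})]\ge v$ for all $i$ and therefore $\min_i\mathrm{VaR}_{\beta_i^\star}[f_i(\tilde{u})]\ge v$. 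Combined with the first direction, this both proves equality and shows the maximum on the right-hand side is attained at $\boldsymbol{\beta}^\star$.

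The main obstacle, and the point I would treat most carefully, is the handling of the VaR definition at its boundary rather than the algebra: I need the sublevel set $\{z\mid\mathbb{P}[\tilde{x}<z]\le\alpha\}$ to be genuinely closed from above so that the defining maximum is attained and the implication ``$z\le\mathrm{VaR}_\beta\Rightarrow\mathbb{P}[\tilde{x}<z]\le\beta$'' is valid. This rests on left-continuity of $z\mapsto\mathbb{P}[\tilde{x}<z]$ and on $v$ being finite, both guaranteed in the finitely supported setting of the lemma (where each $f_i(\tilde{u})$ has a discrete law); the edge cases where some $f_i$ takes the values $\pm\infty$ or where $\beta_i$ hits $0$ or $1$ would need a brief separate check. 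I would also remark that the non-decreasing hypothesis on the $f_i$ is what permits identifying $\mathrm{VaR}_{\beta_i}[f_i(\tilde{u})]$ with $f_i(\beta_i)$, but it is not actually needed for the identity itself, which holds for arbitrary component random variables $f_i(\tilde{u})$.
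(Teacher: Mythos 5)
Your proof is correct, but note there is nothing in the paper to compare it against: the paper states this lemma as an imported result (Lemma B.4 of \citet{hau2025q}) and gives no proof, so your argument serves as a self-contained replacement. The two-sided argument is sound: the sublevel set $\{z\mid \mathbb{P}[\tilde{x}<z]\le\alpha\}$ is indeed a closed half-line by left-continuity of $z\mapsto\mathbb{P}[\tilde{x}<z]$, which gives both directions — feasible $\boldsymbol{\beta}$ yields $\mathbb{P}[f_{\tilde y}(\tilde u)<m]\le\sum_i p_i\beta_i\le\alpha$, and the witness $\beta_i^\star=\mathbb{P}[f_i(\tilde u)<v]$ is feasible and attains the value. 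Your closing observation is also correct and worth keeping: monotonicity of the $f_i$ is never used, since the identity depends only on the marginal laws of the $f_i(\tilde u)$, and the non-decreasing representation is just the quantile-function parameterization of arbitrary distributions. One genuine (if minor) blemish: your edge-case discussion asserts that finiteness of $v$ is ``guaranteed in the finitely supported setting of the lemma (where each $f_i(\tilde u)$ has a discrete law)'' — this misreads the hypothesis. Only $\tilde y$ has finite support; the $f_i$ are arbitrary non-decreasing maps into $\bar{\mathbb{R}}$, so $f_i(\tilde u)$ can have any distribution and the VaRs can equal $\pm\infty$. The correct patch is not finiteness but running the same argument in $\bar{\mathbb{R}}$ throughout: the sublevel sets remain closed half-lines in $\bar{\mathbb{R}}$ (left-continuity extends to $+\infty$ by continuity from below of the measure), maxima are attained in the extended reals, and both inequalities go through verbatim.
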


Letting $\{(r_i,s_i)\}_{i=1}^n$ capture the space of possible distinct realizations of $(\tilde{r}(s,a),s')$ and 
$\tildey$ be a discrete random variable in $\{1,\dots,n\}$ with probability $p_i=P(\tildey=i|s,a)=P((\tilde{r}(s,a),\tilde{s}')=P((s_i,r_i)|s,a)=P(s_i|s,a)P(\tilde{r}(s,a)=r_i|s,a,s_i)$, 
Then, we have:
\begin{align*}
q_t^*(s,\alpha,a)&:=\max_{\pi\in\Pi^t}\mathrm{VaR}^\pi_\alpha\Big[\sum_{k=0}^{t-1} \gamma^{k} \tilde{r}(\tilde{s}_{k},\tilde{a}_k)|\tilde{s}_0=s,\tilde{a}_0=a\Big]\\
&\overset{(a)}{=}\max_{\pi\in\Pi^t}\mathrm{VaR}^\pi_\alpha\Big[\mathrm{VaR}^\pi_{\tilde{u}}[\sum_{k=0}^{t-1} \gamma^{k} \tilde{r}(\tilde{s}_{k},\tilde{a}_k)|\tilde{s}_0=s,\tilde{a}_0=a,\tilde{s}_1=s_
\tildey,\tilde{r}(s,a)=r_\tildey]\Big]\\
&\overset{(b)}{=}\max_{\pi\in\Pi^t}\max_{\boldsymbol{\beta}\in\mathcal{B}} \min_i \mathrm{VaR}_{\beta_i}[\mathrm{VaR}^\pi_{\tilde{u}}[\sum_{k=0}^{t-1} \gamma^{k} \tilde{r}(\tilde{s}_{k},\tilde{a}_k)|\tilde{s}_0=s,\tilde{a}_0=a,\tilde{s}_1=s_i,\tilde{r}(s,a)=r_i]]\\
&\overset{(c)}{=}\max_{\boldsymbol{\beta}\in\mathcal{B}} \max_{\pi\in\Pi^t}\min_i r_i + \gamma \mathrm{VaR}_{\beta_i}[\mathrm{VaR}^\pi_{\tilde{u}}[\sum_{k=1}^{t-1} \gamma^{k-1} \tilde{r}(\tilde{s}_{k},\tilde{a}_k)|\tilde{s}_0=s,\tilde{a}_0=a,\tilde{s}_1=s_i,\tilde{r}(s,a)=r_i]]\\
&\overset{(d)}{=}\max_{\boldsymbol{\beta}\in\mathcal{B}} \min_i \max_{\pi\in\Pi^{t}}r_i + \gamma \mathrm{VaR}_{\beta_i}[\mathrm{VaR}^{\pi}_{\tilde{u}}[\sum_{k=1}^{t-1} \gamma^{k-1} \tilde{r}(\tilde{s}_{k},\tilde{a}_k)|\tilde{s}_0=s,\tilde{a}_0=a,\tilde{s}_1=s_i,\tilde{r}(s,a)=r_i]]\\
&\overset{(e)}{=}\max_{\boldsymbol{\beta}\in\mathcal{B}} \min_i \max_{\pi\in\Pi^{t-1}}r_i + \gamma \mathrm{VaR}_{\beta_i}[\mathrm{VaR}^{\pi}_{\tilde{u}}[\sum_{k=0}^{t-2} \gamma^{k} \tilde{r}(\tilde{s}_{k},\tilde{a}_k)|\tilde{s}_0=s_i]]\\
&\overset{(f)}{=}\max_{\boldsymbol{\beta}\in\mathcal{B}} \min_i \max_a \max_{\pi\in\Pi^{t-1}}r_i + \gamma \mathrm{VaR}_{\beta_i}[\mathrm{VaR}^{\pi}_{\tilde{u}}[\sum_{k=0}^{t-2} \gamma^{k} \tilde{r}(\tilde{s}_{k},\tilde{a}_k)|\tilde{s}_0=s_i,a_0=a]]\\
&\overset{(g)}{=}\max_{\boldsymbol{\beta}\in\mathcal{B}} \min_i \max_a \max_{\pi\in\Pi^{t-1}}r_i + \gamma \mathrm{VaR}_{\beta_i}[\sum_{k=0}^{t-2} \gamma^{k} \tilde{r}(\tilde{s}_{k},\tilde{a}_k)|\tilde{s}_0=s_i,a_0=a]\\
&\overset{(h)}{=}\max_{\boldsymbol{\beta}\in\mathcal{B}} \min_i \max_a r_i + \gamma q_{t-1}^*(s_i,\beta_i,a)\\
&\overset{(i)}{=}\max_{\boldsymbol{\beta}\in\mathcal{B}} \min_i \mathrm{VaR}_{\beta_i}[\max_a r_i + \gamma q_{t-1}^*(s_i,\tilde{u},a)]\\
&\overset{(j)}{=}\mathrm{VaR}_{\alpha}[\tilde{r}(s,a)+ \gamma \max_a q_{t-1}^*(\tilde{s}',\tilde{u},a)],
\end{align*}
where (a) follows from the fact that conditionally on $\tilde{s}_0=s,\tilde{a}_0=a$, we have 
\[\sum_{k=0}^{t-1} \gamma^{k} \tilde{r}(\tilde{s}_{k},\tilde{a}_k) = \mathrm{VaR}^\pi_{\tilde{u}}[\sum_{k=0}^{t-1} \gamma^{k} \tilde{r}(\tilde{s}_{k},\tilde{a}_k)|\tilde{s}_0=s,\tilde{a}_0=a,\tilde{s}_1=s_
\tildey,\tilde{r}(s,a)=r_\tildey]\]
in distribution. (b) follows from Lemma \ref{lemma:buhuiqiming}. (c) follows from the commutativity of maximum operators. (d) follows from the fact that all $\pi\in\Pi^t$ adapt to the realized $(r_i,s_i)$ for actions after actions $a_k$ with $k>0$. (e) follows from the fact that the function being optimized only depends on $\pi_{1:t-1}$ and that $(\tilde{r}_0,\tilde{s}_1)$ are already fixed, so that the trajectory can be reindexed using $k':=k-1$. (f) adds a redundant optimization of action $a_0$ in the reindexed trajectory. (g) replaces the distribution of $\mathrm{VaR}^{\pi}_{\tilde{u}}[\sum_{k=0}^{t-2} \gamma^{k} \tilde{r}(\tilde{s}_{k},\tilde{a}_k)|\tilde{s}_0=s_i,a_0=a]$ with the equivalent conditional distribution of $\sum_{k=0}^{t-2} \gamma^{k} \tilde{r}(\tilde{s}_{k},\tilde{a}_k$ given $\tilde{s}_0=s_i,a_0=a$. (h) exploits the definition of $q^*_{t-1}(s,\alpha,a)$. (i) exploits Lemma B.2 in \citet{hau2025q} and right-continuity of $\max_a r_i+\gamma q^*_{t-1}(s,\alpha,a)$ in $\alpha$. Finally, (j) reuses Lemma \ref{lemma:buhuiqiming}.

\subsection{Proof of Proposition~\ref{prop:contraction}}
\Propcontraction*
\begin{proof}
$$
\begin{aligned}
&\mathcal{T}_{\epsilon,\kappa}^*v_1(s,\alpha) - \mathcal{T}_{\epsilon,\kappa}^* v_2(s,\alpha)\\
=& v_1(s,\alpha) +\max_a \eta \mathbb{E}\Big[\partial l^\kappa_{\max(\epsilon,\min(1-\epsilon,\alpha))}(\tilde{r}(s,a)+\gamma v_1(\tilde{s}',\tilde{u})-v_1(s,\alpha))\Big] - v_2(s,\alpha)\\
&-\max_a \eta \mathbb{E}\Big[\partial l^\kappa_{\max(\epsilon,\min(1-\epsilon,\alpha))}(\tilde{r}(s,a)+\gamma v_2(\tilde{s}',\tilde{u})-v_2(s,\alpha))\Big]\\
\overset{(a)}{\leq}& v_1(s,\alpha)-v_2(s,\alpha) + \max_a \eta\mathbb{E}\Big[\partial l^\kappa_{\max(\epsilon,\min(1-\epsilon,\alpha))}(\tilde{r}(s,a)+\gamma v_1(\tilde{s}',\tilde{u})-v_1(s,\alpha))\\
&-\partial l^\kappa_{\max(\epsilon,\min(1-\epsilon,\alpha))}(\tilde{r}(s,a)+\gamma v_2(\tilde{s}',\tilde{u})-v_2(s,\alpha))\Big]\\
\overset{(b)}{=}&v_1(s,\alpha)-v_2(s,\alpha) + \max_a\eta\mathbb{E}\Big[\xi^\epsilon(s,a,\tilde{r},\tilde{s}',\tilde{u},v_1,v_2)(\gamma v_1(\tilde{s}',\tilde{u})-\gamma v_2(\tilde{s}',\tilde{u})-v_1(s,\alpha)+v_2(s,\alpha))\Big], \\
&\epsilon\kappa \leq \xi^\epsilon(s,a,\tilde{r},\tilde{s}',\tilde{u}, v_1, v_2) \leq (1-\epsilon)\kappa^{-1}\\
=&\max_a \mathbb{E}\Big[\big(1-\eta \xi^\epsilon(s,a,\tilde{r},\tilde{s}',\tilde{u},v_1,v_2)\big) (v_1(s,\alpha)-v_2(s,\alpha)) + \eta \xi^\epsilon(s,a,\tilde{r},\tilde{s}',\tilde{u},v_1,v_2) \gamma \big(v_1(\tilde{s}',\tilde{u})-v_2(\tilde{s}',\tilde{u})\big)\Big]\\
\overset{(c)}{\leq}& \max_a \mathbb{E}\Big[\big(1-\eta \xi^\epsilon(s,a,\tilde{r},\tilde{s}',\tilde{u},v_1,v_2)\big) ||v_1-v_2||_\infty + \eta \xi^\epsilon(s,a,\tilde{r},\tilde{s}',\tilde{u},v_1,v_2) \gamma ||v_1-v_2||_\infty\Big] \\
=& \max_a \mathbb{E}\Big[\big(1-\eta \xi^\epsilon(s,a,\tilde{r},\tilde{s}',\tilde{u},v_1,v_2)(1-\gamma)\big) ||v_1-v_2||_\infty\Big]\\
\leq& \big(1-\eta\epsilon\kappa (1-\gamma)\big) ||v_1-v_2||_\infty.
\end{aligned}
$$
$(a)$ is due to the fact that $\max_x f(x)-\max_x g(x)\leq \max_x (f(x)-g(x))$. $(b)$ is because  the derivative of $\partial l^\kappa_{\max(\epsilon,\min(1-\epsilon,\alpha))}(\cdot)$ satisfies $0<\epsilon\kappa \leq  \partial^2 l^\kappa_{\max(\epsilon,\min(1-\epsilon,\alpha))} \leq (1-\epsilon)\kappa^{-1}$. As a result, there exists $\xi \in[\epsilon\kappa, (1-\epsilon)\kappa^{-1}]$ such that $\partial l^{\kappa}_{\max(\epsilon,\min(1-\epsilon,\alpha))}(x)-\partial l^{\kappa}_{\max(\epsilon,\min(1-\epsilon,\alpha))}(y) = \xi (x-y)$. \textcolor{black}{Here the value of $\xi$ depends on $s,a,\tilde{r},\tilde{s}',\tilde{u}, v_1,v_2$, so we represent it as a function.} $(c)$ follows from $0< \eta\leq \kappa\leq \kappa/\max(\epsilon,(1-\epsilon))$ so that $\eta\xi^\epsilon(s,a,\tilde{r},\tilde{s}',\tilde{u},v_1,v_2)\in[0,1]$ almost surely.
By symmetry, we have
$$
|\mathcal{T}_{\epsilon,\kappa}^*v_1(s,\alpha)-\mathcal{T}_{\epsilon,\kappa}^* v_2(s,\alpha)|\leq \big(1-\eta \epsilon\kappa(1-\gamma)\big)||v_1-v_2||_\infty,
$$
Note that when $\epsilon= 0$, the contraction ratio will be $1$. This is the reason why we clip $\alpha\in[\epsilon, 1-\epsilon]$ for $l^\kappa_\alpha$. In practice, when discretizing the quantile range, e.g. as the case in QR-DQN~\cite{dabney2018distributional}, the quantile levels considered are guaranteed to lie in a range $[\epsilon, 1-\epsilon]$.

Choosing $0<\eta\leq \kappa \leq 1 \leq\frac{1}{\kappa\epsilon(1-\gamma)}$ yields a contraction. 
\end{proof}

\subsection{Proof of Proposition~\ref{prop:fix-point}}
\Propvstar*


\begin{proof}
Noticing that when $\epsilon= 0$ and $\kappa =0$, the soft quantile loss function $l^\kappa_{\epsilon,\alpha}(\cdot)=l^\kappa_{\max(\epsilon,\min(1-\epsilon,\alpha))}(\cdot)=l_\alpha(\cdot)$, we show that if $\tilde{r}(s,a)+\gamma  v^*(\tilde{s}',\tilde{u})$ has a density for all $v$ and $(s,a)$ (e.g. the random reward is composed of an independent additive term with a density), then $v^*$ is the unique fixed point of 
\begin{equation}
\mathcal{T}_{0,0}^* v(s,\alpha):=v(s,\alpha)+\eta\max_a \mathbb{E}[\partial l_\alpha(\tilde{r}(s,a)+\gamma  v(\tilde{s}',\tilde{u})-v(s,\alpha))].\label{eq:T00BellmanEq}    
\end{equation}
with $\partial l_\alpha(\delta):=\alpha-\mathbb{I}\{\delta<0\}$.

First, we show that $v^*$ satisfies Eq.~\ref{eq:T00BellmanEq}. This is equivalent to showing that for all $(s,a)$:    \begin{equation}
    \label{eq:proof-zero}
        \max_a \mathbb{E}\Big[\partial l_\alpha\big(\tilde{r}(s,a)+\gamma v^*(\tilde{s}',\tilde{u})-v^*(s,\alpha)\big)\Big]=0.
    \end{equation}
    According to Eq.~\ref{eq:qdp-nested}, we have 
    \begin{equation}
    \label{eq:var-optimal-v}
        v^*(s,\alpha)=\max_a q^*(s,\alpha,a)=\max_a \mathrm{VaR}_\alpha[\tilde{r}(s,a)+\gamma v^*(\tilde{s}',\tilde{u})],
    \end{equation}
which implies that 
    \begin{equation}
    \label{eq:v-star-geq}
        v^*(s,\alpha) \geq q^*(s,\alpha,a):=\mathrm{VaR}_\alpha[\tilde{r}(s,a) + \gamma v^*(\tilde{s}',\tilde{u})], ~\forall a,
    \end{equation}
Recalling the elicitability of quantiles, Eq. \ref{eq:v-star-geq} implies that for any $a$:
\[\arg
\min_y \mathbb{E}[l_\alpha(\tilde{r}(s,a)+\gamma v^*(\tilde{s}',\tilde{u})-y)] \leq \mathrm{VaR}_\alpha[\tilde{r}(s,a)+\gamma v^*(\tilde{s}',\tilde{u})]\leq v^*(s,\alpha),\]
which in turn, by convexity of $f(s,\alpha,a,y):=\mathbb{E}[l_\alpha(\tilde{r}(s,a)+\gamma v^*(\tilde{s}',\tilde{u})-y)]$ in $y$, lets us establish that:
    \begin{equation*}
        \nabla_y\mathbb{E}\Big[l_\alpha\big(\tilde{r}(s,a)+\gamma v^*(\tilde{s}',\tilde{u})-y\big)\Big]\Big|_{y=v^*(s,\alpha)} =\nabla_y f(s,\alpha,a,y)\Big|_{y=v^*(s,\alpha)} \geq \nabla_y f(s,\alpha,a,y)\Big|_{y=q^*(s,\alpha,a)}= 0, ~\forall a,
    \end{equation*}
where the gradient of $f(s,\alpha,a,y)$ exist for all $y$ since $\tilde{r}(s,a)+\gamma v^*(\tilde{s}',\tilde{u})$ is assumed to have a density.
    
As $\nabla_y l_\alpha(\tilde{x}-y)=-\partial l_\alpha(\tilde{x}-y)$ almost everywhere, we have
\begin{equation*}
    \mathbb{E}\Big[\partial l_\alpha\big(\tilde{r}(s,a)+\gamma v^*(\tilde{s}',\tilde{u})-v^*(s,\alpha)\big)\Big] = - \nabla_y\mathbb{E}\Big[l_\alpha\big(\tilde{r}(s,a)+\gamma v^*(\tilde{s}',\tilde{u})-y\big)\Big]\Big|_{y=v^*(s,\alpha)} \leq 0, ~\forall a
\end{equation*}
The zero value is achieved when action is the optimal action $a^*=\arg\max_a q^*(s,\alpha,a)$. Therefore, Eq.~\ref{eq:proof-zero} holds.

Second, we show that $v^*$ is unique. Take any $\bar{v}\neq v^*$ is another fixed point of Eq.~\ref{eq:T00BellmanEq}, we must have 
that 
\[\mathbb{E}\Big[\partial l_\alpha\big(\tilde{r}(s,\bar{\pi}(s,\alpha))+\gamma \bar{v}(\tilde{s}',\tilde{u})-\bar{v}(s,\alpha)\big)\Big]=0,\]
when $\bar{\pi}(s,\alpha):=\arg\max_a \mathbb{E}\Big[\partial l_\alpha\big(\tilde{r}(s,a)+\gamma \bar{v}(\tilde{s}',\tilde{u})-\bar{v}(s,\alpha)\big)\Big]$ thus implying that:
\[\bar{v}(s,\alpha) = \mathrm{VaR}_\alpha^{\bar{\pi}}[\tilde{r}(s,\bar{\pi}(s,\alpha))+\gamma \bar{v}(\tilde{s}',\tilde{u})].\]

One also has that $\bar{v}$ does not satisfy Eq.\ref{eq:var-optimal-v}, otherwise Eq.~\ref{eq:var-optimal-v} would have two fixed points. So there must exist a $(\bar{s},\bar{\alpha})$ such that
$$
\bar{v}(\bar{s},\bar{\alpha})\neq \max_a \mathrm{VaR}_\alpha[\tilde{r}(\bar{s},a) + \gamma \bar{v}(\tilde{s}',\tilde{u})]\geq \mathrm{VaR}_\alpha^{\bar{\pi}}[\tilde{r}(s,\bar{\pi}(s,\alpha))+\gamma \bar{v}(\tilde{s}',\tilde{u})] = \bar{v}(s,\alpha).
$$
This necessarily means that there exists $\bar{a}$ such that:
\[\bar{v}(\bar{s},\bar{\alpha})< \bar{q}(\bar{s},\bar{\alpha},\bar{a}):=\mathrm{VaR}_{\bar{\alpha}}[\tilde{r}(\bar{s},\bar{a}) + \gamma \bar{v}(\tilde{s}',\tilde{u})].\]
By convexity of $\bar{f}(s,\alpha,a,y):=\mathbb{E}[l_\alpha(\tilde{r}(s,a)+\gamma \bar{v}(\tilde{s}',\tilde{u})-y)]$ and uniqueness of the minimizer $\mathrm{VaR}_{\bar{\alpha}}[\tilde{r}(\bar{s},\bar{a}) + \gamma \bar{v}(\tilde{s}',\tilde{u})]=\arg\min_y \bar{f}(\bar{s},\bar{\alpha},\bar{a},y)$, we conclude that 
\[\nabla_y \bar{f}(\bar{s},\bar{\alpha},\bar{a},y)\Big|_{y=\bar{v}(\bar{s},\bar{\alpha})} < \nabla_y \bar{f}(\bar{s},\bar{\alpha},\bar{a},y)\Big|_{y=\bar{q}(\bar{s},\bar{\alpha},\bar{a})}=0.\]

This contradicts the fact that $\bar{v}$ satisfies  Eq.~\ref{eq:T00BellmanEq} since:
\[0<-\nabla_y \bar{f}(\bar{s},\bar{\alpha},\bar{a},y)\Big|_{y=\bar{v}(\bar{s},\bar{\alpha})}=\mathbb{E}\Big[\partial l_{\bar{\alpha}}\big(\tilde{r}(\bar{s},\bar{a})+\gamma \bar{v}(\tilde{s}',\tilde{u})-\bar{v}(\bar{s},\bar{\alpha})\big)\Big]\leq \max_a \mathbb{E}\Big[\partial l_{\bar{\alpha}}\big(\tilde{r}(\bar{s},a)+\gamma \bar{v}(\tilde{s}',\tilde{u})-\bar{v}(\bar{s},\bar{\alpha})\big)\Big],\]
which would imply that:
\[\mathcal{T}_{0,0}^*\bar{v}(\bar{s},\bar{\alpha})  = \bar{v}(\bar{s},\bar{\alpha})+\eta\max_a \mathbb{E}[\partial l_\alpha(\tilde{r}(\bar{s},a)+\gamma  \bar{v}(\tilde{s}',\tilde{u})-\bar{v}(\bar{s},\bar{\alpha}))] > \bar{v}(\bar{s},\bar{\alpha}) .\]
\end{proof}

\subsection{Analysis for $\mathcal{T}_{\epsilon,\kappa}^{\hat{\pi}}$}
\label{app:T-pi}
\begin{proposition}\label{thm:TpiContraction}
    $\mathcal{T}_{\epsilon,\kappa}^{\hat{\pi}}$ is a contraction mapping for $v$  with step size $\eta\in(0,\kappa]$.
\end{proposition}
\begin{proof}
The proof is similar to the proof of proposition~\ref{prop:contraction}.
$$
\begin{aligned}
&\mathcal{T}_{\epsilon,\kappa}^{\hat{\pi}}v_1(s,\alpha) - \mathcal{T}_{\epsilon,\kappa}^{\hat{\pi}} v_2(s,\alpha)\\
=& v_1(s,\alpha) + \eta \mathbb{E}\Big[\partial l^\kappa_{\max(\epsilon,\min(1-\epsilon,\alpha))}(\tilde{r}(s,\tilde{a})+\gamma v_1(\tilde{s}',\tilde{u})-v_1(s,\alpha))\Big] - v_2(s,\alpha)\\
&- \eta \mathbb{E}\Big[\partial l^\kappa_{\max(\epsilon,\min(1-\epsilon,\alpha))}(\tilde{r}(s,\tilde{a})+\gamma v_2(\tilde{s}',\tilde{u})-v_2(s,\alpha))\Big]\\
=& v_1(s,\alpha)-v_2(s,\alpha) +  \eta\mathbb{E}\Big[\partial l^\kappa_{\max(\epsilon,\min(1-\epsilon,\alpha))}(\tilde{r}(s,\tilde{a})+\gamma v_1(\tilde{s}',\tilde{u})-v_1(s,\alpha))\\
&-\partial l^\kappa_{\max(\epsilon,\min(1-\epsilon,\alpha))}(\tilde{r}(s,\tilde{a})+\gamma v_2(\tilde{s}',\tilde{u})-v_2(s,\alpha))\Big]\\
\overset{(a)}{=}&v_1(s,\alpha)-v_2(s,\alpha) + \eta\mathbb{E}\Big[\xi^\epsilon(s,\tilde{a},\tilde{r},\tilde{s}',\tilde{u},v_1,v_2)(\gamma v_1(\tilde{s}',\tilde{u})-\gamma v_2(\tilde{s}',\tilde{u})-v_1(s,\alpha)+v_2(s,\alpha))\Big], \\
&\epsilon\kappa \leq \xi^\epsilon(s,\tilde{a},\tilde{r},\tilde{s}',\tilde{u}, v_1, v_2) \leq (1-\epsilon)\kappa^{-1}\\
=& \mathbb{E}\Big[\big(1-\eta \xi^\epsilon(s,\tilde{a},\tilde{r},\tilde{s}',\tilde{u},v_1,v_2)\big) (v_1(s,\alpha)-v_2(s,\alpha)) + \eta \xi^\epsilon(s,\tilde{a},\tilde{r},\tilde{s}',\tilde{u},v_1,v_2) \gamma \big(v_1(\tilde{s}',\tilde{u})-v_2(\tilde{s}',\tilde{u})\big)\Big]\\
\overset{(b)}{\leq}&  \mathbb{E}\Big[\big(1-\eta \xi^\epsilon(s,\tilde{a},\tilde{r},\tilde{s}',\tilde{u},v_1,v_2)\big) ||v_1-v_2||_\infty + \eta \xi^\epsilon(s,\tilde{a},\tilde{r},\tilde{s}',\tilde{u},v_1,v_2) \gamma ||v_1-v_2||_\infty\Big] \\
=&  \mathbb{E}\Big[\big(1-\eta \xi^\epsilon(s,\tilde{a},\tilde{r},\tilde{s}',\tilde{u},v_1,v_2)(1-\gamma)\big) ||v_1-v_2||_\infty\Big]\\
\leq& \big(1-\eta\epsilon\kappa (1-\gamma)\big) ||v_1-v_2||_\infty.
\end{aligned}
$$
$(a)$ is because  the derivative of $\partial l^\kappa_{\max(\epsilon,\min(1-\epsilon,\alpha))}(\cdot)$ satisfies $0<\epsilon\kappa \leq  \partial^2 l^\kappa_{\max(\epsilon,\min(1-\epsilon,\alpha))} \leq (1-\epsilon)\kappa^{-1}$. As a result, there exists $\xi \in[\epsilon\kappa, (1-\epsilon)\kappa^{-1}]$ such that $\partial l^{\kappa}_{\max(\epsilon,\min(1-\epsilon,\alpha))}(x)-\partial l^{\kappa}_{\max(\epsilon,\min(1-\epsilon,\alpha))}(y) = \xi (x-y)$. {Here the value of $\xi$ depends on $s,a,\tilde{r},\tilde{s}',\tilde{u}, v_1,v_2$, so we represent it as a function.} $(b)$ follows from $0< \eta\leq \kappa\leq \kappa/\max(\epsilon,(1-\epsilon))$ so that $\eta\xi^\epsilon(s,\tilde{a},\tilde{r},\tilde{s}',\tilde{u},v_1,v_2)\in[0,1]$ almost surely.
By symmetry, we have
$$
|\mathcal{T}_{\epsilon,\kappa}^{\hat{\pi}}v_1(s,\alpha)-\mathcal{T}_{\epsilon,\kappa}^{\hat{\pi}} v_2(s,\alpha)|\leq \big(1-\eta \epsilon\kappa(1-\gamma)\big)||v_1-v_2||_\infty,
$$
Choosing $0<\eta\leq \kappa \leq 1 \leq\frac{1}{\kappa\epsilon(1-\gamma)}$ yields a contraction. 
\end{proof}

Next, similar to Proposition~\ref{prop:fix-point}, we show that the fixed point of Eq.~\ref{eq:var-pe} is the same as the fixed point of  Eq.~\ref{eq:qdp-nested3} when $\epsilon=0$ and $\kappa= 0$.

\begin{proposition}\label{prop:BellPolicyEval}
The Bellman equation
\begin{equation}
\label{eq:qdp-nested3}
    v(s,\alpha)=\mathrm{VaR}_\alpha^{\hat{\pi}}[\tilde{r}(s,\tilde{a}) + \gamma v(\tilde{s}',\tilde{u})]
\end{equation}
    where $\tilde{a}\sim\hat{\pi}(s,\alpha)$ has a unique fixed, denoted as  $\mathfrak{v}^{\hat{\pi}}$. Moreover, if $\hat{\pi}$ is \textcolor{black}{independent of} $\alpha$, then $\mathfrak{v}^{\hat{\pi}}(s,\alpha)=\mathrm{VaR}_\alpha^{\hat{\pi}}[\sum_{t=0}^{\infty}\tilde{r}(\tilde{s}_t,\tilde{a}_t)|\tilde{s}_0=s]$, where $\tilde{a}_t\sim\hat{\pi}(\tilde{s}_t,\bar{\alpha})$, \textcolor{black}{and $\hat{\pi}(s,\bar{\alpha})$ is identical for any $\bar{a}\in[0,1]$.}
\end{proposition}

\begin{proof}
    We first demonstrate that the Bellman operator
    \[\mathcal{B}^{\hat{\pi}}v(s,\alpha) := \mathrm{VaR}_\alpha^{\hat{\pi}}[\tilde{r}(s,\tilde{a}) + \gamma v(\tilde{s}',\tilde{u})]\]
    is a $\gamma$-contraction according to the sup-norm and therefore admits a unique fixed point that we denote as $\mathfrak{v}^{\hat{\pi}}$. Namely, take any two function $v_1$ and $v_2$. One can verify
    \begin{align*}
        \mathcal{B}^{\hat{\pi}}&v_1(s,\alpha) - \mathcal{B}^{\hat{\pi}}v_2(s,\alpha) \\
        &= \mathrm{VaR}_\alpha^{\hat{\pi}}[\tilde{r}(s,\tilde{a}) + \gamma v_1(\tilde{s}',\tilde{u})] - \mathrm{VaR}_\alpha^{\hat{\pi}}[\tilde{r}(s,\tilde{a}) + \gamma v_2(\tilde{s}',\tilde{u})]\\
        &\leq \mathrm{VaR}_\alpha^{\hat{\pi}}[ \tilde{r}(s,\tilde{a}) + \gamma(v_2(\tilde{s}',\tilde{u})+\|v_1-v_2\|_\infty)]\\
        &\qquad\qquad- \mathrm{VaR}_\alpha^{\hat{\pi}}[ \tilde{r}(s,\tilde{a}) + \gamma v_2(\tilde{s}',\tilde{u})]\\
&=\gamma  \|v_1-v_2\|_\infty.
\end{align*}

    Next, we consider the case where $\hat{\pi}$ is independent of $\alpha$. For any $T>0$, one can show that the finite horizon value function:
    \begin{align*}
    v_T^{\hat{\pi}}(s,\alpha)&:= \mathrm{VaR}^{\hat{\pi}}_\alpha[\sum_{t=0}^{T-1}\gamma^t\tilde{r}(\tilde{s}_t,\tilde{a}_t)|\tilde{s}_0=s]\\
    &\overset{(a)}{=}\mathrm{VaR}^{\hat{\pi}}_\alpha[\mathrm{VaR}^{\hat{\pi}}_{\tilde{u}}[\sum_{t=0}^{T-1}\gamma^t\tilde{r}(\tilde{s}_t,\tilde{a}_t)|\tilde{s}_{0:1},\tilde{a}_{0},\tilde{r}(\tilde{s}_0,\tilde{a}_0)]|\tilde{s}_0=s]\\
    &=\mathrm{VaR}^{\hat{\pi}}_\alpha[\tilde{r}(s,\tilde{a})+\gamma\mathrm{VaR}^{\hat{\pi}}_{\tilde{u}}[\sum_{t=1}^{T-1}\gamma^{t-1}\tilde{r}(\tilde{s}_t,\tilde{a}_t)|\tilde{s}_{0:1},\tilde{a}_{0},\tilde{r}(\tilde{s}_0,\tilde{a}_0)]|\tilde{s}_0=s]\\
    &=\mathrm{VaR}^{\hat{\pi}}_\alpha[\tilde{r}(s,\tilde{a})+\gamma\mathrm{VaR}^{\hat{\pi}}_{\tilde{u}}[\sum_{t=1}^{T-1}\gamma^{t-1}\tilde{r}(\tilde{s}_t,\tilde{a}_t)|\tilde{s}_{1}]|\tilde{s}_0=s]\\    
    &=\mathcal{B}^{\hat{\pi}}v_{T-1}^{\hat{\pi}}(s,\alpha),
    \end{align*}
where (a) follows from $X$ having the same distribution as $\mathrm{VaR}_{\tilde{u}}[X|Y]$ due to the inverse probability integral transform theorem, applied conditionally. This allows us to conclude that:
    \[v_T^{\hat{\pi}}(s,\alpha) = (\mathcal{B}^{\hat{\pi}})^T v_0(s,\alpha)\]
    with $v_0(s,\alpha):=0$. The $\gamma$-contraction property of $\mathcal{B}$ implies that:
    \[\|v_T^{*}-\mathfrak{v}^{\hat{\pi}}\|_\infty = \|(\mathcal{B}^{\hat{\pi}})^T v_0-(\mathcal{B}^{\hat{\pi}})^T\mathfrak{v}^{\hat{\pi}}\|_\infty \leq \gamma^T\|v_0-\mathfrak{v}^{\hat{\pi}}\|_\infty = \gamma^T\|\mathfrak{v}^{\hat{\pi}}\|_\infty.\]
    This confirms that  $ \lim_{T \rightarrow \infty}v_T^{\hat{\pi}}(s,\alpha)=\mathfrak v^{{\pi}}(s,\alpha)$, which means that $\mathfrak{v}^{\hat{\pi}}(s,\alpha)=\mathrm{VaR}_\alpha^{\hat{\pi}}[\sum_{t=0}^\infty \tilde{r}(\tilde{s}_t,\tilde{a}_t)|\tilde{s}_0=s]$.
\end{proof}

\begin{proposition}
    The quantile value $\mathfrak{v}^{\hat{\pi}}(s,\alpha)$ identified in Proposition \ref{prop:BellPolicyEval}
    is the unique fixed point of $\mathcal{T}_{\epsilon,\kappa}^{\hat{\pi}}$ when $\epsilon=0$ and $\kappa= 0$.    
\end{proposition}

\begin{proof}
    Noticing that when $\epsilon= 0$ and $\kappa =0$, the soft quantile loss function $l^\kappa_{\epsilon,\alpha}(\cdot)=l^\kappa_{\max(\epsilon,\min(1-\epsilon,\alpha))}(\cdot)=l_\alpha(\cdot)$, we show that if $\tilde{r}(s,a)+\gamma  v(\tilde{s}',\tilde{u})$ has a  density and unique quantile for all $v$ and $(s,a)$ (e.g. the random reward is composed of an independent additive white noise), then $\mathfrak{v}^{\hat{\pi}}$ is the unique fixed point of 
\begin{equation}
\mathcal{T}_{0,0}^{\hat{\pi}} v(s,\alpha):=v(s,\alpha)+\eta\mathbb{E}[\partial l_\alpha(\tilde{r}(s,\tilde{a})+\gamma  v(\tilde{s}',\tilde{u})-v(s,\alpha))],\label{eq:T00BellmanEqPi}    
\end{equation}
with $\partial l_\alpha(\delta):=\alpha-\mathbb{I}\{\delta<0\}$.

First, we show that $\mathfrak{v}^{\hat{\pi}}$ satisfies Eq.~\ref{eq:T00BellmanEqPi}. This is equivalent to showing that for all $(s,\alpha)$:    \begin{equation}
    \label{eq:proof-zero-pi}
        \mathbb{E}\Big[\partial l_\alpha\big(\tilde{r}(s,\tilde{a})+\gamma \mathfrak{v}^{\hat{\pi}}(\tilde{s}',\tilde{u})-\mathfrak{v}^{\hat{\pi}}(s,\alpha)\big)\Big]=0.
    \end{equation}
    According to Proposition \ref{prop:BellPolicyEval} and the elicitability of quantiles, we have 
    \begin{equation}
    \label{eq:var-pi-v}
        \mathfrak{v}^{\hat{\pi}}(s,\alpha)=\mathrm{VaR}_\alpha[\tilde{r}(s,\tilde{a})+\gamma \mathfrak{v}^{\hat{\pi}}(\tilde{s}',\tilde{u})]=\arg
\min_y \mathbb{E}[l_\alpha(\tilde{r}(s,\tilde{a})+\gamma \mathfrak{v}^{\hat{\pi}}(\tilde{s}',\tilde{u})-y)],
    \end{equation}
    due to the assumed uniqueness of the quantile of $\tilde{r}(s,\tilde{a})+\gamma \mathfrak{v}^{\hat{\pi}}(\tilde{s}',\tilde{u})$.  In turn, by convexity of $f(s,\alpha,y):=\mathbb{E}[l_\alpha(\tilde{r}(s,\tilde{a})+\gamma \mathfrak{v}^{\hat{\pi}}(\tilde{s}',\tilde{u})-y)]$ in $y$, this lets us establish that:
    \begin{equation*}
        0 = \nabla_y f(s,\alpha,y)=\nabla_y\mathbb{E}\Big[l_\alpha\big(\tilde{r}(s,\tilde{a})+\gamma \mathfrak{v}^{\hat{\pi}}(\tilde{s}',\tilde{u})-y\big)\Big]\Big|_{y=\mathfrak{v}^{\hat{\pi}}(s,\alpha)} ,
    \end{equation*}
where the derivative of $f(s,\alpha,a,y)$ exist for all $y$ since $\tilde{r}(s,a)+\gamma \mathfrak{v}^*(\tilde{s}',\tilde{u})$ is assumed to have a density.
    
As $\nabla_y l_\alpha(\tilde{x}-y)=-\partial l_\alpha(\tilde{x}-y)$ almost everywhere, we have
\begin{equation*}
    \mathbb{E}\Big[\partial l_\alpha\big(\tilde{r}(s,\tilde{a})+\gamma \mathfrak{v}^{\hat{\pi}}(\tilde{s}',\tilde{u})-\mathfrak{v}^{\hat{\pi}}(s,\alpha)\big)\Big] = - \nabla_y\mathbb{E}\Big[l_\alpha\big(\tilde{r}(s,\tilde{a})+\gamma \mathfrak{v}^{\hat{\pi}}(\tilde{s}',\tilde{u})-y\big)\Big]\Big|_{y=\mathfrak{v}^{\hat{\pi}}(s,\alpha)} = 0.
\end{equation*}
Therefore, Eq.~\ref{eq:proof-zero-pi} holds.

Second, we show that $\mathfrak{v}^{\hat{\pi}}$ is unique. Take any $\bar{\mathfrak v}\neq \mathfrak v^{\hat{\pi}}$ is another fixed point of Eq.~\ref{eq:T00BellmanEqPi}, we must have 
that 
\[\mathbb{E}\Big[\partial l_\alpha\big(\tilde{r}(s,\tilde{a})+\gamma \bar{\mathfrak v}(\tilde{s}',\tilde{u})-\bar{\mathfrak v}(s,\alpha)\big)\Big]=0\]
thus implying, due to $\tilde{r}(s,a)+\gamma \bar{\mathfrak v}(\tilde{s}',\tilde{u})$ having a  density, that:
\[\bar{\mathfrak v}(s,\alpha) = \mathrm{VaR}_\alpha^{\hat{\pi}}[\tilde{r}(s,\tilde{a})+\gamma \bar{\mathfrak v}(\tilde{s}',\tilde{u})].\]
This leads to a contradiction since Proposition \ref{prop:BellPolicyEval} established that Eq. \ref{eq:qdp-nested3} has a unique fixed point.
\end{proof}

\subsection{Analysis for VaR Policy Gradient}
\label{app:var-pg}
We first make the assumption below.
\begin{assumption}
\label{ass:A6}
    1) The space of  $\mathcal{A}$ is finite. 2) Reward is bounded, i.e., there exits $\bar{r}>0$ such that $|\tilde{r}(s,a)|\leq \bar{r}, \forall(s,a)$. 
\end{assumption}


Consider the VaR policy gradient, i.e.,
\begin{equation}
\label{eq:var-policy-gradient}
\theta_{k+1}(s,\alpha)\leftarrow \theta_k(s,\alpha) +  \eta \cdot \mathbb{E}_{a\sim \hat{\pi}_{\theta_k}(s,\alpha)}\Big[\mathbb{E}\big[\partial l^\kappa_{\epsilon,\alpha}\big(\delta_{{v}^{\hat{\pi}_k}}(s,\alpha,\tilde{r}(s,a),\tilde{s}',\tilde{u}_d)\big)\big] \nabla_\theta \log \hat\pi_{\theta_k}(a|s,\alpha)\Big], \forall s,\alpha,
\end{equation}
where $\theta_{k}$ is the parameter of $\hat\pi_{\theta_k}$, $v^{\hat{\pi}_k}$ is the value function under $\hat\pi_{\theta_k}$ (i.e., the fixed point of $\mathcal{T}^{\hat\pi_k}_{\epsilon,\kappa}$), and $a$ is sampled from $\hat\pi_{\theta_k}(\cdot|s,\alpha)$. We show that with proper learning rate, this policy update guarantees monotone increasing of $v^{\hat{\pi}_k}$, if the policy parameterization is softmax. The analysis follows similar ideas as \citet{agarwal2021theory}. 

We start with the new risk measure induced by the soft function $l^\kappa_{\epsilon,\alpha}$ and show that another Bellman operator using this new risk measure is monotone and has the same unique fixed point as $\mathcal{T}^{\hat\pi}_{\epsilon,\kappa}$.

Define 
\[
\rho_{\epsilon,\alpha}^\kappa(X):= \arg\min_x \mathbb{E}[\ell_{\epsilon,\alpha}^\kappa(X-x)],
\]
with $\ell_{\epsilon,\alpha}^\kappa(x):=\ell_{\max(\epsilon,\min(1-\epsilon,\alpha))}^\kappa(x)$.

\begin{lemma}
    The risk measure $\rho_{\epsilon,\alpha}^\kappa$ is normalized, monotone, and translation invariant.
\end{lemma}
\begin{proof}
    These are well known properties of utility-based shortfall risk measures. In order, normalization follows from:
    \[\rho_{\epsilon,\alpha}^\kappa(0)=\arg\min_x \mathbb{E}[\ell_{\epsilon,\alpha}^\kappa(0-x)] = 0,\]
    since $\ell_{\epsilon,\alpha}^\kappa(x)\geq 0$ for all $x$ with only equality at $x=0$.

    Translation invariance follows from:
    \[\rho_{\epsilon,\alpha}^\kappa(X+t)=\arg\min_x \mathbb{E}[\ell_{\epsilon,\alpha}^\kappa(X+t-x)]=\arg\min_{y}\mathbb{E}[\ell_{\epsilon,\alpha}^\kappa(X-y)]+t=\rho_{\epsilon,\alpha}^\kappa(X)+t,\]
    where we applied the change of variables $y:= x-t$.

    Finally, monotonicity follows from the fact that, given any random $X$ and random $\Delta\geq 0$:
    \begin{align*}
        \nabla_x \mathbb{E}[\ell_{\epsilon,\alpha}^\kappa(X+\Delta-x)]\Big|_{x=\rho_{\alpha}^\kappa(X)}&= -\mathbb{E}\left[\partial\ell_{\epsilon,\alpha}^\kappa(X+\Delta-\rho_{\alpha}^\kappa(X))\right]\\
        &\leq -\mathbb{E}\left[\partial\ell_{\epsilon,\alpha}^\kappa(X-\rho_{\alpha}^\kappa(X))\right]\\
        &=\nabla_x \mathbb{E}[\ell_{\epsilon,\alpha}^\kappa(X-x)]\Big|_{x=\rho_{\alpha}^\kappa(X)}=0,
    \end{align*}
    due to the non-decreasingness of $\partial\ell_{\epsilon,\alpha}^\kappa(y)$.
    
By convexity of $\mathbb{E}[\ell_{\epsilon,\alpha}^\kappa(X+\Delta-x)]$ in $x$, this implies that     
\[  \rho_{\epsilon,\alpha}^\kappa(X)\leq \arg\min_x \mathbb{E}[\ell_{\epsilon,\alpha}^\kappa(X+\Delta-x)]=\rho_{\epsilon,\alpha}^\kappa(X+\Delta).\]
\end{proof}

Then consider the Bellman equation with the risk $\rho^\kappa_{\epsilon,\alpha}$, i.e.,
\[
\mathcal{B}_{\epsilon,\kappa}^{\hat{\pi}} v(s,\alpha) := \rho_{\epsilon,\alpha}^\kappa (\tilde{r}(s,\tilde{a})+\gamma v(\tilde{s}',\tilde{u})),~~\tilde{a}\sim \hat{\pi}(s,\alpha).
\]

\begin{lemma}
    The operator $\mathcal{B}_{\epsilon,\kappa}^{\hat{\pi}}$ is monotone and a contraction with the same unique fixed point as $\mathcal{T}^{\hat{\pi}}_{\epsilon,\kappa}$.
\end{lemma}
\begin{proof}
    We start with proving monotonicity by exploiting the monotonicity of $\rho_{\epsilon,\alpha}^\kappa$. Namely, if $v_1\geq v_2$, then:
    \[\mathcal{B}_{\epsilon,\kappa}^{\hat{\pi}} v_1(s,\alpha) = \rho_{\epsilon,\alpha}^\kappa (\tilde{r}(s,\tilde{a})+\gamma v_1(\tilde{s}',\tilde{u}))\geq \rho_{\epsilon,\alpha}^\kappa (\tilde{r}(s,\tilde{a})+\gamma v_2(\tilde{s}',\tilde{u})) = \mathcal{B}_{\epsilon,\kappa}^{\hat{\pi}} v_2(s,\alpha).\]
    
    We follow with proving contraction, which follows similarly as in the proof of Proposition \ref{thm:BfixedPoint}. Namely, using monotonicity and translation invariance of $\rho_{\epsilon,\alpha}^\kappa$, we get:
    \begin{align*}
        \mathcal{B}_{\epsilon,\kappa}^{\hat{\pi}}&v_1(s,\alpha) - \mathcal{B}_{\epsilon,\kappa}^{\hat{\pi}}v_2(s,\alpha) \\
        &= \rho_{\epsilon,\alpha}^\kappa(\tilde{r}(s,\tilde{a}) + \gamma v_1(\tilde{s}',\tilde{u})) - \rho_{\epsilon,\alpha}^\kappa(\tilde{r}(s,\tilde{a}) + \gamma v_2(\tilde{s}',\tilde{u}))\\
        &\leq \rho_{\epsilon,\alpha}^\kappa( \tilde{r}(s,\tilde{a}) + \gamma(v_2(\tilde{s}',\tilde{u})+\|v_1-v_2\|_\infty))- \rho_{\epsilon,\alpha}^\kappa( \tilde{r}(s,\tilde{a}) + \gamma v_2(\tilde{s}',\tilde{u}))\\
        &=\gamma  \|v_1-v_2\|_\infty.
    \end{align*}    
    Now, to show the unique fixed point of $\mathcal{B}_{\epsilon,\kappa}^{\hat{\pi}}$ is the same as $\mathcal{T}_{\epsilon,\kappa}^{\hat{\pi}}$, we exploit the property that ${v}^{\hat{\pi}}=\mathcal{T}_{\epsilon,\kappa}^{\hat{\pi}}  {v}^{\hat{\pi}}$ which implies that 
    \[0 = (1/\eta)(\mathcal{T}_{\epsilon,\kappa}^{\hat{\pi}} {v}^{\hat{\pi}} - {v}^{\hat{\pi}})(s,\alpha) = \mathbb{E}_{\tilde{a}\sim\hat{\pi}}[\partial l^\kappa_{\epsilon,\alpha}(\delta_{{v}^{\hat{\pi}}}(s,\alpha,\tilde{r}(s,a),\tilde{s}',\tilde{u}))]=-\nabla_x \mathbb{E}[\ell_{\epsilon,\alpha}^\kappa(\tilde{r}(s,\tilde{a})+\gamma {v}^{\hat{\pi}}(\tilde{s}',\tilde{u})-x)]\Big|_{x={v}^{\hat{\pi}}(s,\alpha)}.\]
    Hence, 
    \[{v}^{\hat{\pi}}(s,\alpha)=\arg\min_x \mathbb{E}[\ell_{\epsilon,\alpha}^\kappa(\tilde{r}(s,\tilde{a})+\gamma {v}^{\hat{\pi}}(\tilde{s}',\tilde{u})-x)] = \rho_{\epsilon,\alpha}^\kappa(\tilde{r}(s,\tilde{a})+\gamma {v}^{\hat{\pi}}(\tilde{s}',\tilde{u}))=\mathcal{B}_{\epsilon,\kappa}^{\hat{\pi}}{v}^{\hat{\pi}}(s,\alpha).\]
\end{proof}

Next we show that the advantage function, i.e., $A^{\hat\pi_k}(s,\alpha,a):=\mathbb{E}\big[\partial l^\kappa_{\epsilon,\alpha}\big(\delta_{v^{\hat{\pi}_k}}(s,\alpha,\tilde{r}(s,a),\tilde{s}',\tilde{u})\big)\big] $ is bounded, and that with proper learning rate, $\mathbb{E}_{\tilde{a}\sim\hat{\pi}_{k+1}}[A^{\hat{\pi}_k}(s,\alpha,\tilde{a})]\geq \mathbb{E}_{\tilde{a}\sim\hat{\pi}_k}[A^{\hat{\pi}_k}(s,\alpha,\tilde{a})]$ for all $(s,\alpha)$.

\begin{lemma}\label{thm:boundA}
    With assumption~\ref{ass:A6} (specifically, $|\tilde{r}(s,a)|\leq \bar{r}, \forall (s,a)$), for any $\hat{\pi}$, the function $A^{\hat{\pi}}(s,\alpha,a):=\mathbb{E}[\partial\ell_{\epsilon,\alpha}^\kappa(\tilde{r}(s,a)+\gamma {v}^{\hat{\pi}}(\tilde{s}',\tilde{u})-{v}^{\hat{\pi}}(s,\alpha))]$ is such that $\|A^{\hat{\pi}}\|_\infty \leq 2\kappa\bar{r}/(1-\gamma)+1$.
\end{lemma}

\begin{proof}
    \textbf{First step:} We first show that $\|{v}^{\hat{\pi}}\|_\infty \leq \bar{r}/(1-\gamma)$. By induction on $T=0,1,\dots$, we will show that $(\mathcal{B}_{\epsilon,\kappa}^{\hat{\pi}})^T v_0(s,\alpha) \in [-\bar{r}/(1-\gamma), \bar{r}/(1-\gamma)]$, with $v_0(s,\alpha)=0$. This obviously applies at $T=0$. Now given that it applies at $T-1$, we can show that:
    \begin{align*}
    (\mathcal{B}_{\epsilon,\kappa}^{\hat{\pi}})^T v_0(s,\alpha) &= \mathcal{B}_{\epsilon,\kappa}^{\hat{\pi}}(\mathcal{B}_{\epsilon,\kappa}^{\hat{\pi}})^{T-1}v_0(s,\alpha)\leq \mathcal{B}_{\epsilon,\kappa}^{\hat{\pi}}(\bar{r}/(1-\gamma))(s,\alpha)\\
    &=\rho_{\epsilon,\alpha}^\kappa(\tilde{r}(s,\tilde{a})+\gamma\bar{r}/(1-\gamma))\leq \rho_{\epsilon,\alpha}^\kappa(\bar{r}+\gamma\bar{r}/(1-\gamma)) =\rho_{\epsilon,\alpha}^\kappa(\bar{r}/(1-\gamma)) \\
    &= \bar{r}/(1-\gamma),
    \end{align*}
where we in order exploit the monotonicity of $\mathcal{B}_{\epsilon,\kappa}^{\hat{\pi}}$ and of $\rho_{\epsilon,\alpha}^\kappa$, before finalizing with the normalization and translation invariance of $\rho_{\epsilon,\alpha}^\kappa$. 

An exactly similar argument can be used to show that $(\mathcal{B}_{\epsilon,\kappa}^{\hat{\pi}})^T v_0(s,\alpha)\geq -\bar{r}/(1-\gamma)$. Given that $\mathcal{B}_{\epsilon,\kappa}^{\hat{\pi}}$ is a contraction, we must have that:
    \[\bar{r}/(1-\gamma)\geq \lim_{T\rightarrow\infty} (\mathcal{B}_{\epsilon,\kappa}^{\hat{\pi}})^T v_0 = \bar{v}^{\hat{\pi}} \geq -\bar{r}/(1-\gamma).\]

\textbf{Second step:} We then show that $|\partial\ell_{\epsilon,\alpha}^\kappa(y)|\leq \kappa|y|+1$. Consider $y\in [0,\kappa)$. We can see that with $\max(\epsilon,\min(1-\epsilon,\alpha))\leq 1$:
    \[0\leq \partial\ell_{\epsilon,\alpha}^\kappa(y) \leq y/\kappa\leq  y/\kappa + (1-\kappa^2)(\kappa-y)/\kappa = \kappa y - \kappa^2 + 1 \leq \kappa|y|+1\]
    where the first inequality comes from $\kappa^2\leq 1$ and $y\leq \kappa$. The case with $y\geq \kappa$ is also covered above. 
    
    Consider now $y\in(-\kappa,0)$, we can see similarly that:
    \[0\geq \partial\ell_{\epsilon,\alpha}^\kappa(y) \geq y/\delta \geq y/\delta - (1-\kappa^2)(\kappa+y)/\kappa = \kappa y + \kappa^2 - 1 \geq -\kappa|y|-1, \]
since $y\geq \kappa$. From this, we conclude that $|\partial\ell_{\epsilon,\alpha}^\kappa(y)|\leq \kappa|y|+1$. 

    \textbf{Third step:} We can conclude that:
    \begin{align*}
 |A^{\hat{\pi}}(s,\alpha,a)|&\leq \mathbb{E}[|\partial\ell_{\epsilon,\alpha}^\kappa(\tilde{r}(s,a)+\gamma {v}^{\hat{\pi}}(\tilde{s}',\tilde{u})-{v}^{\hat{\pi}}(s,\alpha))|]\\
 &\leq \mathbb{E}[\kappa|\tilde{r}(s,a)+\gamma {v}^{\hat{\pi}}(\tilde{s}',\tilde{u})-{v}^{\hat{\pi}}(s,\alpha)|+1]\\
 &\leq \mathbb{E}[\kappa(\bar{r}+\gamma \bar{r}/(1-\gamma) + \bar{r}/(1-\gamma))+1]\\
 &=2\kappa\bar{r}/(1-\gamma)+1
    \end{align*}
\end{proof}

\begin{lemma}\label{thm:monotoneConv1}
    With assumption~\ref{ass:A6}, let $\hat{\pi}_\theta$ be a softmax policy, the update rule in Eq.~\ref{eq:var-policy-gradient} with $\eta\leq 1/(5\|A^{\hat{\pi}_k}\|_\infty)$ ensures that $\mathbb{E}_{\tilde{a}\sim\hat{\pi}_{k+1}}[A^{\hat{\pi}_k}(s,\alpha,\tilde{a})]\geq \mathbb{E}_{\tilde{a}\sim\hat{\pi}_k}[A^{\hat{\pi}_k}(s,\alpha,\tilde{a})]$ for all $(s,\alpha)$.
\end{lemma}

\begin{proof}
    Let $F_{s,\alpha}^A(\theta) := \sum_a \pi_{\theta}(a|s) A(s,\alpha,a)$ for some $A$. By Lemma 52 in \citet{agarwal2021theory}, we have that 
    \[
    \|\nabla_{\theta(s,\alpha)}F_{s,\alpha}^A(\theta)-\nabla_{\theta'(s,\alpha)}F_{s,\alpha}^A(\theta')\|_2\leq 5\|A\|_\infty\|\theta(s,\alpha,\cdot)-\theta'(s,\alpha,\cdot)\|_2 .
    \]
    Hence, $F_{s,\alpha}^A(\theta)$ is $\beta:=5\|A\|_\infty$ smooth with respect to $\theta(s,\alpha)$.

    We further observe that the policy update can be rewritten as:
    \[\theta_{k+1}(s,\alpha,\cdot) := \theta_k(s,\alpha,\cdot)+\eta\nabla_{\theta(s,\alpha)} F_{s,\alpha}^{A^{\hat{\pi}_k}}(\theta_k),\;\forall s,\alpha. \]
    By Theorem 57 in \citet{agarwal2021theory}, we get that if $\eta\leq 1/(5\|A^{\hat{\pi}_k}\|_\infty)$, then
    $F_{s,\alpha}^{A^{\hat{\pi}_k}}(\theta_{k+1})\geq F_{s,\alpha}^{A^{\hat{\pi}_k}}(\theta_{k})$ so that 
    \[ \sum_a \hat{\pi}_{k+1}(a|s) A^{\hat{\pi}_k}(s,\alpha,a) \geq \sum_a \hat{\pi}_{k}(a|s,\alpha) A^{\hat{\pi}_k}(s,\alpha,a),\;\forall s,\alpha.\]
\end{proof}

Finally, we are able to prove the monotone increasing of $v^{\hat\pi_k}$.

\begin{proposition}\label{thm:mononoteConvergence}
With assumption~\ref{ass:A6}, let $\hat{\pi}_\theta$ be a softmax policy, the update rule in Eq.~\ref{eq:var-policy-gradient} with $\eta< 1/(5(2\kappa\bar{r}/(1-\gamma)+1))$ ensures that ${v}^{\hat\pi_{k+1}}\geq {v}^{\hat{\pi}_k}$.
\end{proposition}
\begin{proof}
Given any $k$, Lemma \ref{thm:boundA} and \ref{thm:monotoneConv1} ensure that  
    \[\mathbb{E}_{\tilde{a}\sim\hat{\pi}_{k+1}}[A^{\hat{\pi}_k}(s,\alpha,\tilde{a})]\geq \mathbb{E}_{\tilde{a}\sim\hat{\pi}_k}[A^{\hat{\pi}_k}(s,\alpha,\tilde{a})].\]
    We can further observe that since $v^{\hat{\pi}_k}$ is the fixed point of  $\mathcal{T}_{\epsilon,\kappa}^{\hat{\pi}_k}$ and $A^{\hat{\pi}_k}(s,\alpha,a) = \mathbb{E}[\partial l^\kappa_{\epsilon,\alpha}(\delta_{v^{\hat{\pi}_k}}(s,\alpha,\tilde{r}(s,a),\tilde{s}',\tilde{u}))]$, it must be that:
    \[\mathbb{E}_{\tilde{a}\sim\hat{\pi}_k}[A^{\hat{\pi}_k}(s,\alpha,\tilde{a})] = (1/\eta)(\mathcal{T}_{\epsilon,\kappa}^{\hat{\pi}_k} v^{\hat{\pi}_k} - {v}^{\hat{\pi}_k})(s,\alpha) = 0,\]
    hence that $\mathbb{E}_{\tilde{a}\sim\hat{\pi}_{k+1}}[A^{\hat{\pi}_k}(s,\alpha,\tilde{a})] \geq 0$ for all $(s,\alpha)$.

    Now, let $F_{s,\alpha}^{k}(x):=\mathbb{E}_{\tilde{a}\sim\hat{\pi}_{k+1}(s,\alpha)}[\ell_{\epsilon,\alpha}^\kappa(\tilde{r}(s,\tilde{a})+\gamma {v}^{\hat{\pi}_k}(\tilde{s}',\tilde{u})-x)]$, a convex function of $x$, we can check that:
    \begin{align*}
    \nabla_x F_{s,\alpha}^{k}({v}^{\hat{\pi}_k}(s,\alpha)) &= -\mathbb{E}_{\tilde{a}\sim\hat{\pi}_{k+1}(s,\alpha)}[\partial\ell_{\epsilon,\alpha}^\kappa(\tilde{r}(s,\tilde{a})+\gamma {v}^{\hat{\pi}_k}(\tilde{s}',\tilde{u})-{v}^{\hat{\pi}_k}(s,\alpha))]  \\
    &=-\mathbb{E}_{\tilde{a}\sim\hat{\pi}_{k+1}(s,\alpha)}[A^{\hat{\pi}_k}(s,\alpha,\tilde{a})]\leq 0.
    \end{align*}
    By convexity of $F_{s,\alpha}^{k}(x)$, we conclude that $\arg\min_x F_{s,\alpha}^{k}(x)\geq {v}^{\hat{\pi}_k}(s,\alpha)$ and moreover that:
    \[\mathcal{B}_{\epsilon,\kappa}^{\hat{\pi}_{k+1}}{v}^{\hat{\pi}_k}(s,\alpha) = \arg\min_x F_{s,\alpha}^{k}(x)\geq {v}^{\hat{\pi}_k}(s,\alpha)\]
    The rest follows by monotonicity of $\mathcal{B}_{\epsilon,\kappa}^{\hat{\pi}_{k+1}}$ and the fact that ${v}^{\hat{\pi}_{k+1}}$ is the fixed point of the contraction $\mathcal{B}_{\epsilon,\kappa}^{\hat{\pi}_{k+1}}$. Namely, we can inductively show that for all $T=1,\dots,\infty$, 
    \[(\mathcal{B}_{\epsilon,\kappa}^{\hat{\pi}_{k+1}})^T {v}^{\hat{\pi}_{k}} = \mathcal{B}_{\epsilon,\kappa}^{\hat{\pi}_{k+1}}(\mathcal{B}_{\epsilon,\kappa}^{\hat{\pi}_{k+1}})^{T-1} {v}^{\hat{\pi}_{k}}\geq \mathcal{B}_{\epsilon,\kappa}^{\hat{\pi}_{k+1}} {v}^{\hat{\pi}_{k}}\geq {v}^{\hat{\pi}_{k}}.\]
    This leads to our conclusion since:
    \[{v}^{\hat{\pi}_{k+1}} = \lim_{T\rightarrow \infty} (\mathcal{B}_{\epsilon,\kappa}^{\hat{\pi}_{k+1}})^T {v}^{\hat{\pi}_{k}} \geq {v}^{\hat{\pi}_{k}}.\]
\end{proof}

\subsection{Proof of Proposition~\ref{prop:mkv}}
\Propmkv*
\begin{proof}
For all the $(s_t,\alpha_t)$ encountered by running Algo.~\ref{alg:static-var-exec1}, we show only when action $a_t$ chooses $\arg\max_a q^{\bar{\pi}^*}(s_t,\alpha_t,a)$, the above objective achieves the largest value.

From the proof of Proposition~\ref{prop:quant-pibar}, we know 
$$
\max_a q^{\bar{\pi}^*}(s_t,\alpha_t,a)=v^{\bar{\pi}^*}(s_t,\alpha_t) = v^*(s_t,\alpha_t),
$$
where $v^*(s_t,\alpha_t)$ is the optimal $\alpha_t$-quantile value that can be achieved in state $s_t$. By the definition of $q^{\bar{\pi}^*}$ under policy $\bar{\pi}^*$
$$
q^{\bar{\pi}^*}(s,\alpha,a)= \mathrm{VaR}_\alpha[\tilde{r}(s,a)+\gamma v^{\bar{\pi}^*}(\tilde{s}',\tilde{u})].
$$
Therefore,
$$
v^*(s_t,\alpha_t) =v^{\bar{\pi}^*}(s_t,\alpha_t)\geq \mathrm{VaR}_{\alpha_t}[\tilde{r}(s_t,a_t) + \gamma v^{\bar{\pi}^*}(\tilde{s}_{t+1},\tilde{u})]
, ~~\forall a_t.
$$
Similar to the proof of Proposition~\ref{prop:fix-point}, when $\epsilon= 0$ and $\kappa= 0$, this implies that for all $a$
\[\nabla_y f(s_t,\alpha_t,a,y)\Big|_{y=v^*(s_t,\alpha_t)} \geq  \nabla_y f(s_t,\alpha_t,a,y)\Big|_{y=\bar{q}(s_t,\alpha_t,a)}=0\]
with $f(s,\alpha,a,y):= \mathbb{E}\Big[\partial l_\alpha\big(\tilde{r}(s,a)+\gamma v^{\bar{\pi}^*}(\tilde{s}',\tilde{u})-y\Big]$ and $\bar{q}(s,\alpha,a):=\mathrm{VaR}_{\alpha}[\tilde{r}(s,a) + \gamma v^{\bar{\pi}^*}(\tilde{s}',\tilde{u})]$. Given 
$\nabla_y f(s,\alpha,a,y) = - \mathbb{E}\Big[\partial l_\alpha\big(\tilde{r}(s,a)+\gamma v^{\bar{\pi}^*}(\tilde{s}',\tilde{u})-y\big)\Big]$, we get:
$$\mathbb{E}\Big[\partial l_\alpha\big(\tilde{r}(s_t,a)+\gamma v^{\bar{\pi}^*}(\tilde{s}_{t+1},\tilde{u})-v^{\bar{\pi}^*}(s_t,\alpha_t)\big)\Big] \leq 0, ~\forall a
$$
with the maximum only achieved when $a_t=\arg\max_a q^{\bar{\pi}^*}(s_t,\alpha_t,a)$.
\end{proof}

\section{Full Algorithm}
The full algorithm is summarized in Algo.~\ref{alg:full}. One detail on quantile value function update omitted in the main text is that instead of computing one-step quantile regression loss, we perform weighted multi-step quantile regression similar as the multi-step advantage estimation. For a state $s_t$, its target values can be estimated using multi-step rollout as
$$
v^{(\iota)}_{\mathrm{target}}(s_t,\tilde{u})=r_t+\gamma r_{t+1}+...+\gamma^\iota v(s_{t+\iota},\tilde{u}).
$$
Then the weighted quantle regression loss for $s_t$ is computed as
$$
\lambda\mathbb{E}_\alpha[l_\alpha(v^{(1)}_{\mathrm{target}}(s_t,\tilde{u})-v(s_t,\alpha))] + \lambda^2\mathbb{E}_\alpha[l_\alpha(v^{(2)}_{\mathrm{target}}(s_t,\tilde{u})-v(s_t,\alpha))] + ...
$$

\begin{algorithm}[tb]
  \caption{CVaR-VaR Policy Gradient}
  \label{alg:full}
  \begin{algorithmic}
    \STATE {\bfseries Input:} policy $\bar{\pi}_\theta(a|s)$, quantile value function $v_\phi(s,\alpha)$, number of quantiles $I$, trajectory length $T$, risk level $\alpha_0$,  $\gamma\in(0,1]$, $\lambda\in(0,1)$, $\omega\in(0,1)$. Policy learning rate $\mathrm{lr}_\theta$, value learning rate $\mathrm{lr}_\phi$. Total iteration $M$. Number of trajectories $N$
    \STATE Compute discretized quantile levels $\Lambda=\{\frac{1}{2} (\frac{i-1}{I}+\frac{i}{I})\}_{i=1}^I $
    \FOR{$m=1$ {\bfseries to} $M$}
    \STATE \textit{// \textcolor{blue}{Sample trajectories}}
    \STATE $B\leftarrow \emptyset$
    \FOR{$i=1$ {\bfseries to} $N$}
    \STATE $\tau \leftarrow \emptyset$, $s\leftarrow$ env.reset(), $\alpha \sim U[0,\alpha_0]$
    \STATE projection $\alpha$ to discretized quantile levels $\alpha\leftarrow\mathrm{proj}(\alpha,\Lambda)$
    \FOR{$t=0$ {\bfseries to} $T-1$}
    \STATE $a\sim {\bar\pi_\theta}(\cdot|s),~~z\leftarrow v_\phi(s,\alpha)$
    \STATE $r, s' = \mathrm{env.step}(a)$
    \STATE $\tau$.append($s,\alpha, a$)
    \STATE $z\leftarrow (z-r)/\gamma$
    \STATE $\alpha\leftarrow \min\Big\{\beta\Big|v_\phi(s',\beta)\geq z,\beta\in \Lambda\Big\}$
    \STATE $s\leftarrow s'$
    \ENDFOR
    \STATE $B$.append($\tau$)
    \ENDFOR
    \STATE Compute trajectory returns $\{R(\tau_i)\}_{i=1}^N$ for $\tau_i\in B$ with $\gamma$
    \STATE Compute multi-step advantage $\{\bar A(s_t,\alpha_t,a_t)\}_{t=0}^{T-1}$ for all $\tau\in B$ by Eq.~\ref{eq:mkv-adv-multistep} with $\gamma$ and $\lambda$
    \STATE \textit{// \textcolor{blue}{CVaR-PG, i.e. Eq.~\ref{eq:cvar-pg}}}
    \STATE $g_1= \frac{1}{\alpha N} \sum_{i=1}^N\Big (\mathbb{I}_{\{R(\tau_i)\leq \hat{q}_{\alpha_0}\}} (R(\tau_i)-\hat{q}_{\alpha_0})\sum_{t=0}^{T-1} \nabla_\theta \log\bar{\pi}_\theta(a_{i,t}|s_{i,t})\Big )$, $\hat{q}_{\alpha_0}=\mathrm{VaR}_{\alpha_0}[\{R(\tau_i)\}_{i=1}^N]$
    \STATE \textit{// \textcolor{blue}{VaR-PG}}
    \STATE $g_2=\frac{1}{N} \sum_{i=1}^N\sum_{t=0}^T \bar A(s_{i,t},\alpha_{i,t},a_{i,t}) \nabla_\theta \log \bar\pi_\theta(a_{i,t}|s_{i,t})$
    \STATE \textit{// \textcolor{blue}{Update policy function}}
    \STATE $\theta\leftarrow \theta + \mathrm{lr}_\theta ((1-\omega) g_1 +  \omega g_2\big)$
    \STATE \textit{// \textcolor{blue}{Update value function}}
    \FOR{$i=1$ {\bfseries to} $N$}
    \STATE Compute multi-step target $\{v^{(\iota)}_{\mathrm{target}}(s_t,\tilde{u})=r_t+\gamma r_{t+1}+...+\gamma^\iota v(s_{t+\iota},\tilde{u})\}_{\iota=1}^{T-1-t}$ for all $s_t\in \tau_i$ 
    \STATE Compute weighted quantile regression loss: 
    \STATE $g_3(s_t)= \lambda \mathbb{E}_{\alpha\in\Lambda}[l_\alpha(v^{(1)}_{\mathrm{target}}(s_t,\tilde{u})-v_\phi(s_t,\alpha)] + \lambda^2 \mathbb{E}_{\alpha\in \Lambda}[l_\alpha(v^{(2)}_{\mathrm{target}}(s_t,\tilde{u})-v_\phi(s_t,\alpha))]+...$ for all $s_t\in \tau_i$
    \STATE $\phi \leftarrow \phi - \mathrm{lr}_\phi  \mathbb{E}_{s\sim \tau_i }[\nabla_\phi g_3(s)]$
    \ENDFOR
    \ENDFOR
  \end{algorithmic}
\end{algorithm}

\section{Experiments Details}
 \label{sec:exp}
 \subsection{Network Architecture}
 All policy networks and value networks (only CVaR-PG does not require a value function) are implemented using deep neutral networks with two hidden layers. For discrete action, softmax is applied to the output layer of the policy to create action probabilities. For continuous action, the network learns the mean and std of a normal distribution.
 
 PCVaR-PG~\cite{kim2024risk} requires to learn functions taking cumulative rewards so far as input. We normalize this value to the range in $[0,1]$ and use cosine embedding as used in IQN~\cite{dabney2018implicit} to embed this value, i.e., denote the input of this value as $x$, denote the input size of the cosine embedding layer as $n$, denote the parameters in this linear layer as $a_{ij}$ and $b_{j}$, the $j$-th output is
 $$
 \phi_j(x)=\mathrm{ReLU}(\sum_{i=0}^{n-1}\mathrm{cos}(\pi i x) a_{ij}+b_j)
 $$

 Our method requires to learn the quantile value function of a state. We follow QR-DQN~\cite{dabney2018distributional} to output $I$ values, corresponding to quantile levels $\{\frac{1}{2} (\frac{i-1}{I}+\frac{i}{I})\}_{i=1}^I$. To ensure the monotonicity of the quantile function, the first output value is treated as the base quantile value, and the remaining $I-1$ outputs are increasing deltas. We apply \texttt{nn.functional.softplus} to make delta non-negative.

 In Maze domain, the input to both policy and value nets are $(x,y)$ coordinates. To embed the coordinate, we first transform $(x,y)$ to a scalar index and then use \texttt{nn.Embedding} in PyTorch to embed this index.

 As mentioned, our method uses a multi-step advantage estimation. We also implement this technique for the policy learning in PCVaR-PG~\cite{kim2024risk} and RET-CAP~\cite{mead2025return}. 
 
 \subsection{Maze}
 \textbf{Learning parameters}. Discount factor $\gamma=0.999$. Multi-step advantage estimation $\lambda=0.95$ (in PCVaR-PG, RET-CAP, and CVaR-VaR). Optimizer is Adam. The embeding size of \texttt{nn.Embedding} is 16. Hidden size of neutral network is 64. 
 
 Policy learning rate is selected from $\{$7e-4, 5e-4, 3e-4, 1e-4, 7e-5$\}$. Value learning rate is selected from $\{$1, 2, 5, 10$\}$ times the policy learning rate. The main learning parameters are summarized in Table~\ref{tab:maze-param}. 
 
 RET-CAP~\cite{mead2025return} considers $\gamma=1$ since only when $\gamma=1$, the summation of modified reward $\hat{r}_t=\min(k_t, q^*_\alpha)-\min(k_{t-1},q^*_\alpha)$ equals $\min(R(\tau),q^*_\alpha)$. So we set $\gamma=1$ and $q^*_\alpha=0$. PCVaR-PG~\cite{kim2024risk} requires to estimate $q^*_\alpha$, we also set this value to stabilize training, though we find the learning curves show little difference if estiamting this value during learning. The trade-off $\omega=0.5$ in CVaR-VaR.

 \begin{table}
  \caption{Learning parameters in Maze.}
  \label{tab:maze-param}
  \begin{center}
    \begin{small}
      \begin{sc}
        \begin{tabular}{lcccccc}
          \toprule
          Method  & $\pi$ lr         & $V$ lr      & $f(s,k)$ lr & Normalize Adv? & n\_quantile \\
          \midrule
          REINFORCE    & 7e-4 & 7e-4 & - & $\times$ & - \\
          CVaR-PG      & 5e-4 &   -  & - & - & -\\
          CVaR-VaR     & 5e-4 & 5e-4 & - & $\times$ & 10  \\
          RET-CAP     & 5e-4 & 5e-3 &  -& $\surd$ & -\\
          \bottomrule
        \end{tabular}
      \end{sc}
    \end{small}
  \end{center}
  \vskip -0.1in
\end{table}

\subsection{LunarLander}
We use the discrete action version of LunarLander. The state dimension is 8, the action space is 4. A detailed description is available at this webpage\footnote{https://gymnasium.farama.org/environments/box2d/lunar\_lander/}.

\textbf{Learning parameters}. Discount factor $\gamma=0.999$. Multi-step advantage estimation $\lambda=0.95$ (in PCVaR-PG, RET-CAP, and CVaR-VaR). Optimizer is Adam. Hidden size of neural network is 128.
 
Policy learning rate is selected from $\{$7e-4, 5e-4, 3e-4, 1e-4, 7e-5$\}$. Value learning rate is selected from $\{$1, 2, 5, 10$\}$ times the policy learning rate. The main learning parameters are summarized in Table~\ref{tab:ll-param}. 

 \begin{table}
  \caption{Learning parameters in LunarLander.}
  \label{tab:ll-param}
  \begin{center}
    \begin{small}
      \begin{sc}
        \begin{tabular}{lcccccc}
          \toprule
          Method  & $\pi$ lr         & $V$ lr      & $f(s,k)$ lr & Normalize Adv? & n\_quantile \\
          \midrule
          REINFORCE    & 7e-4 & 7e-3 & - & $\times$ & -  \\
          CVaR-PG      & 5e-4 &   -  & - & - & -\\
          CVaR-VaR     & 5e-4 & 5e-4 & - & $\surd$ & 10  \\
          PCVaR-PG   & 5e-4 & 5e-4& 5e-4& $\surd$  & -        \\
          RET-CAP     & 5e-4 & 1e-3 &  -& $\surd$ & -\\
          MIX & 3e-4 & - & - & - & -\\
          \bottomrule
        \end{tabular}
      \end{sc}
    \end{small}
  \end{center}
  \vskip -0.1in
\end{table}

 \begin{table}
  \caption{Learning parameters in Inverted Pendulum.}
  \label{tab:ivp-param}
  \begin{center}
    \begin{small}
      \begin{sc}
        \begin{tabular}{lcccccc}
          \toprule
          Method  & $\pi$ lr         & $V$ lr      & $f(s,k)$ lr & Normalize Adv? & n\_quantile \\
          \midrule
          REINFORCE    & 3e-4 & 3e-3 & - & $\times$ & -  \\
          CVaR-PG      & 3e-4 &   -  & - & - & -\\
          CVaR-VaR     & 3e-4 & 3e-4 & - & $\surd$ & 10  \\
          PCVaR-PG   & 3e-4 & 3e-4& 3e-4& $\surd$  & -        \\
          RET-CAP     & 3e-4 & 3e-4 &  -& $\surd$ & -\\
          MIX & 3e-4 & - & - & - & -\\
          \bottomrule
        \end{tabular}
      \end{sc}
    \end{small}
  \end{center}
  \vskip -0.1in
\end{table}

For RET-CAP~\cite{mead2025return}, $\gamma=1$. $q^*_\alpha=270\in[230, 250, 270, 290]$. The value of $q^*=270$ is also suggested by the authors. For CVaR-VaR, $\omega=0.5$ for the first $40\%$ percent of iterations and linearly decay to $0$ afterwards. For MIX~\cite{luo2024simple}, the training parameters for its risk-neutral component IQL~\cite{kostrikov2022offline} follow the original paper. 

\subsection{Inverted Pendulum}
The state dimension is 4. The one dimension action is continuous in range $[-3,3]$. A detailed description is available at this webpage\footnote{https://gymnasium.farama.org/environments/mujoco/inverted\_pendulum/}.

\textbf{Learning parameters}. Discount factor $\gamma=0.999$. Multi-step advantage estimation $\lambda=0.95$ (in PCVaR-PG, RET-CAP, and CVaR-VaR). Optimizer is Adam. Hidden size of neural network is 128.
 
Policy learning rate is selected from $\{$7e-4, 5e-4, 3e-4, 1e-4, 7e-5$\}$. Value learning rate is selected from $\{$1, 2, 5, 10$\}$ times the policy learning rate. The main learning parameters are summarized in Table~\ref{tab:ivp-param}.

For RET-CAP~\cite{mead2025return}, $\gamma=1$. $q^*_\alpha=230\in[230, 250, 270, 290]$. For PCVaR-PG~\cite{kim2024risk}, $q^*_\alpha=270\in[230, 250, 270, 290]$. For CVaR-VaR, $\omega=0.5$ for the first $25\%$ percent of iterations and is $0$ afterwards. For MIX~\cite{luo2024simple}, the training parameters for its risk-neutral component IQL~\cite{kostrikov2022offline} follow the original paper. 




\end{document}